\documentclass[twoside,11pt]{article}

\usepackage{blindtext}

\usepackage[T1]{fontenc}
\usepackage[utf8]{inputenc}

\usepackage{amsmath,amssymb,amsthm,amstext,mathtools,bm,bbm}

\usepackage{graphicx}
\usepackage{xcolor}
\usepackage{enumitem}
\usepackage{algorithm}
\usepackage{algorithmic}
\usepackage{subcaption}
\usepackage{tikz}
\usetikzlibrary{arrows.meta,shadows,positioning,tikzmark}

\usepackage{changes}

\newcommand{\X}{\mathcal{X}}
\newcommand{\A}{\mathcal{A}}
\newcommand{\E}{\mathbb{E}}
\newcommand{\1}{\mathbbm{1}}
\newcommand{\R}{\mathbb{R}}
\newcommand{\cov}{\mathrm{Cov}}
\newcommand{\law}{\mathrm{Law}}
\newcommand{\sZ}{\mathsf{Z}}
\newcommand{\J}{\mathcal{J}}

\DeclarePairedDelimiter\abs{\lvert}{\rvert}

\newcommand{\limT}{\lim_{T\rightarrow\infty}\frac{1}{T}\sum_{t=0}^{T-1}}

\newcommand{\opt}{\bm{(\mathrm{OPT})}}
\newcommand{\optH}{\bm{(\mathrm{OPT})}_{\mathcal H}}

\newcommand{\opttarc}[1]{%
  \tikzmarknode{a}{\textbf{\textup{#1}}}%
  \begin{tikzpicture}[overlay,remember picture]
    \draw ([yshift=1pt]a.north west) to[bend left=20] ([yshift=1pt]a.north east);
  \end{tikzpicture}%
}

\newcommand{\optt}{%
  \ensuremath{\boldsymbol{(}}\ensuremath{\opttarc{OPT}}\ensuremath{\boldsymbol{)}}%
}

\newcommand{\opttH}{\bm{(\mathrm{OPT})}_{\mathcal H}
}

\usepackage[preprint]{jmlr2e}
\newtheorem{assumption}[theorem]{Assumption}

\newenvironment{qedblock}
{\par\pushQED{\qed}\normalfont}
{\popQED\par}

\usepackage{lastpage}
\ShortHeadings{{Maximum Entropy IRL for MFGs with Average Reward}}{Alkır, Saldı, Anahtarcı, Karıksız}
\begin{document}
\title{Maximum Entropy Inverse Reinforcement Learning for Mean-Field Games with Average Reward}

\author{\name \c{S}evket Kaan Alkır \email kaan.alkir@bilkent.edu.tr \\
\addr Department of Mathematics \\
Bilkent University\\
06800 Ankara, Turkey
\AND
\name Naci Saldı \email naci.saldi@bilkent.edu.tr \\
\addr Department of Mathematics \\
Bilkent University\\
06800 Ankara, Turkey
\AND
\name Berkay Anahtarcı \email berkay.anahtarci@ozyegin.edu.tr \\
\addr Department of Mathematical Engineering \\
Özyeğin University\\
34794 Istanbul, Turkey
\AND
\name Can Deha Karıksız \email deha.kariksiz@ozyegin.edu.tr \\
\addr Department of Mathematical Engineering \\
Özyeğin University\\
34794 Istanbul, Turkey
}

\editor{TBD}

\maketitle

\begin{abstract}
We study inverse reinforcement learning for discrete-time, infinite-horizon mean-field games (MFGs) under an average-reward criterion. Expert demonstrations are assumed to arise from a stationary mean-field equilibrium under an unknown reward, and the goal is to recover a policy explaining the observed behaviour via the maximum causal entropy principle. We formulate the inverse problem by enforcing consistency with the expert mean-field term and long-run feature expectations, treating two reward classes within a unified occupation-measure framework. For finite-dimensional linear rewards, we give a convex dual reformulation with an explicit log-partition objective, and prove smoothness and curvature properties justifying constant-step-size gradient descent. For infinite-dimensional RKHS rewards, we develop a Lagrangian relaxation whose inner-maximising policy is characterised by a soft Bellman equation. The main obstacle is the absence of a discount-factor contraction. We resolve this by introducing a minorisation-based sub-stochastic kernel yielding a strict contraction for the soft Bellman operator. We establish Fr\'echet differentiability and Lipschitz smoothness of the log-likelihood score, leading to a gradient ascent algorithm with convergence guarantees. Two numerical examples, a malware-spread MFG and an RKHS-based consumer-choice model, show that the recovered policies closely match expert behaviour.
\end{abstract}

%\begin{abstract}
%We study inverse reinforcement learning for discrete-time, infinite-horizon mean-field games with an average-reward criterion. Expert demonstrations are assumed to arise from a stationary mean-field equilibrium under an unknown reward, and the goal is to recover a policy that explains the observed behaviour via the maximum causal entropy principle. We formulate the problem by enforcing consistency with the expert mean-field term and long-run feature expectations. In the finite-dimensional linear reward setting, we show that the problem admits an equivalent convex reformulation over occupation measures, derive an explicit dual log-partition objective, and establish regularity properties that justify first-order solution methods. We then extend the framework to richer reward classes by modelling rewards in a reproducing kernel Hilbert space. For this kernel-based formulation, we develop a dual relaxation linked to an entropy-regularised average-reward control problem. To overcome the lack of standard contraction arguments in the average-reward setting, we introduce an equivalent Bellman representation based on a sub-stochastic kernel, which yields a strict contraction for the soft Bellman operator. We further establish Fréchet differentiability and smoothness properties of the associated objective, providing a basis for gradient-based algorithms. A numerical example illustrates that the proposed approach recovers a policy consistent with the expert behaviour.
%\end{abstract}
 \begin{keywords}
  inverse reinforcement learning, mean-field games, maximum causal entropy, average-reward criterion, reproducing kernel Hilbert space
\end{keywords} 
  
\newpage

\section{Introduction}
Mean-field games (MFGs) provide a framework for modelling strategic interactions in large populations of agents, where each individual responds to the aggregate behaviour of the population rather than to pairwise interactions \citep{HuMaCa06,LaLi07}. In stationary settings, the population behaviour is summarised by a time-invariant state distribution \citep{WeBeRo05}, reducing the problem to a Markov decision process (MDP) constrained by a consistency condition on this distribution. The corresponding equilibrium notion, the stationary mean-field equilibrium, consists of a policy--distribution pair satisfying the Nash certainty equivalence principle \citep{HuMaCa06}; under standard assumptions, existence follows from Kakutani's fixed-point theorem, and the resulting equilibrium approximates a Nash equilibrium in large but finite populations \citep{AdJoWe15}. 
For stationary and undiscounted variants of large-population dynamic games and mean-field models, see also \citet{WiAl05} and \citet{Wie19}.

%Mean-field games (MFGs) provide a framework for modelling strategic interactions in large populations of agents, where each individual responds to the aggregate behaviour of the population rather than to pairwise interactions \citep{HuMaCa06,LaLi07}.  In stationary settings, the population behaviour is summarised by a time-invariant state distribution \citep{WeBeRo05}, reducing the problem to a Markov decision process (MDP) constrained by a consistency condition on this distribution. The corresponding equilibrium notion, known as the stationary mean-field equilibrium (MFE), consists of a policy--distribution pair satisfying the {\color{red} Nash certainty equivalence principle} \citep{HuMaCa06}. Under standard assumptions, existence follows from Kakutani's fixed-point theorem, and the resulting equilibrium approximates a Nash equilibrium in large but finite populations \citep{AdJoWe15}. For stationary and undiscounted variants of large-population dynamic games and mean-field models, see also \citet{WiAl05} and \citet{Wie19}.

When the reward function is known, equilibria can be computed using classical MFG methods or learned through reinforcement learning techniques \citep{LaPePeGiMuElGePi24}. In many applications, however, the reward function is unknown and must be inferred from observed behaviour. Inverse reinforcement learning (IRL) addresses this problem by recovering a reward function that explains expert demonstrations, thereby improving interpretability and enabling generalisation to new environments \citep{AdCoBe22}. Classical single-agent IRL includes the maximum-margin apprenticeship method of \citet{abbeel2004apprenticeship} and the maximum-entropy framework of \citet{Ziebart2008}, and has since been enriched by Gaussian-process \citep{levine2011nonlinear} and adversarial \citep{ho2016generative,fu2018airl} formulations. The infinite-horizon variant is most cleanly handled by the maximum causal entropy principle of \citet{ZiBaDe10,ZiBaDe13}, which restores well-posedness by replacing trajectory-distribution maximisation with a causality-constrained problem reformulated over state--action occupation measures \citep{ZhBlBa18}; see \citet{GlTo22} for a unified exposition. In the single-agent average-reward setting, \citet{WuKeWu23} develop a maximum-entropy IRL framework with linear reward parametrisations and provide convergence guarantees, but their analysis is confined to ordinary MDPs and does not address the population-consistency constraint that distinguishes mean-field equilibria from MDP optima.

%When the reward function is known, equilibria can be computed using classical MFG methods or learned through reinforcement learning techniques \citep{LaPePeGiMuElGePi24}. In many applications, however, the reward function is unknown and must be inferred from observed behaviour. {\color{red} Inverse reinforcement learning (IRL) addresses this problem by recovering a reward function that explains expert demonstrations, thereby improving interpretability and enabling generalisation to new environments \citep{AdCoBe22}. Classical single-agent formulations include maximum-margin apprenticeship learning \citep{abbeel2004apprenticeship} and the maximum-entropy framework of \citet{Ziebart2008}. Beyond these early approaches, richer non-linear and non-parametric reward representations have been explored through Gaussian-process IRL \citep{levine2011nonlinear} as well as adversarial formulations such as GAIL \citep{ho2016generative} and AIRL \citep{fu2018airl}.}

For mean-field games (MFGs), several recent works have extended inverse reinforcement learning beyond the classical single-agent setting. \cite{YaYeTrXuZh18} reduce a cooperative MFG to an equivalent single-agent MDP and solve the resulting problem using maximum-entropy IRL. However, this reduction relies on the assumption that all agents share a common reward structure and therefore does not apply to genuinely decentralised competitive interactions. In contrast, \cite{YaLiLiHu22} study non-cooperative decentralised MFGs through a maximum-margin formulation, while \cite{ChZhLiWi23} combine entropy regularisation with the adversarial imitation-learning framework introduced in \cite{ho2016generative,fu2018airl}.

An alternative line of work focuses on imitation learning, where the objective is to directly recover policies from demonstrations rather than infer an underlying reward function implicitly or explicitly. Such approaches have also been investigated in the MFG setting; see \cite{GiPaOlNiMaMa23}. Although imitation learning often leads to computationally simpler formulations, IRL provides a more informative description of the underlying strategic behaviour by recovering the latent reward structure that explains the observed equilibria.

Closer in spirit to the present work, in our earlier paper \cite{AnKaSa24} we studied the IRL problem for discounted mean-field games under a finite-dimensional linear reward parametrisation. In that work, we developed an occupation-measure formulation of maximum causal entropy IRL and derived a convex dual problem involving a log-partition objective. We further established smoothness and strong convexity properties of the dual objective, which in turn yielded the consistency and convergence guarantees of the associated gradient-based optimisation algorithm. More recently, in \cite{AnKaKaSa25Kernel}, we extended this framework to RKHS reward models in the discounted setting, thereby allowing the reward function to belong to an infinite-dimensional non-parametric function class. In contrast to the finite-dimensional linear parametrisation, the RKHS formulation requires a different variational treatment. In particular, we introduced a Lagrangian relaxation whose dual characterisation is equivalent to the existence of a stationary point of a log-likelihood objective. To compute this stationary solution, we again developed a gradient-based optimisation scheme and established its consistency through the smoothness properties of the log-likelihood functional. This RKHS-based formulation significantly increases modelling flexibility. A preliminary version of the finite-dimensional average-reward formulation appeared in \citet{AlkirSaldiCDC25}, where only the linear reward model was considered.

The present paper extends both of the above approaches to the average-reward setting. Although the overall methodological framework remains conceptually similar to the discounted case, the average-reward regime introduces several substantial technical difficulties that require a fundamentally different analysis. In particular, the Bellman and soft Bellman optimality operators are no longer strict contractions in their standard form. To recover contraction properties, one must either work in the span semi-norm under suitable minorisation conditions or modify the transition kernel using the minorising measure to obtain a sup-norm contraction. Moreover, value functions and Q-functions are only characterised up to additive constants, which introduces additional invariances into both the primal and dual formulations. Another major difference is that, unlike the discounted setting where stability and occupation-measure representations are automatically induced by discounting, the average-reward framework requires additional ergodicity-type assumptions on the controlled dynamics to ensure the existence of invariant occupation measures. Consequently, several analytical properties that arise naturally in the discounted setting require a separate and more delicate treatment in the average-reward regime. Overall, the main goal of the present work is to develop a rigorous maximum causal entropy IRL framework for stationary mean-field games under the average-reward criterion, together with contraction-based methods for analysing the associated soft Bellman systems and establishing the consistency of the resulting gradient-based optimisation procedures. The following section provides a detailed overview of our main contributions.

\subsection{Contributions}\label{sec:contributions}
 
We develop a unified framework for IRL in discrete-time stationary MFGs under the long-run average-reward criterion. Starting from the maximum causal entropy formulation, we derive equivalent optimisation problems for two reward classes, finite-dimensional linear features and infinite-dimensional RKHS rewards, and analyse them within a single occupation-measure perspective. The specific contributions are as follows.
 
\begin{enumerate}[label=(\roman*)]
    \item \textbf{Linear rewards.} We give a convex dual reformulation of the maximum causal entropy problem over long-run occupation measures, with an explicit log-partition objective, and establish smoothness and strong-convexity properties of the dual that justify gradient descent with a constant step size. This extends \citet{WuKeWu23} from MDPs to MFGs.
    \item \textbf{RKHS rewards.} A Lagrangian relaxation leads to an unconstrained maximum log-likelihood formulation, where the induced policies satisfy a soft Bellman equation corresponding to reward functions parametrised by the Lagrange multipliers. We prove Fr\'{e}chet differentiability of the relevant operators with respect to the Lagrange multipliers and establish Lipschitz smoothness of the log-likelihood score, enabling first-order optimisation in function space. This is the average-reward counterpart of \citet{AnKaKaSa25Kernel}; the standard discounted contraction is unavailable here and is replaced by the construction in (iii).
    \item \textbf{Sub-stochastic contraction.} We construct a minorisation-based sub-stochastic transition kernel and prove that the associated soft Bellman operator is a strict contraction. This circumvents the additive-constant ambiguity of average-reward value functions and provides the analytical backbone for (ii).
    \item \textbf{Gradient-based algorithms.} Combining (i)--(iii), we obtain gradient methods with explicit constant step sizes for both reward classes, together with convergence guarantees. We validate the framework numerically on a malware-spread MFG and on an RKHS-based consumer-choice model, recovering policies that closely match the expert equilibrium in both cases.
\end{enumerate}

\subsection{Organisation}\label{sec:organisation}
 
Section~\ref{sec:preliminaries} fixes notation and recalls preliminaries on stationary MFGs, occupation measures, and the maximum causal entropy formulation. Section~\ref{sect:linear_reward} develops the finite-dimensional linear reward model: a convex occupation-measure reformulation, the explicit log-partition dual, and the smoothness/strong-convexity analysis underpinning constant-step-size gradient descent. Section~\ref{sec:rkhs} develops the RKHS reward model: the Lagrangian relaxation, the sub-stochastic-kernel construction that restores soft Bellman contraction, Fr\'echet differentiability of the Bellman fixed point, and the smoothness of the log-likelihood score. Section~\ref{sec:numerics} presents two numerical illustrations: a malware-spread MFG and an RKHS-based consumer-choice model. Section~\ref{sec:conclusion} concludes.

\section{Preliminaries}
\subsection{Mean-Field Games}\label{sec:preliminaries}

Mean-field games (MFGs) model decision-making in large populations where individual behaviour depends on the collective state. Throughout, we consider discrete-time stationary MFGs under the long-run average-reward criterion. Such a game is a tuple
$(\X, \A, p, r),$
where $\X$ and $\A$ are the finite \emph{state} and \emph{action} spaces; the transition kernel $p : \X \times \A \times \mathcal{P}(\X) \to \mathcal{P}(\X)$ governs the evolution of the next state; and the one-stage reward $r : \X \times \A \times \mathcal{P}(\X) \to \R$ specifies the immediate reward at each stage. Since $\X \times \A$ is finite and $r$ is continuous in $\mu$ on the compact simplex $\mathcal{P}(\X)$, the reward is uniformly bounded:
\[
\|r\|_\infty \coloneqq \sup_{(x,a,\mu)} |r(x, a, \mu)| < \infty.
\]

%The one-step transition probability \( p: \mathcal{X} \times \mathcal{A} \times \mathcal{P}(\mathcal{X}) \to \mathcal{P}(\mathcal{X}) \) determines how the next state evolves, while the one-stage reward function {\color{red} \( r: \mathcal{X} \times \mathcal{A} \times \mathcal{P}(\mathcal{X}) \to \mathbb{R} \)} specifies the immediate reward received. {\color{red} The reward function is uniformly bounded, i.e.\ $\|r\|_\infty := \sup_{(x,a,\mu)}|r(x,a,\mu)| < \infty$ due to finiteness of $\X\times\A$ and continuity of $r$ in $\mu$ on the compact simplex $\mathcal{P}(\X)$. } 

At each time $t$, the agent in state $x_t$ samples an action $a_t \sim \pi(\cdot \mid x_t)$, receives reward $r(x_t, a_t, \mu)$, and transitions according to $x_{t+1} \sim p(\cdot \mid x_t, a_t, \mu)$. A (stationary) \emph{policy} is a kernel $\pi : \X \to \mathcal{P}(\A)$; we denote the set of all such policies by $\Pi$. The \emph{mean-field term} $\mu \in \mathcal{P}(\X)$ represents the limiting empirical state distribution of the population as the number of agents tends to infinity \citep{CaDe18,AdJoWe15}.

In a \emph{time-dependent} MFG, the population distribution evolves as a sequence $\{\mu_t\}_{t \geq 0}$ in response to the agents' collective behaviour, leading to a coupled forward--backward system
\citep[][]{CaDe18,CaDe19}. In a \emph{stationary} MFG, the distribution is time-invariant, $\mu_t = \mu$ for all $t$, and the equilibrium is a joint fixed point of an optimality condition and an invariance condition. The invariance condition reads

\begin{equation}  
\mu(z) = \sum_{x \in \mathcal{X}} \sum_{a \in \mathcal{A}} \mu(x)\, \pi(a \mid x)\, p(z \mid x,a,\mu), \qquad z \in \mathcal{X}.
\label{stationarity}
\end{equation}
%The stationary framework provides several advantages: it simplifies analysis by reducing the problem to a single fixed-point equation; it aligns naturally with long-run performance criteria such as the \emph{average reward}.

%For a fixed population measure \( \mu \), a policy \( \pi \), and an {\color{red}initial state distribution} \( x_0 \sim \mu_0 \), we define the long-run \textit{average reward} as
For a fixed mean-field term $\mu$, a policy $\pi$, and initial distribution $x_0 \sim \mu_0$, the long-run \emph{average reward} is

\begin{equation}
\label{eq:cum-reward}
\mathcal{J}_{\mu}(\pi,\mu_0)=\liminf_{T\to\infty}\frac{1}{T}\sum_{t=0}^{T-1}\mathbb{E}^{\pi}\bigl[r(x_t,a_t,\mu)\bigr].
\end{equation}

This criterion will serve throughout as our reference for optimality.

%This criterion will serve as our reference for optimality.  

To formalise the equilibrium, we introduce two set-valued maps. The \emph{optimality map} $\Phi : \mathcal{P}(\X) \to 2^\Pi$ assigns to each population state the set of policies that maximise the long-run average reward against it,
\[
\Phi(\mu) \coloneqq \left\{ \pi \in \Pi : \J_\mu(\pi, \mu) = \sup_{\tilde\pi \in \Pi} \J_\mu(\tilde\pi, \mu) \right\}.
\]
The \emph{invariance map} $\Lambda : \Pi \to 2^{\mathcal{P}(\X)}$ assigns to each policy the set of population states that are stationary under~\eqref{stationarity},
\[
\Lambda(\pi) \coloneqq \left\{ \mu \in \mathcal{P}(\X) : \mu(x) = \sum_{(y, a) \in \X \times \A} p(x \mid y, a, \mu)\, \pi(a \mid y)\, \mu(y),\ \forall x \in \X \right\}.
\]

%To define the stationary mean-field equilibrium, we first introduce two set-valued mappings. The first one is $\Phi: \mathcal{P(\X)} \rightarrow 2^\Pi$, where $2^\Pi$ is the collection of all subsets of $\Pi$,  and is defined as
%\[\Phi(\mu) = \left\{ \pi \in \Pi : \mathcal{J}_{\mu}(\pi,\mu) = \sup_{\tilde{\pi} \in \Pi} \mathcal{J}_{\mu}(\tilde{\pi}, \mu) \right\}\]
%namely, the optimal policies for a fixed $\mu$. We also define the mapping $\Lambda : \Pi \to 2^{\mathcal{P}(\X)}$ by
%\[\Lambda(\pi)\coloneqq\left\{\mu \in \mathcal{P}(\X) :\mu(x)=\sum_{(y,a)\in \X\times \A} p(x\mid y,a,\mu)\pi(a\mid y)\mu(y),\ \forall x\in \X\right\}.\]

The equilibrium notion in a stationary MFG is obtained by combining
the optimality condition encoded by $\Phi$ with the invariance condition encoded
by $\Lambda$. 

\begin{definition}\label{def:MFE}
A pair $(\pi^\star, \mu^\star) \in \Pi \times \mathcal{P}(\X)$ is a \emph{mean-field equilibrium} (MFE) if $\pi^\star \in \Phi(\mu^\star)$ and $\mu^\star \in \Lambda(\pi^\star)$.
\end{definition}

 Under standard assumptions, existence of an MFE under the average-reward criterion follows from Kakutani's fixed-point theorem.

%The existence of a mean-field equilibrium can be established under standard assumptions for the average-reward criterion by applying {\color{red}Kakutani's fixed-point theorem}.}

\subsection{Inverse Reinforcement Learning}\label{sect:IRL}

The stationary MFG framework is well-suited to inverse reinforcement learning (IRL), because long-run state statistics under the expert equilibrium are governed by the invariant distribution $\mu_E$ and can be expressed in static form under a stationary policy. In \emph{forward} reinforcement learning, the reward $r$ is known and one seeks a policy maximising~\eqref{eq:cum-reward} for a given $\mu$. In \emph{inverse} reinforcement learning, the reward is unknown: the learner observes expert trajectories generated under some equilibrium $(\pi_E, \mu_E)$ and seeks a reward under which the expert's behaviour itself constitutes a MFE, equivalently a policy that reproduces the expert's long-run statistics under the inferred reward.

%The stationary mean-field game framework is particularly well-suited for inverse reinforcement learning (IRL), where observed trajectories often reflect equilibrium behaviour. In such cases, long-run state statistics are governed by the invariant distribution $\mu$, and the corresponding state–action behaviour can be represented in a static form under a stationary policy. This property allows for well-posed and tractable formulations of IRL objectives. While time-dependent formulations remain relevant for finite-horizon settings, this work focuses on the stationary case. In \emph{forward reinforcement learning}, the reward function $r$ is known and the objective is to learn a policy that maximises~\eqref{eq:cum-reward} given $\mu$. In contrast, \emph{inverse reinforcement learning} assumes that the reward is unknown. Instead, the agent observes the expert's trajectories under some mean-field equilibrium and seeks a reward under which the expert’s behaviour {\color{red}constitute a mean-field equilibrium}. In other words, the expert’s actions provide information about the reward function, helping the IRL {\color{red}agent understand} the main goal and replicate the expert behaviour under the inferred reward model.

To ensure that the long-run quantities appearing in the IRL formulation are well defined, we impose the following standard ergodicity assumption.

\begin{assumption}[Ergodicity]\label{ass:ergodicity}
For every policy $\pi \in \Pi$ and every mean-field term $\mu \in \mathcal{P}(\X)$, the induced Markov chain on $\X$ is positive Harris recurrent and aperiodic.
\end{assumption}
 
 Under Assumption~\ref{ass:ergodicity}, time-averaged quantities converge to expectations under the unique invariant distribution \citep[Theorem~13.3.3]{MeynTweedie}. Consequently, in what follows we write ``$\lim$'' rather than ``$\liminf$'' whenever long-run averages are taken under the fixed expert mean-field term $\mu_E$.

%\begin{assumption}
%The Markov process induced by any policy \( \pi \in \Pi \) and mean-field term \( \mu\) is assumed to be positive Harris recurrent and aperiodic. This ensures that time-averaged quantities converge to expectations under the invariant distribution \citep[Theorem~13.3.3]{MeynTweedie}.\label{ergodicity}
%\end{assumption}

%{\color{red} \begin{assumption}[Existence of Ces\`aro limits] For every stationary policy $\pi\in\Pi$, the chain $P_{\pi,\mu_E}(x'\mid x)\coloneqq\sum_{a\in\A}\pi(a\mid x)\,p(x'\mid x,a,\mu_E)$ admits a unique Ces\`aro-invariant probability measure $\eta_\pi$: $$\eta_\pi(x)\;=\;\lim_{T\to\infty}\frac{1}{T}\sum_{t=0}^{T-1}\mathbb{P}^{\pi,\mu_E}_{\rho_0}(x_t=x),$$ which exists and is independent of the initial distribution $\rho_0$. \label{ergodicity} \end{assumption} }

%\begin{remark} Under Assumption~\ref{ergodicity}, the time-average limits appearing below are well-defined for every stationary policy $\pi \in \Pi$ under the expert mean-field term $\mu_E$. Hence, in what follows, we write ``$\lim$'' instead of ``$\liminf$'' whenever we consider long-run averages with fixed mean-field term $\mu_E$. \end{remark}}

The learner observes a collection of expert-generated trajectories $\{(x_t^{(i)}, a_t^{(i)})_{t \geq 0}\}_{i=1}^d$ drawn from the expert MFE $(\pi_E, \mu_E)$. To formulate the inverse problem, we fix a feature map
\[
\varphi : \X \times \A \times \mathcal{P}(\X) \to \mathcal{E},
\]
where $\mathcal{E}$ is a real Hilbert space, instantiated as $\R^k$ in the linear case (Section~\ref{sect:linear_reward}) and as an RKHS $\mathcal{H}_k$ in the kernel case (Section~\ref{sec:rkhs}). The feature map summarises the reward-relevant content of behaviour; matching its long-run statistics will be the central constraint of the IRL formulation.

\begin{definition}\label{def:feature_exp_vec}
For a stationary policy $\pi$, the \emph{feature expectation vector} under the expert mean-field term $\mu_E$ is
\[
\langle \varphi \rangle_{\pi, \mu_E} \coloneqq \lim_{T \to \infty} \frac{1}{T} \sum_{t=0}^{T-1} \E^{\pi, \mu_E} \big[ \varphi(x_t, a_t, \mu_E) \big], \qquad x_0 \sim \mu_E.
\]
\end{definition}
 
\noindent Under Assumption~\ref{ass:ergodicity}, both $\mu_E$ and $\langle \varphi \rangle_{\pi_E, \mu_E}$ admit consistent empirical estimators from expert trajectories: the ergodic theorem \citep[Theorem~13.3.3]{MeynTweedie} guarantees that the empirical state distribution and feature average computed over a single long trajectory converge almost surely to their stationary counterparts as $T \to \infty$, and using multiple independent trajectories reduces finite-sample variance. As is standard in the MFG-IRL literature \citep{YaLiLiHu22, ChZhLiWi23}, we treat these long-run statistics as given in the optimisation problem.
 
\begin{assumption}[Known expert statistics]\label{ass:known_stats}
The expert mean-field term $\mu_E$ and the corresponding feature expectation vector $\langle \varphi \rangle_{\pi_E, \mu_E}$ are known.
\end{assumption}

\subsection{Occupation Measures}\label{sect:occupation_measures}

To express the long-run constraints of the IRL problem in static form, we work with state--action occupation measures. These objects encode the asymptotic visitation frequencies induced by a stationary policy under the expert mean-field term, and they are used throughout Sections~\ref{sect:linear_reward} and~\ref{sec:rkhs}: in the linear model to reformulate the IRL problem as an optimisation over measures, and in the kernel model to compactly express expert statistics and gradient identities.

%To express the long-run constraints of the IRL problem in a static form, it is convenient to work with \emph{occupation measures}. These objects encode the asymptotic state-action frequencies induced by a stationary policy under the expert mean-field term $\mu_E$. They will be used in both parts of the paper. In the linear model, they allow us to reformulate the problem as an optimisation over measures. In the kernel-based model, they provide a compact way to represent expert statistics and will later simplify several gradient identities.

%We therefore begin by introducing the state--action occupation measure$\nu_\pi$ and its state marginal $\nu_\pi^{\mathcal X}$ for a stationary policy$\pi \in \Pi$. Roughly speaking, $\nu_\pi(x,a)$ measures the long-run frequency with which the pair $(x,a)$ is visited, while $\nu_\pi^{\mathcal X}(x)$ records the corresponding state frequency. Under Assumption~\ref{ass:ergodicity}, these limits are well defined.

\begin{definition}\label{def:occupation_measures}
For $\pi \in \Pi$, the \emph{state--action occupation measure} under $\mu_E$ is
\[
\nu_\pi(x, a) \coloneqq \lim_{T \to \infty} \frac{1}{T} \sum_{t=0}^{T-1} \E^{\pi, \mu_E} \big[ \1_{\{(x_t, a_t) = (x, a)\}} \big],
\]
and its state marginal is $\nu_\pi^\X(x) \coloneqq \sum_{a \in \A} \nu_\pi(x, a)$.
\end{definition}

Under Assumption~\ref{ass:ergodicity}, the limit exists and is non-negative for each $(x, a)$. Moreover, $\nu_\pi$ defines a probability measure on $\X \times \A$ (see Appendix~\ref{app:nu-prob}).

%\begin{definition}
 %   \label{def:occupation_measures}
  %  The state--action occupation measure $\nu_\pi$ for any policy $\pi \in \Pi$ is defined as
  %\begin{equation*} \nu_\pi(x,a) \coloneqq \limT \E^{\pi,\mu_E}\left[\1_{\left\{ (x_t,a_t) =(x,a) \right\}}  \right]. \end{equation*}
 %   The corresponding state occupation measure is defined by  \[  \nu_\pi^{\X}(x) \coloneqq \sum_{a\in \A} \nu_\pi(x,a). \]
%\end{definition}
%The next observation shows that \(\nu_\pi\) is a probability measure on \(\X\times\A\).  Indeed, Assumption~\ref{ass:ergodicity} ensures that, for each $(x,a)\in\mathcal{X}\times\mathcal{A}$, the limit
%\[ \nu_\pi(x,a) = \lim_{T\to\infty}\frac{1}{T}\sum_{t=0}^{T-1} \mathbb{P}^{\pi,\mu_E}\big((x_t,a_t)=(x,a)\big) \]
%exists and is non-negative. Summing over all $(x,a)$ gives
%\[ \sum_{x\in\mathcal{X}}\sum_{a\in\mathcal{A}}\nu_\pi(x,a) = \lim_{T\to\infty}\frac{1}{T}\sum_{t=0}^{T-1} \sum_{x\in\mathcal{X}}\sum_{a\in\mathcal{A}} \mathbb{P}^{\pi,\mu_E}\big((x_t,a_t)=(x,a)\big) =1, \]
%since for each $t$ the inner sum equals one. Therefore, $\nu_\pi$ is non-negative and has unit total mass.

The three identities below decompose $\nu_\pi$ into its policy and state-flow components, and express the feature expectation vector in static form. Each statement is a routine consequence of stationarity and the finiteness of $\X \times \A$. We record them here for later use in Sections~\ref{sect:linear_reward} and~\ref{sec:rkhs}, and defer the proofs to Appendix~\ref{app:occupation-proofs}.

The first identity states that the occupation measure factorises into a state marginal and a conditional action distribution in the natural way.

\begin{lemma}[Policy factorisation]\label{lem:occ-factorization}
For every $\pi \in \Pi$ and every $(x, a) \in \X \times \A$,
\[
\nu_\pi(x, a) = \pi(a \mid x)\, \nu_\pi^\X(x).
\]
\end{lemma}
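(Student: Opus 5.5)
The plan is to compute the time-$t$ probabilities exactly, factor out the policy, and then pass to the Ces\`aro limit using linearity. Fix $\pi \in \Pi$ and $(x,a) \in \X \times \A$. Under the stationary policy $\pi$ with the mean-field term frozen at $\mu_E$, the action at time $t$ is drawn from $\pi(\cdot \mid x_t)$ independently of the past, given $x_t$. Conditioning on $x_t$ therefore gives, for every $t \ge 0$,
\[
\E^{\pi,\mu_E}\big[\1_{\{(x_t,a_t)=(x,a)\}}\big]
= \mathbb{P}^{\pi,\mu_E}(x_t = x)\,\pi(a \mid x).
\]
This identity holds for each fixed $t$, with no asymptotics involved.

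Next, average over $t = 0,\dots,T-1$ and pull the constant $\pi(a\mid x)$ out of the sum:
\[
\frac{1}{T}\sum_{t=0}^{T-1}\E^{\pi,\mu_E}\big[\1_{\{(x_t,a_t)=(x,a)\}}\big]
= \pi(a\mid x)\,\frac{1}{T}\sum_{t=0}^{T-1}\mathbb{P}^{\pi,\mu_E}(x_t=x).
\]
As $T\to\infty$, the left-hand side converges to $\nu_\pi(x,a)$ by Definition~\ref{def:occupation_measures}, since Assumption~\ref{ass:ergodicity} guarantees that this limit exists. It remains to identify the limit of the averaged state probabilities on the right with $\nu_\pi^\X(x)$.

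For that identification, sum the time-$t$ identity over $a' \in \A$, using $\sum_{a'}\pi(a'\mid x)=1$. This gives
\[
\mathbb{P}^{\pi,\mu_E}(x_t=x)=\sum_{a'\in\A}\E^{\pi,\mu_E}\big[\1_{\{(x_t,a_t)=(x,a')\}}\big].
\]
Because $\A$ is finite, the Ces\`aro limit of this finite sum is the sum of the individual limits. Each of those limits exists and equals $\nu_\pi(x,a')$, so the averaged state probabilities converge to $\sum_{a'}\nu_\pi(x,a') = \nu_\pi^\X(x)$. Taking limits on both sides of the averaged identity then yields $\nu_\pi(x,a) = \pi(a\mid x)\,\nu_\pi^\X(x)$.

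The only step that needs any care is the existence of limits. I avoid arguing separately that the state-marginal limit exists: it follows from finite additivity together with the existence of the state--action limits. Those state--action limits are already asserted to exist under Assumption~\ref{ass:ergodicity}, immediately after Definition~\ref{def:occupation_measures}. Everything else is elementary conditioning on $x_t$ under a stationary Markov policy.
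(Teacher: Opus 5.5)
Your proposal is correct and follows essentially the same route as the paper: condition on $x_t$ to get the per-time factorisation, average over $t$, and pass to the Ces\`aro limit. The only difference is that you explicitly justify existence of the state-marginal limit by summing the per-time identity over actions, whereas the paper simply identifies that limit with $\nu_\pi^\X(x)$ by appeal to the definition.
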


The next lemma gives the corresponding balance equation for the state marginal, induced by the transition kernel.

\begin{lemma}[Flow balance]\label{lem:flow}
For every $\pi \in \Pi$ and every $x \in \X$,
\[
\nu_\pi^\X(x) = \sum_{(y, a) \in \X \times \A} p(x \mid y, a, \mu_E)\, \nu_\pi(y, a).
\]
\end{lemma}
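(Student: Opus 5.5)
The plan is to start from the time-$t$ marginals and pass to Ces\`aro limits, exactly as in the definition of $\nu_\pi$. Fix $\pi \in \Pi$ and the initial law $x_0 \sim \mu_E$. For each $t \geq 0$, let $\rho_t(x) \coloneqq \mathbb{P}^{\pi,\mu_E}(x_t = x)$ and $\rho_t(x,a) \coloneqq \mathbb{P}^{\pi,\mu_E}((x_t,a_t)=(x,a))$. By the Markov dynamics with the mean-field term frozen at $\mu_E$, we have the one-step identity
\[
\rho_{t+1}(x) = \sum_{(y,a)\in\X\times\A} p(x\mid y,a,\mu_E)\,\rho_t(y,a), \qquad x\in\X .
\]
This follows by conditioning on $(x_t,a_t)$ and using $x_{t+1}\sim p(\cdot\mid x_t,a_t,\mu_E)$.

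Next, average this identity over $t=0,\dots,T-1$ and divide by $T$. The left-hand side becomes $\frac1T\sum_{t=1}^{T}\rho_t(x)$. This differs from $\frac1T\sum_{t=0}^{T-1}\rho_t(x)$ by $\frac1T(\rho_T(x)-\rho_0(x))$, whose absolute value is at most $1/T$ and so tends to zero. Since $\sum_a \rho_t(x,a)=\rho_t(x)$, the Ces\`aro averages of $\rho_t(x)$ converge to $\nu_\pi^\X(x)$; this uses the existence of the limits defining $\nu_\pi$ under Assumption~\ref{ass:ergodicity}, summed over the finitely many actions. On the right-hand side, the sum over $(y,a)$ is finite and the coefficients $p(x\mid y,a,\mu_E)$ do not depend on $t$. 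Hence the limit can be exchanged with the sum, which gives $\sum_{(y,a)} p(x\mid y,a,\mu_E)\,\nu_\pi(y,a)$. Equating the two limits yields the claimed flow balance.

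The only delicate point is the exchange of limit and sum together with the existence of the limits. Both are harmless here: $\X\times\A$ is finite, and the limits $\nu_\pi(y,a)$ exist by Assumption~\ref{ass:ergodicity}, as noted after Definition~\ref{def:occupation_measures}. The boundary term is handled by the crude bound $|\rho_T(x)-\rho_0(x)|\le 1$.
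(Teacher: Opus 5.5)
Your proof is correct and follows essentially the same route as the paper. Both arguments use the one-step identity $\rho_{t+1}(x)=\sum_{(y,a)}p(x\mid y,a,\mu_E)\,\rho_t(y,a)$, which the paper obtains via the tower property applied to $\E^{\pi,\mu_E}[p(x\mid x_t,a_t,\mu_E)]$, then take Ces\`aro averages with the finite sum exchanged against the limit, and bound the telescoping boundary term by $1/T$; the only difference is that you phrase it through the marginals $\rho_t$ rather than through expectations of indicators.
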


Together, these identities show that $\nu_\pi$ encodes both the policy and the induced state flow. We now use the same representation to rewrite the long-run feature average in static form.

\begin{lemma}[Static feature representation]\label{lem:feature-exp}
For every $\pi \in \Pi$,
\[
\langle \varphi \rangle_{\pi, \mu_E} = \sum_{(x, a) \in \X \times \A} \varphi(x, a, \mu_E)\, \nu_\pi(x, a).
\]
\end{lemma}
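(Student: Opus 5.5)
The identity follows by pushing the finite sum inside the limit. Since $\X\times\A$ is finite, for every $t$ and every realisation we can write
\[
\varphi(x_t,a_t,\mu_E)=\sum_{(x,a)\in\X\times\A}\varphi(x,a,\mu_E)\,\1_{\{(x_t,a_t)=(x,a)\}},
\]
as an identity in $\mathcal{E}$. This is a finite sum of fixed vectors of $\mathcal{E}$ weighted by scalar indicators.

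The plan has three steps.

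\begin{enumerate}
\item Take expectations under $\E^{\pi,\mu_E}$ with $x_0\sim\mu_E$. Linearity of the (Bochner) expectation over a finite sum, together with the fact that each $\varphi(x,a,\mu_E)$ is deterministic, gives
\[
\E^{\pi,\mu_E}\big[\varphi(x_t,a_t,\mu_E)\big]=\sum_{(x,a)}\varphi(x,a,\mu_E)\,\mathbb{P}^{\pi,\mu_E}\big((x_t,a_t)=(x,a)\big).
\]
No integrability issue arises, because the random element takes only finitely many values.
\item Average over $t=0,\dots,T-1$. The finite sums over $t$ and over $(x,a)$ commute, so the Ces\`aro average of the feature equals $\sum_{(x,a)}\varphi(x,a,\mu_E)\,c_T(x,a)$. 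Here $c_T(x,a)=\frac1T\sum_{t<T}\mathbb{P}^{\pi,\mu_E}((x_t,a_t)=(x,a))$ is a scalar.
\item Let $T\to\infty$. Under Assumption~\ref{ass:ergodicity}, each $c_T(x,a)\to\nu_\pi(x,a)$ by Definition~\ref{def:occupation_measures}. This limit exists, as already noted after that definition.
\end{enumerate}

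The only point needing care is the passage to the limit in $\mathcal{E}$, which may be an infinite-dimensional RKHS. It is handled by the elementary bound
\[
\Big\|\sum_{(x,a)}\varphi(x,a,\mu_E)\big(c_T(x,a)-\nu_\pi(x,a)\big)\Big\|_{\mathcal{E}}\le \max_{(x,a)}\|\varphi(x,a,\mu_E)\|_{\mathcal{E}}\sum_{(x,a)}|c_T(x,a)-\nu_\pi(x,a)|.
\]
The right-hand side tends to zero because there are finitely many terms, each converging. This shows that the limit defining $\langle\varphi\rangle_{\pi,\mu_E}$ in Definition~\ref{def:feature_exp_vec} exists in norm and equals $\sum_{(x,a)}\varphi(x,a,\mu_E)\nu_\pi(x,a)$.

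The initial distribution $x_0\sim\mu_E$ in Definition~\ref{def:feature_exp_vec} must be the same one used to define $\nu_\pi$. We take both with respect to the same law. By ergodicity the occupation limits are in any case independent of the initial distribution, so nothing further is needed.

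I expect no genuine obstacle; the argument is linearity plus finiteness. The step to state carefully is the norm convergence in $\mathcal{E}$, which is why I make the bound above explicit rather than appealing only to coordinatewise convergence.
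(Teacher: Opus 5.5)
Your proposal is correct and follows the same route as the paper. You rewrite $\varphi(x_t,a_t,\mu_E)$ as a finite indicator-weighted sum, take expectations and Ces\`aro averages, and interchange the finite sum over $\X\times\A$ with the limit. Your explicit norm bound in $\mathcal{E}$ simply spells out the interchange step that the paper justifies in one line by the finiteness of $\X\times\A$.
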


\subsection{Maximum Causal Entropy Principle}\label{sect:MaxEnt}

In IRL, the expert's trajectories generally do not uniquely determine a reward: many reward functions can explain the same observed behaviour. Rather than estimating the reward directly, we follow the maximum causal entropy principle of \citet{Ziebart2008,ZiBaDe10,ZiBaDe13}. Among all policies whose long-run feature statistics match the expert's, we select the one of maximum causal entropy, thereby choosing the least biased explanation of the data. The reward is then recovered implicitly through the feature-matching constraint, and the resulting policy provides a consistent and tractable representation of the expert's behaviour. The qualifier \emph{causal} indicates that each action $a_t$ is conditioned on the past states $x_{0:t}$ only, ruling out the use of future trajectory information.

%{\color{blue}We formulate the inverse problem as a constrained entropy maximization problem. In IRL, the reward function is unknown, and the observed expert behavior typically does not uniquely determine it; that is, many reward functions can explain the same trajectories. Rather than estimating the reward directly, we proceed indirectly through the maximum causal entropy principle of \citet{Ziebart2008}. Specifically, we match the expected feature statistics of the model to those observed from the expert, which implicitly restricts the set of admissible reward functions under a given feature map. Among all policies that satisfy these moment-matching constraints, we select the one that maximizes causal entropy, thereby choosing the least biased explanation of the data. In this way, the reward is inferred implicitly through the feature expectations, while the resulting policy provides a consistent and tractable representation of the expert’s behavior.}

\begin{definition}\label{def:entropy}
The \emph{average-reward causal entropy} of $\pi \in \Pi$ is
\[
H(\pi) \coloneqq \lim_{T \to \infty} \frac{1}{T} \sum_{t=0}^{T-1} \E^{\pi, \mu_E} \big[ -\log \pi(a_t \mid x_t) \big].
\]
\end{definition}
 We adopt the conventions $0 \log 0 \coloneqq 0$ and $0 \log(0 / 0) \coloneqq 0$ throughout. Under Assumption~\ref{ass:ergodicity}, the limit exists and is finite.

We define the \emph{average-reward maximum causal entropy IRL problem} $\opt$ as:

\[
\begin{array}{ll}
\opt \quad \underset{\pi \in \Pi}{\text{maximise}} & H(\pi) \\[0.4em]
\phantom{\opt \quad} \text{subject to} & \displaystyle \mu_E(x) = \sum_{(y, a) \in \X \times \A} p(x \mid y, a, \mu_E)\, \pi(a \mid y)\, \mu_E(y), \quad \forall x \in \X, \\[0.9em]
& \displaystyle \lim_{T \to \infty} \frac{1}{T} \sum_{t=0}^{T-1} \E^{\pi, \mu_E} \big[\varphi(x_t, a_t, \mu_E) \big] = \langle \varphi \rangle_{\pi_E, \mu_E}.
\end{array}
\]
The first constraint enforces the invariance of the expert mean-field term $\mu_E$ under the policy $\pi$; the second matches the long-run feature expectation induced by $(\pi, \mu_E)$ to that of the expert pair $(\pi_E, \mu_E)$.

Two consequences of feasibility will be used repeatedly in the sequel. The first identifies the state marginal of $\nu_\pi$ with $\mu_E$; the second rewrites the causal entropy in static occupation-measure form. Proofs are deferred to Appendix~\ref{app:opt-proofs}.
\begin{lemma}\label{lem:muE_nu_marginal}
If $\pi \in \Pi$ is feasible for $\opt$, then $\nu_\pi^\X(x) = \mu_E(x)$ for all $x \in \X$.
\end{lemma}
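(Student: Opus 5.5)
The plan is to show that $\mu_E$ and $\nu_\pi^\X$ are both invariant distributions of the same state-level Markov chain, and then invoke uniqueness of the invariant distribution. Fix a feasible $\pi$ and define the state transition kernel under $\mu_E$,
\[
P_\pi(x \mid y) \coloneqq \sum_{a \in \A} \pi(a \mid y)\, p(x \mid y, a, \mu_E),
\]
which is exactly the kernel of the chain appearing in $\opt$ (mean-field term frozen at $\mu_E$). The first constraint of $\opt$ says precisely that $\mu_E(x) = \sum_{y} P_\pi(x\mid y)\,\mu_E(y)$ for all $x$. So $\mu_E$ is an invariant probability measure of $P_\pi$.

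Next I will show that $\nu_\pi^\X$ is also invariant for $P_\pi$. Substituting Lemma~\ref{lem:occ-factorization} into Lemma~\ref{lem:flow} gives
\[
\nu_\pi^\X(x) = \sum_{(y,a)} p(x \mid y,a,\mu_E)\,\pi(a\mid y)\,\nu_\pi^\X(y) = \sum_y P_\pi(x\mid y)\,\nu_\pi^\X(y).
\]
Since $\nu_\pi$ is a probability measure on $\X\times\A$ (Appendix~\ref{app:nu-prob}), its marginal $\nu_\pi^\X$ is a probability measure on $\X$.

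Finally, Assumption~\ref{ass:ergodicity}, applied with policy $\pi$ and mean-field term $\mu_E$, makes the chain $P_\pi$ positive Harris recurrent. A positive Harris recurrent chain admits a unique invariant probability measure \citep{MeynTweedie}, so $\nu_\pi^\X = \mu_E$.

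The only delicate point is this last step: we need the assumption to be applied to the frozen-$\mu_E$ chain, which is exactly how it is stated, and we need uniqueness rather than just existence. If one preferred to avoid citing uniqueness, there is an alternative. Since $x_0\sim\mu_E$ is the initial law in the definition of $\nu_\pi$ and $\mu_E$ is $P_\pi$-invariant, we have $\mathbb{P}(x_t = x) = \mu_E(x)$ for every $t$, so each Cesàro average equals $\mu_E(x)$. Hence $\nu_\pi^\X = \mu_E$ holds even without the uniqueness argument.
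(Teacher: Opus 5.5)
Your proof is correct. Your primary route differs from the paper's, and your fallback is exactly the paper's proof. In your main argument, you show that both $\mu_E$ and $\nu_\pi^\X$ are invariant for the frozen-$\mu_E$ kernel $P_\pi$; the latter follows from combining Lemmas~\ref{lem:occ-factorization} and~\ref{lem:flow}. You then conclude by uniqueness of the invariant probability of a positive Harris recurrent chain.

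The paper argues as in your alternative. The first constraint makes $\mu_E$ invariant, and the chain is started from $x_0\sim\mu_E$. Hence $\mathbb{P}^{\pi,\mu_E}(x_t=x)=\mu_E(x)$ for every $t$, so each Ces\`aro average already equals $\mu_E(x)$.

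The paper's route is more elementary. It uses neither the flow identity, nor Assumption~\ref{ass:ergodicity}, nor any uniqueness theorem. It needs only the convention that the process is initialised at $\mu_E$, which Definition~\ref{def:occupation_measures} leaves implicit but the proof uses.

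Your uniqueness route buys independence from the initial law. Lemmas~\ref{lem:occ-factorization} and~\ref{lem:flow} hold for any starting distribution, since the telescoping error is $O(1/T)$ regardless. So your argument gives $\nu_\pi^\X=\mu_E$ whatever $x_0$ is distributed as. The price is that it relies on the ergodicity assumption, both for existence of the limits and for uniqueness of the invariant measure.
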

 
\begin{lemma}\label{lem:re-writing-entropy}
If $\pi \in \Pi$ is feasible for $\opt$, then
\[
H(\pi) = \sum_{(x, a) \in \X \times \A} -\log \!\left( \frac{\nu_\pi(x, a)}{\mu_E(x)} \right) \nu_\pi(x, a).
\]
\end{lemma}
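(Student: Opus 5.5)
The plan is to express the time-averaged entropy as an integral against the occupation measure, and then use the two identities already established for feasible policies: Lemma~\ref{lem:occ-factorization} gives $\pi(a\mid x)=\nu_\pi(x,a)/\nu_\pi^\X(x)$, and Lemma~\ref{lem:muE_nu_marginal} gives $\nu_\pi^\X=\mu_E$.

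\textbf{Step 1: reduce to a finite sum.} For each $t$,
\[
\E^{\pi,\mu_E}\bigl[-\log\pi(a_t\mid x_t)\bigr]=\sum_{(x,a)}-\log\pi(a\mid x)\,\E^{\pi,\mu_E}\bigl[\1_{\{(x_t,a_t)=(x,a)\}}\bigr],
\]
with the convention $0\log 0=0$. Pairs with $\pi(a\mid x)=0$ are a.s. never visited, so each term is finite. Because $\X\times\A$ is finite, the Cesàro limit can be exchanged with this finite sum. The limits of the individual averages exist by Assumption~\ref{ass:ergodicity} and equal $\nu_\pi(x,a)$ by Definition~\ref{def:occupation_measures}. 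This yields
\[
H(\pi)=\sum_{(x,a)}-\log\pi(a\mid x)\,\nu_\pi(x,a).
\]

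\textbf{Step 2: substitute for $\pi$.} On pairs with $\mu_E(x)>0$, Lemma~\ref{lem:muE_nu_marginal} gives $\nu_\pi^\X(x)=\mu_E(x)>0$. Lemma~\ref{lem:occ-factorization} then gives $\pi(a\mid x)=\nu_\pi(x,a)/\mu_E(x)$, and substituting this into the sum from Step 1 produces the claimed expression.

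\textbf{Step 3: the degenerate terms (main obstacle).} The only real care needed is for terms where some quantity vanishes.
\begin{itemize}
\item If $\mu_E(x)=0$, then $\nu_\pi(x,a)\le\nu_\pi^\X(x)=0$. The term in Step 1 is then $-\log\pi(a\mid x)\cdot 0=0$, even though $\pi(a\mid x)$ itself is arbitrary. The term in the claim is $0\log(0/0)=0$ by convention, so the two agree.
\item If $\mu_E(x)>0$ but $\pi(a\mid x)=0$, then $\nu_\pi(x,a)=0$. Both sides reduce to $0\log 0=0$.
\end{itemize}
With these cases checked, the term-by-term identity holds on all of $\X\times\A$, which proves the lemma.
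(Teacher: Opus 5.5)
Your proposal is correct and follows the paper's proof essentially step for step. You write $H(\pi)=\sum_{(x,a)}-\log\pi(a\mid x)\,\nu_\pi(x,a)$ by exchanging the finite sum with the Ces\`aro limit, substitute $\pi(a\mid x)=\nu_\pi(x,a)/\nu^\X_\pi(x)$ from Lemma~\ref{lem:occ-factorization}, dispose of the zero-mass terms via the $0\log 0$ and $0\log(0/0)$ conventions, and use $\nu_\pi^\X=\mu_E$ from Lemma~\ref{lem:muE_nu_marginal}; the only difference is that you replace $\nu_\pi^\X$ by $\mu_E$ before treating the degenerate terms rather than after, which is immaterial.
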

 
The formulation $\opt$ is now specialised in two directions. Section~\ref{sect:linear_reward} considers the finite-dimensional linear reward model and uses the occupation-measure representation to derive a convex dual problem with an explicit log-partition objective. Section~\ref{sec:rkhs} considers the RKHS reward model and pursues a Lagrangian-relaxation route based on soft Bellman equations.

\section{Finite-Dimensional Linear Reward Model}\label{sect:linear_reward}

We now specialise the optimisation problem $\opt$ introduced in Section~\ref{sect:MaxEnt} to the finite-dimensional linear setting. Specifically, we assume the expert's long-run statistics are represented by a feature map taking values in $\R^k$, and that the unknown reward function is linear with respect to this feature vector.

\begin{assumption}[Finite-dimensional linear reward model]\label{ass:linear_reward}
The feature map of Section~\ref{sect:IRL} takes values in $\R^k$, i.e.\ $\varphi : \X \times \A \times \mathcal{P}(\X) \to \R^k$, and the unknown reward belongs to the linearly parametrised class
\[
\mathcal{R} \coloneqq \big\{ r(x, a, \mu) = \langle \theta, \varphi(x, a, \mu) \rangle : \theta \in \R^k \big\}.
\]
\end{assumption}
 Under Assumption~\ref{ass:linear_reward}, the expert feature expectation vector $\langle \varphi \rangle_{\pi_E, \mu_E}$ lies in $\R^k$, and the problem $\opt$ inherits a finite-dimensional feature representation. We begin by showing that any optimal solution of $\opt$ induces a MFE.

\iffalse
\begin{assumption}[Finite-dimensional linear reward model]
\label{ass:linear_reward}
Throughout this section, we specialise the feature map introduced in
Section~\ref{sect:IRL} to a map $\varphi : \X \times \A \times \mathcal P(\X) \to \R^k,$ and assume that the unknown reward belongs to the linearly parametrised class
\[
\mathcal{R} \coloneqq \left\{ r(x,a,\mu) = \langle \theta, \varphi(x,a,\mu) \rangle \mid \theta \in \mathbb{R}^k,\ \varphi : \mathcal{X} \times \mathcal{A} \times \mathcal{P}(\mathcal{X}) \to \mathbb{R}^k \right\}.
\]
\end{assumption}
\fi

\begin{proposition}\label{prop:linear-mfe}
Suppose Assumption~\ref{ass:linear_reward} holds and let $\pi^\star$ be an optimal solution of $\opt$. Then $(\pi^\star, \mu_E)$ is a mean-field equilibrium.
\end{proposition}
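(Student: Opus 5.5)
The invariance half of the equilibrium is immediate: the first constraint of $\opt$ says exactly that $\mu_E \in \Lambda(\pi^\star)$. All the work lies in the optimality half. We must exhibit a reward $r_{\theta^\star}(x,a,\mu)=\langle \theta^\star,\varphi(x,a,\mu)\rangle \in \mathcal{R}$ for which $\pi^\star \in \Phi(\mu_E)$. As usual in maximum causal entropy IRL, $\Phi$ is read in its entropy-regularised sense: $\pi^\star$ maximises $\J_{\mu_E}(\pi,\mu_E)+H(\pi)$. The plan is to produce $\theta^\star$ as a Lagrange multiplier of the feature-matching constraint.

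\textbf{Step 1: convex reformulation.} Pass to occupation measures. By Lemmas~\ref{lem:occ-factorization}, \ref{lem:flow}, \ref{lem:feature-exp}, \ref{lem:muE_nu_marginal} and~\ref{lem:re-writing-entropy}, $\opt$ is equivalent to maximising over $\nu \in \mathcal{P}(\X\times\A)$ the objective
\[
\textstyle -\sum_{x,a}\nu(x,a)\log\bigl(\nu(x,a)/\nu^\X(x)\bigr)
\]
subject to three linear constraints: flow balance under $p(\cdot\mid\cdot,\cdot,\mu_E)$, the marginal condition $\nu^\X=\mu_E$, and feature matching $\sum_{x,a}\varphi(x,a,\mu_E)\nu(x,a)=\langle\varphi\rangle_{\pi_E,\mu_E}$. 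The objective is a conditional entropy, hence concave in $\nu$. The feasible set is a polytope, so the problem is a finite-dimensional convex programme. It is nonempty because $\nu_{\pi_E}$ belongs to it. The maps $\pi\mapsto\nu_\pi$ and $\nu\mapsto\nu(a\mid x)$ are mutually inverse on the relevant set, using Assumption~\ref{ass:ergodicity} for uniqueness of the invariant law. Hence $\nu^\star\coloneqq\nu_{\pi^\star}$ is optimal for the convex programme.

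\textbf{Step 2: KKT and a multiplier.} Since all constraints are affine and the objective is finite and continuous on the simplex, the refined Slater condition for linear constraints gives strong duality without any interior-point requirement. Therefore there exist multipliers $\theta^\star\in\R^k$ for the feature constraint and state multipliers $\lambda,\,V$ for the marginal and flow constraints. With these multipliers, $\nu^\star$ maximises the Lagrangian over all of $\mathcal{P}(\X\times\A)$.

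\textbf{Step 3: read off the entropy-regularised average-reward optimality equation.} Differentiating the Lagrangian in $\nu(x,a)$ yields a softmax form
\[
\pi^\star(a\mid x)\propto\exp\bigl(\langle\theta^\star,\varphi(x,a,\mu_E)\rangle+\textstyle\sum_{x'}p(x'\mid x,a,\mu_E)V(x')\bigr),
\]
together with a scalar identity relating $V$, $\lambda$ and the log-partition. One then checks that this is precisely the soft average-reward Bellman equation $g+V=\mathrm{softmax}$ for the reward $r_{\theta^\star}$. A standard average-reward verification argument follows: telescope $V(x_t)$ along trajectories, divide by $T$, and use boundedness of $V$ on the finite set $\X$. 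This shows that the softmax policy attains $\sup_\pi\{\J_{\mu_E}(\pi,\mu_E)+H(\pi)\}$, so $\pi^\star\in\Phi(\mu_E)$.

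\textbf{Main obstacle.} The hard step is disposing of the multiplier $\lambda(x)$ attached to $\nu^\X=\mu_E$. A priori it adds a state-dependent term to the reward, which need not belong to $\mathcal{R}$. The approach I would try first is to show that this constraint is redundant at the optimum. For any $\nu$ satisfying flow balance, the marginal $\nu^\X$ is automatically the unique invariant law of the induced chain. Consider therefore the relaxed problem that drops $\nu^\X=\mu_E$ and keeps only the feature constraint. Its KKT solution is the soft-optimal policy for $r_{\theta^\star}$. I would compare this solution with $\pi^\star$ using strict concavity of the conditional entropy. If redundancy fails in general, the fallback is to absorb $\lambda$ into the bias function $V$, using $\sum_x\lambda(x)\nu^\X(x)=\sum_{x,a}\nu(x,a)\sum_{x'}p(x'\mid x,a,\mu_E)\lambda(x')$ under flow balance. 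The average-reward criterion is invariant under such potential-shaping terms, so the optimality of $\pi^\star$ for $r_{\theta^\star}$ is unaffected.
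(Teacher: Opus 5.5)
\textbf{Gap: the proposal proves a different statement.} In Definition~\ref{def:MFE}, $\Phi(\mu_E)$ is the set of maximisers of the \emph{unregularised} criterion $\J_{\mu_E}(\cdot,\mu_E)$. The reward there is the game's reward, which Assumption~\ref{ass:linear_reward} already places in $\mathcal R$ as $r_E=\langle\theta_E,\varphi\rangle$, so there is no reward to construct. Showing that $\pi^\star$ maximises $\J_{\mu_E}(\pi,\mu_E)+H(\pi)$ for some multiplier reward $r_{\theta^\star}$ does not give $\pi^\star\in\Phi(\mu_E)$, because a full-support softmax policy is generally not optimal for the unregularised average reward. Your invariance half is fine.

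The missing idea is that feasibility of $\pi^\star$ is enough. By linearity, Lemma~\ref{lem:feature-exp}, and the limits guaranteed by Assumption~\ref{ass:ergodicity}, $\J_{\mu_E}(\pi,\mu_E)=\langle\theta_E,\langle\varphi\rangle_{\pi,\mu_E}\rangle$ for every $\pi$. The feature-matching constraint then gives $\J_{\mu_E}(\pi^\star,\mu_E)=\J_{\mu_E}(\pi_E,\mu_E)=\sup_{\pi}\J_{\mu_E}(\pi,\mu_E)$, since $\pi_E\in\Phi(\mu_E)$. This is the paper's argument: no duality is used, and not even the optimality of $\pi^\star$.

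Even for the regularised statement you target, two steps fail.

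First, Step 2's multiplier existence. The refined Slater condition for affine constraints still requires a feasible point in the relative interior of the entropy's domain, i.e.\ a strictly positive feasible $\nu$. By Theorem~\ref{thm:linear_dual}, the Lagrangian maximiser over $\mathcal P(\X\times\A)$ is always a full-support Boltzmann distribution. So no finite multipliers exist whenever $\nu^\star$ has a zero entry. This happens, for example, when the constraints pin the feasible set to $\nu_{\pi_E}$ for a deterministic expert.

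Second, your ``main obstacle'' is genuine, and the fallback does not remove it. Potential shaping removes only terms of the form $\sum_{x'}p(x'\mid x,a,\mu_E)h(x')-h(x)$, whose long-run average vanishes under every stationary policy. By contrast, $\lambda(x)$ contributes $\sum_x\lambda(x)\nu^\X_\pi(x)$, which varies with $\pi$; your identity shows only that $\lambda$ and its one-step propagation have equal averages, not that $\lambda$ averages to zero. Since the marginal constraint is generally binding, the reward for which $\pi^\star$ is soft-optimal is $\langle\theta^\star,\varphi\rangle$ plus a genuine state-dependent term, typically outside $\mathcal R$. Compare the term $\zeta_x$ in $r_\vartheta$ in Section~\ref{sec:rkhs}.
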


\begin{proof}
Since $\pi^\star$ is feasible for $\opt$, the first constraint of $\opt$ gives $\mu_E \in \Lambda(\pi^\star)$. 
Let $r_E(x, a, \mu) = \langle \theta_E, \varphi(x, a, \mu) \rangle$ denote the expert's (unknown) reward under Assumption~\ref{ass:linear_reward}. Since $(\pi_E, \mu_E)$ is a MFE, $\pi_E \in \Phi(\mu_E)$, i.e.\ $\mathcal{J}_{\mu_E}(\pi_E,\mu_E)=\sup_{\pi\in\Pi}\mathcal{J}_{\mu_E}(\pi,\mu_E).$ By linearity of the reward and the feature-matching constraint in $\opt$,
\[
\mathcal{J}_{\mu_E}(\pi^\star,\mu_E)
=
\left\langle \theta_E,\langle \varphi\rangle_{\pi^\star,\mu_E}\right\rangle
=
\left\langle \theta_E,\langle \varphi\rangle_{\pi_E,\mu_E}\right\rangle
=
\mathcal{J}_{\mu_E}(\pi_E,\mu_E),
\]
so $\pi^\star \in \Phi(\mu_E)$. Combined with $\mu_E \in \Lambda(\pi^\star)$, this proves the claim.
\end{proof}

\iffalse
\begin{proof}
Since $\pi^\star$ is feasible for $\opt$, it satisfies
\[
\mu_E(x)=\sum_{(y,a)\in \X\times \A} p(x\mid y,a,\mu_E)\pi^\star(a\mid y)\mu_E(y),
\qquad \forall x\in \X.
\]
Hence, $\mu_E\in \Lambda(\pi^\star)$. 

Now let $r_E(x,a,\mu)=\langle \theta_E,\varphi(x,a,\mu)\rangle$ denote the expert's unknown reward under Assumption~\ref{ass:linear_reward}. Since $(\pi_E,\mu_E)$ is a
mean-field equilibrium, we have $\pi_E\in \Phi(\mu_E)$, that is,
\[
\mathcal{J}_{\mu_E}(\pi_E,\mu_E)=\sup_{\pi\in\Pi}\mathcal{J}_{\mu_E}(\pi,\mu_E).
\]
Moreover, by the linearity of the reward and the feature-matching constraint in $\opt$,
\[
\mathcal{J}_{\mu_E}(\pi^\star,\mu_E)
=
\left\langle \theta_E,\langle \varphi\rangle_{\pi^\star,\mu_E}\right\rangle
=
\left\langle \theta_E,\langle \varphi\rangle_{\pi_E,\mu_E}\right\rangle
=
\mathcal{J}_{\mu_E}(\pi_E,\mu_E).
\]
Therefore,
\[
\mathcal{J}_{\mu_E}(\pi^\star,\mu_E)=\sup_{\pi\in\Pi}\mathcal{J}_{\mu_E}(\pi,\mu_E),
\]
which implies that $\pi^\star\in\Phi(\mu_E)$. Consequently,
$(\pi^\star,\mu_E)$ is a mean-field equilibrium.
\end{proof}
\fi

\subsection{Occupation-Measure Reformulation}\label{sec:linear-occupation} 
The identities of Section~\ref{sect:occupation_measures} allow us to pass from the policy formulation $\opt$ to an equivalent optimisation over occupation measures. By Lemmas~\ref{lem:feature-exp}, \ref{lem:muE_nu_marginal}, and~\ref{lem:re-writing-entropy}, every feasible policy for $\opt$ induces an occupation measure satisfying the corresponding feature-matching, flow-balance, and entropy identities. This yields:

%We now derive an equivalent occupation-measure formulation of the linear problem. The identities from Section~\ref{sect:occupation_measures} allow us to pass {\color{red} from the original policy formulation} 
%from the root policy formulation  to an optimisation problem over measures.

%By Lemma~\ref{lem:feature-exp}, Lemma~\ref{lem:muE_nu_marginal}, and Lemma~\ref{lem:re-writing entropy}, every feasible policy for $\opt$
%induces an occupation measure satisfying the corresponding feature-matching, flow-balance, and entropy identities. This leads to the following reformulation:

\[
\begin{array}{rl}
\optt \quad \displaystyle \max_{\nu \in \R^{\X \times \A}}
& \displaystyle \sum_{(x,a) \in \X \times \A} -\log\!\left( \frac{\nu(x,a)}{\mu_E(x)} \right) \nu(x,a) \\[1.0em]
\text{subject to}
& \displaystyle \sum_{(x,a) \in \X \times \A} \varphi(x, a, \mu_E)\, \nu(x, a) = \langle \varphi \rangle_{\pi_E, \mu_E}, \\[0.7em]
& \displaystyle \mu_E(x) = \sum_{(y,a) \in \X \times \A} p(x \mid y, a, \mu_E)\, \nu(y, a), \quad \forall x \in \X, \\[0.5em]
& \nu^{\X}(x) = \mu_E(x), \quad \forall x \in \X, \\[0.3em]
& \nu(x, a) \geq 0, \quad \forall (x, a) \in \X \times \A.
\end{array}
\]

\iffalse
\[
\begin{array}{lll}
\optt \,\, \underset{\nu \in \R^{\X \times \A}}{\text{maximise}} \text{ } 
& \displaystyle \sum_{(x,a) \in \X \times \A} -\log\left(\frac{\nu(x,a)}{\mu_E(x)}\right) \, \nu(x,a)
\\
\phantom{\mathbf{(OPT_2)} \,\, } \text{subject to} 
& \displaystyle \sum_{(x,a) \in \X \times \A} \varphi(x,a,\mu_E) \, \nu(x,a) = \langle \varphi \rangle_{\pi_E,\mu_E}
\\
\phantom{x} 
& \mu_E(x) = \sum_{(y,a) \in \X \times \A} p(x\mid y,a,\mu_E) \, \nu(y,a), \qquad \forall x \in \X
\\
\phantom{x} 
& \nu^{\X}(x) = \mu_E(x), \qquad\forall x \in \X
\\
\phantom{x} 
& \nu(x,a) \geq 0, \qquad \forall (x,a) \in \X \times \A.
\end{array}
\]
\fi

The problem above is the occupation-measure counterpart of
the original maximum causal entropy formulation $\opt$ from Section~\ref{sect:MaxEnt}.

We next show that $\optt$ is convex and is equivalent to $\opt$.

%We next show that it is a convex optimisation problem and that it is equivalent to $\opt$.

\begin{theorem}\label{thm:linear_convex}
    $\optt$ is a convex optimisation problem.
\end{theorem}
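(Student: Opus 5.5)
Since $\optt$ is a maximisation problem, convexity means three things: the feasible set is convex, and the objective is concave on it, so that we are maximising a concave function over a convex set (equivalently, minimising a convex function). The plan is to check the constraints first and then the objective.

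\textbf{Constraints.} Every constraint is either an affine equality in $\nu$ or a nonnegativity constraint.
\begin{itemize}
\item The feature-matching constraint is the linear map $\nu \mapsto \sum_{(x,a)} \varphi(x,a,\mu_E)\nu(x,a)$ set equal to the fixed vector $\langle \varphi \rangle_{\pi_E,\mu_E} \in \R^k$. Here $\varphi(\cdot,\cdot,\mu_E)$ is fixed data, since $\mu_E$ is known by Assumption~\ref{ass:known_stats}.
\item The flow-balance constraint is linear in $\nu$ with fixed coefficients $p(x\mid y,a,\mu_E)$.
\item The marginal constraint $\sum_a \nu(x,a) = \mu_E(x)$ is linear.
\item Nonnegativity carves out the closed convex cone $\R_{\geq 0}^{\X\times\A}$.
\end{itemize}
The feasible set is therefore an intersection of affine subspaces with a convex cone, i.e.\ a convex polyhedron.

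\textbf{Objective.} Since $\mu_E(x)$ is a fixed constant, the objective splits as
\[
\sum_{(x,a)} -\nu(x,a)\log\nu(x,a) + \sum_{(x,a)} \nu(x,a)\log\mu_E(x).
\]
The second sum is linear in $\nu$. The first is a sum of the one-dimensional functions $h(t) = -t\log t$ on $t \geq 0$, with $h(0)=0$ by the stated convention. Each $h$ is continuous on $[0,\infty)$ and satisfies $h''(t) = -1/t < 0$ on $(0,\infty)$, so it is concave, in fact strictly concave. A sum of concave functions of separate coordinates plus a linear term is concave, so the objective is concave.

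Alternatively, one can write each term as $-\nu(x,a)\log(\nu(x,a)/\mu_E(x))$, which is the negative of the perspective of $t\mapsto t\log t$ evaluated at the constant $\mu_E(x)$. This gives concavity immediately.

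\textbf{Main obstacle: states with $\mu_E(x)=0$.} For such states the term $\log \mu_E(x)$ is undefined. The marginal constraint together with nonnegativity forces $\nu(x,a)=0$ for all $a$, so by the convention $0\log(0/0)=0$ these terms contribute $0$ identically on the feasible set. I would handle this by restricting attention to the support of $\mu_E$, or by noting that these coordinates are fixed at $0$. The objective is then a well-defined, finite, continuous concave function on the feasible polyhedron, which completes the argument.
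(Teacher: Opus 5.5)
Your proposal is correct and follows the paper's proof. Like the paper, you split the objective into the entropy term $-\sum_{(x,a)}\nu(x,a)\log\nu(x,a)$, which is concave (the paper cites strong concavity), plus the linear term $\sum_{(x,a)}\nu(x,a)\log\mu_E(x)$, and note that every constraint is linear in $\nu$; your handling of states with $\mu_E(x)=0$ is a small extra piece of care that the paper omits.
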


\begin{proof}
Rewrite the objective as
\[
\sum_{(x, a) \in \X \times \A} -\log(\nu(x, a))\, \nu(x, a)
+ \sum_{(x, a) \in \X \times \A} \log(\mu_E(x))\, \nu(x, a).
\]
The first term is the entropy of $\nu$, which is strongly concave in $\nu$ \citep[p.~37]{alajaji2018}; the second is linear in $\nu$. Hence the objective is strongly concave. Since all constraints are linear in $\nu$, the problem is convex.
\end{proof}

\begin{theorem}\label{thm:linear_correspondence}
The map $\pi \mapsto \nu_{\pi}$ defined by $\nu_{\pi}(x, a) = \mu_{E}(x)\pi(a \mid x)$ is a bijection from the feasible set of $\opt$ onto the feasible set of $\optt$, with inverse $\nu \mapsto \pi_{\nu}(a \mid x) = \nu(x, a)/\nu^{ X}(x)$ on $\{\nu^{ X} > 0\}$; under this correspondence, the objective values of $\opt$ and $\optt$ coincide.
\end{theorem}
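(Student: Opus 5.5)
The proof splits into three checks: the forward map lands in the feasible set of $\optt$, the backward map lands in the feasible set of $\opt$, and the two maps are mutually inverse with matching objective values.

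\emph{Forward direction.} Let $\pi$ be feasible for $\opt$. By Lemma~\ref{lem:muE_nu_marginal}, $\nu_\pi^\X=\mu_E$. By Lemma~\ref{lem:occ-factorization}, the occupation measure of Definition~\ref{def:occupation_measures} therefore equals $\mu_E(x)\pi(a\mid x)$, so the map in the statement is the genuine occupation measure. We then verify the constraints of $\optt$:
\begin{itemize}
\item nonnegativity and the marginal constraint $\nu^\X=\mu_E$ are immediate;
\item flow balance follows from Lemma~\ref{lem:flow}, or directly from the first constraint of $\opt$, since $\sum_{y,a}p(x\mid y,a,\mu_E)\pi(a\mid y)\mu_E(y)=\mu_E(x)$;
\item feature matching follows from Lemma~\ref{lem:feature-exp} together with the second constraint of $\opt$.
\end{itemize}
Equality of objectives is Lemma~\ref{lem:re-writing-entropy}, with the convention $0\log 0=0$ covering zero entries.

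\emph{Backward direction.} Let $\nu$ be feasible for $\optt$ and define $\pi_\nu(a\mid x)=\nu(x,a)/\mu_E(x)$ where $\mu_E(x)>0$. This is a probability on $\A$ because $\nu^\X=\mu_E$. On $\{\mu_E=0\}$ we have $\nu(x,\cdot)=0$, since $\nu\ge0$ and $\nu^\X(x)=0$; there we set $\pi_\nu(\cdot\mid x)$ to an arbitrary fixed distribution, say uniform. Then $\mu_E(x)\pi_\nu(a\mid x)=\nu(x,a)$ for all $(x,a)$, so the invariance constraint of $\opt$ is exactly the flow constraint of $\optt$.

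\emph{Main obstacle.} Feature matching for $\pi_\nu$ is harder, because the constraint in $\opt$ is a long-run average under the dynamics and not a static sum. The plan is as follows. By Assumption~\ref{ass:ergodicity}, the chain $P_{\pi_\nu,\mu_E}(z\mid x)=\sum_a \pi_\nu(a\mid x)p(z\mid x,a,\mu_E)$ has a unique invariant distribution. The invariance constraint just verified says $\mu_E$ is invariant for this chain, so $\nu_{\pi_\nu}^\X=\mu_E$ by uniqueness, and Lemma~\ref{lem:occ-factorization} gives $\nu_{\pi_\nu}=\nu$. Lemma~\ref{lem:feature-exp} then converts the long-run feature average into $\sum\varphi\,\nu=\langle\varphi\rangle_{\pi_E,\mu_E}$.

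\emph{Bijection.} $\pi_{\nu_\pi}=\pi$ on $\{\mu_E>0\}$, and $\nu_{\pi_\nu}=\nu$. For injectivity we note that $\{\mu_E=0\}$ is empty: positive Harris recurrence with $\mu_E$ the unique invariant law charges every state of the irreducible finite chain. If that fails, policies are identified up to their values on $\mu_E$-null states, which do not affect either problem. Matching objective values for $\pi_\nu$ again follow from Lemma~\ref{lem:re-writing-entropy}.
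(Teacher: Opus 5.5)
Your proposal is correct and follows essentially the same route as the paper's proof. The forward direction goes through Lemmas~\ref{lem:muE_nu_marginal}, \ref{lem:flow}, \ref{lem:feature-exp} and~\ref{lem:re-writing-entropy}, and the backward direction normalises $\nu$, reads the flow constraint as $\mu_E\in\Lambda(\pi_\nu)$, establishes $\nu_{\pi_\nu}=\nu$, and reuses the same lemmas. The paper obtains $\nu_{\pi_\nu}^\X=\mu_E$ by the stationarity argument of Lemma~\ref{lem:muE_nu_marginal} (the chain is started from its invariant law $\mu_E$), whereas you use uniqueness of the invariant law; either suffices.

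One correction: Assumption~\ref{ass:ergodicity} does not imply that $\mu_E$ charges every state. Positive Harris recurrence is relative to an irreducibility measure and allows transient states, and the assumption does not give you an irreducible chain. Your fallback is therefore the accurate version: the correspondence is a bijection once policies are identified off $\{\mu_E>0\}$. The paper's proof leaves this implicit in the qualifier ``on $\{\nu^\X>0\}$''.
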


The proof is a routine application of the occupation-measure identities of Section~\ref{sect:occupation_measures} together with the feasibility-induced marginal identity of Lemma~\ref{lem:muE_nu_marginal}; the full argument is given in Appendix~\ref{app:proof-linear-correspondence}.

\begin{remark}\label{rem:linear-correspondence}
The correspondence between feasible policies of $\opt$ and feasible occupation measures of $\optt$ is given by
\begin{align*}
    &\pi \mapsto \nu_\pi \coloneqq \mu_E(x)\, \pi(a \mid x) \in \mathcal{P}(\mathcal{X} \times \mathcal{A}) \quad\text{and} \quad\nu \mapsto \pi_\nu(a \mid x) \coloneqq \frac{\nu(x,a)}{\nu^{\mathcal{X}}(x)} \in \Pi,
\end{align*}
whenever $\nu^{\X}(x)>0$. That is, $\nu_\pi$ is the occupation measure induced
by the policy $\pi$ under the invariant distribution $\mu_E$, while $\pi_\nu$
is obtained from $\nu$ by normalising with respect to its state marginal.

Thus the two formulations are equivalent at the level of feasible solutions, not only in objective value.
\end{remark}

\noindent In view of Theorem~\ref{thm:linear_correspondence}, we work entirely with $\optt$ in the remainder of this section.

\subsection{Dual Formulation}\label{sec:linear-dual}

 We now derive a min--max representation of $\optt$ by introducing Lagrange multipliers for its linear constraints. Solving the inner maximisation in closed form will then yield the dual problem.

%We next derive a min--max representation of $\optt$ by introducing Lagrange multipliers for its linear constraints. This will serve as the starting point for the dual analysis.

\begin{proposition}
\label{prop:minmax-linear}
Let $\boldsymbol{\alpha} \in \R^k$ and $\boldsymbol{\beta},\boldsymbol{\theta} \in \R^\X$ denote the Lagrange
multipliers associated with the feature-matching, flow-balance, and marginal
constraints in $\optt$, respectively. Then 
\small
\begin{align*}
\optt
=
\max_{\nu \in \mathcal P(\X \times \A)}
\min_{\substack{\boldsymbol{\alpha} \in \R^k\\ \boldsymbol{\beta},\boldsymbol{\theta} \in \R^\X}}
\bigg\{
H(\nu)
+
\sum_{(x,a)\in \X\times \A}
\ell_{\boldsymbol{\alpha},\boldsymbol{\beta},\boldsymbol{\theta}}(x,a)\nu(x,a) -
\langle \boldsymbol{\alpha},\langle \varphi\rangle_{\pi_E,\mu_E}\rangle
-
\sum_{x\in \X}\boldsymbol{\theta}_x\mu_E(x)
\bigg\},
\end{align*}
\normalsize
where
\[
\ell_{\boldsymbol{\alpha},\boldsymbol{\beta},\boldsymbol{\theta}}(x,a)
\coloneqq
\log\mu_E(x)
+
\langle \boldsymbol{\alpha},\varphi(x,a,\mu_E)\rangle
+
\boldsymbol{\theta}_x
+
\sum_{z\in \X}\boldsymbol{\beta}_z\bigl(p(z\mid x,a,\mu_E)-\mu_E(z)\bigr).
\]
\end{proposition}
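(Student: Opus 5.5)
The plan is to write down the standard Lagrangian of $\optt$ and check that the inner minimisation over multipliers acts as an exact penalty for the three linear constraints. I will keep the nonnegativity and normalisation constraints in the outer domain $\mathcal P(\X\times\A)$. Normalisation is harmless there, because the marginal constraint $\nu^\X=\mu_E$ together with $\sum_x\mu_E(x)=1$ already forces $\sum_{x,a}\nu(x,a)=1$. Hence the feasible set of $\optt$ coincides with the set of $\nu\in\mathcal P(\X\times\A)$ satisfying the feature-matching, flow-balance and marginal equalities. I will also write $H(\nu)\coloneqq-\sum_{x,a}\nu(x,a)\log\nu(x,a)$, so that, exactly as in the proof of Theorem~\ref{thm:linear_convex}, the objective of $\optt$ equals $H(\nu)+\sum_{x,a}\log\mu_E(x)\,\nu(x,a)$.

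The key step is to expand the bracketed expression and group it by multiplier. The $\boldsymbol\alpha$-terms give $\langle\boldsymbol\alpha,\sum_{x,a}\varphi(x,a,\mu_E)\nu(x,a)-\langle\varphi\rangle_{\pi_E,\mu_E}\rangle$. The $\boldsymbol\theta$-terms give $\sum_x\boldsymbol\theta_x(\nu^\X(x)-\mu_E(x))$.

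For the $\boldsymbol\beta$-terms, I will use $\sum_{x,a}\nu(x,a)=1$ on $\mathcal P(\X\times\A)$. This gives $\sum_{x,a}\sum_z\boldsymbol\beta_z\mu_E(z)\nu(x,a)=\sum_z\boldsymbol\beta_z\mu_E(z)$, so the $\boldsymbol\beta$-terms equal $\sum_z\boldsymbol\beta_z\bigl(\sum_{x,a}p(z\mid x,a,\mu_E)\nu(x,a)-\mu_E(z)\bigr)$. This is precisely why $\mu_E(z)$ sits inside $\ell$ rather than as a separate constant.

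Altogether, the bracket is the objective of $\optt$ plus a linear combination of the constraint residuals, with coefficients $\boldsymbol\alpha,\boldsymbol\beta,\boldsymbol\theta$.

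Next I take the inner minimum. If $\nu$ violates any of the equalities, some residual is nonzero, and sending the corresponding multiplier to $\mp\infty$ drives the bracket to $-\infty$. If $\nu$ satisfies all of them, the bracket is independent of the multipliers and equals the $\optt$ objective. The outer maximum therefore ranges effectively over the feasible set of $\optt$, which gives the identity. I should note that the feasible set is nonempty because $\nu_{\pi_E}$ is feasible by Theorem~\ref{thm:linear_correspondence}, so both sides are finite and the equality is not a vacuous $-\infty=-\infty$. I will also note that the $0\log0$ convention makes $H$ finite and continuous on the simplex.

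I expect no serious obstacle here; the statement is the primal (max–min) representation, not strong duality. The only delicate point is the $\boldsymbol\beta$ bookkeeping, which needs the normalisation of $\nu$. The sign conventions for the multipliers are immaterial because they range over all of $\R^k$ and $\R^\X$.
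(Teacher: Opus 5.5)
Your proposal is correct and follows essentially the same route as the paper: attach the multipliers to the three constraint residuals, observe that the unrestricted inner minimum acts as an exact penalty (it returns the primal objective on the feasible set and $-\infty$ otherwise), and use $\sum_{(x,a)}\nu(x,a)=1$ to absorb $-\sum_z\boldsymbol{\beta}_z\mu_E(z)$ into $\ell_{\boldsymbol{\alpha},\boldsymbol{\beta},\boldsymbol{\theta}}$. Your explicit observation that the marginal constraint already forces normalisation, which justifies taking the outer maximum over $\mathcal P(\X\times\A)$ instead of $\R^{\X\times\A}$, is a small point the paper leaves implicit.
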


The derivation expands the Lagrangian, collects the terms linear in $\nu$, and absorbs the constant $-\sum_z \bm{\beta}_z\, \mu_E(z)$ into the coefficient of $\nu(x, a)$ using $\sum_{(x, a)} \nu(x, a) = 1$; the bookkeeping is recorded in Appendix~\ref{app:proof-minmax-linear}. For fixed $(\bm{\alpha}, \bm{\beta}, \bm{\theta})$, the displayed expression is concave in $\nu$ (since $H(\nu)$ is concave and the remaining terms are linear); for fixed $\nu$, it is affine, hence convex, in $(\bm{\alpha}, \bm{\beta}, \bm{\theta})$. This convex--concave structure is the input to Sion's minimax theorem in the next result.

\begin{theorem}\label{thm:linear_dual}
The dual of $\optt$ is
\[
\min_{\substack{\bm{\alpha} \in \R^k \\ \bm{\beta}, \bm{\theta} \in \R^\X}}
\bigg\{
\log \sum_{(x, a) \in \X \times \A} e^{\ell_{\bm{\alpha}, \bm{\beta}, \bm{\theta}}(x, a)}
- \langle \bm{\alpha}, \langle \varphi \rangle_{\pi_E, \mu_E} \rangle
- \sum_{x \in \X} \bm{\theta}_x\, \mu_E(x)
\bigg\},
\]
with $\ell_{\bm{\alpha}, \bm{\beta}, \bm{\theta}}$ as in Proposition~\ref{prop:minmax-linear}. For each $(\bm{\alpha}, \bm{\beta}, \bm{\theta})$, the inner maximisation is attained at the Boltzmann distribution
\[
\nu^\star(x, a) \coloneqq \frac{e^{\ell_{\bm{\alpha}, \bm{\beta}, \bm{\theta}}(x, a)}}{Z_{\bm{\alpha}, \bm{\beta}, \bm{\theta}}},
\qquad
Z_{\bm{\alpha}, \bm{\beta}, \bm{\theta}} \coloneqq \sum_{(x, a) \in \X \times \A} e^{\ell_{\bm{\alpha}, \bm{\beta}, \bm{\theta}}(x, a)}.
\]
Furthermore, there is no duality gap.
\end{theorem}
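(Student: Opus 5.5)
Starting from the min--max representation of Proposition~\ref{prop:minmax-linear}, I will exchange max and min with Sion's minimax theorem and then solve the inner maximisation over $\nu$ in closed form. Write $L(\nu;\bm{\alpha},\bm{\beta},\bm{\theta})$ for the bracketed Lagrangian.

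\textbf{Step 1: exchange of max and min.} The set $\mathcal P(\X\times\A)$ is a compact convex subset of $\R^{\X\times\A}$, while the multiplier space $\R^k\times\R^\X\times\R^\X$ is convex. As noted after Proposition~\ref{prop:minmax-linear}, $L$ is concave and continuous in $\nu$ (entropy plus linear terms) and affine in the multipliers. Sion's theorem, with compactness on the maximising side, then gives
\[
\max_{\nu}\min_{\bm{\alpha},\bm{\beta},\bm{\theta}} L=\min_{\bm{\alpha},\bm{\beta},\bm{\theta}}\max_{\nu} L,
\]
where the outer minimum is a priori an infimum. This equality is exactly the absence of a duality gap.

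\textbf{Step 2: inner maximisation.} Fix the multipliers and maximise $H(\nu)+\sum_{(x,a)}\ell(x,a)\nu(x,a)$ over the simplex. By the Gibbs variational principle (equivalently, nonnegativity of the KL divergence) we have
\[
H(\nu)+\sum_{(x,a)}\ell\,\nu=\log Z-\mathrm{KL}(\nu\,\|\,\nu^\star),
\]
with $\nu^\star=e^{\ell}/Z$. The maximum is therefore $\log Z_{\bm{\alpha},\bm{\beta},\bm{\theta}}$, and it is attained uniquely at the Boltzmann distribution $\nu^\star$. The identity itself is a one-line expansion of $-\sum\nu\log(\nu/\nu^\star)$. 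Subtracting the $\nu$-independent terms $-\langle\bm{\alpha},\langle\varphi\rangle_{\pi_E,\mu_E}\rangle-\sum_x\bm{\theta}_x\mu_E(x)$ yields the stated dual objective.

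\textbf{Step 3: main obstacle.} The delicate point is replacing the infimum by a minimum, since the multiplier domain is unbounded and invariances exist. For example, shifting $\bm{\beta}$ by a constant leaves $\ell$ unchanged because $\sum_z(p-\mu_E)=0$, and shifting $\bm{\theta}$ by a constant $c$ changes both $\log Z$ and $-\sum\bm{\theta}_x\mu_E(x)$ by $c$. I would handle this in two ways.
\begin{enumerate}[label=(\alph*)]
\item Interpret ``$\min$'' as the optimal value, which is what the no-gap claim needs.
\item For attainment, invoke a Slater-type condition. If the feasible set of $\optt$ contains a point with $\nu>0$ (for instance, when $\pi_E$ has full support), then, since all constraints are affine, standard Lagrangian duality for convex programs with a finite optimal value ensures that a dual optimum exists.
\end{enumerate}
Feasibility of $\optt$ is guaranteed by $\nu_{\pi_E}$, through Theorem~\ref{thm:linear_correspondence}. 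If full support fails, I would restrict to the face of the simplex where feasible points live and repeat the argument.
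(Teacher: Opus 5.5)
Your Steps~1--2 are the paper's argument: Sion's theorem, with compactness on the $\nu$ side, interchanges $\max$ and $\min$, and the log-sum-exp (Gibbs) variational identity then evaluates the inner maximum as $\log Z_{\bm{\alpha},\bm{\beta},\bm{\theta}}$, attained at the Boltzmann distribution, with the absence of a duality gap read off from Sion. Your Step~3 is more careful than the paper, which writes $\min$ although Sion only gives an infimum, and reading~(a) is the right one: the face-restriction fallback cannot give attainment for the displayed dual, because any minimiser would satisfy $\nabla h=0$, making the full-support $\nu^\star$ feasible, so the infimum is genuinely not attained whenever no feasible $\nu$ has full support (for instance, when the feasible set is just $\{\nu_{\pi_E}\}$ with $\pi_E$ deterministic).
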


\iffalse
\begin{remark}
The representation above will be the starting point of the dual analysis. For
fixed $(\boldsymbol{\alpha},\boldsymbol{\beta},\boldsymbol{\theta})$, the function is concave in $\nu$, since $H(\nu)$
is concave and the remaining terms are linear in $\nu$. On the other hand, for
fixed $\nu$, it is affine, hence convex, in $(\boldsymbol{\alpha},\boldsymbol{\beta},\boldsymbol{\theta})$.
\end{remark}
The min--max representation above allows us to derive the dual problem
explicitly. The next theorem shows that the dual objective is obtained by
solving the inner maximisation over occupation measures in closed form.

\begin{theorem}\label{thm:linear_dual}
    The dual of $\optt$ is 
    \[
    \min\limits_{\substack{\boldsymbol{\alpha} \in \mathbb{R}^k \\ \boldsymbol{\beta}, \boldsymbol{\theta} \in \mathbb{R}^{\mathcal{X}}}} \bigg\{ \log \sum_{(x,a)\in\X \times \A}e^{\ell_{\boldsymbol{\alpha,\beta,\theta}}(x,a)} - \langle \boldsymbol{\alpha}, \, \langle \varphi\rangle_{\pi_E,\mu_E}  \rangle - \sum_{x \in \X} \boldsymbol{\theta_x} \mu_E(x) \bigg \}
    \]
    where
    \[
    \ell_{\boldsymbol{\alpha,\beta,\theta}}(x,a) \coloneqq \log(\mu_E(x)) + \langle\boldsymbol{\alpha}, \, \varphi(x,a,\mu_E)\rangle + \boldsymbol{\theta_x} + \sum_{z\in\X} \boldsymbol{\beta_z}(p(z \; |\; x,a,\mu_E) - \mu_E(z)) .
    \]
\end{theorem}
\fi

\begin{proof}
By Proposition~\ref{prop:minmax-linear}, $\optt$ has the displayed min--max representation. Since $\mathcal{P}(\X \times \A)$ is compact and convex and the Lagrangian is concave in $\nu$ and affine in $(\bm{\alpha}, \bm{\beta}, \bm{\theta})$, Sion's minimax theorem~\citep{Sion1958} permits the interchange of $\max$ and $\min$:
\begin{align*}
\optt &= \min_{\substack{\boldsymbol{\alpha} \in \R^k\\ \boldsymbol{\beta},\boldsymbol{\theta} \in \R^\X}} \bigg\{
\max_{\nu \in \mathcal{P}(\X \times \A)} \Big[ H(\nu) + \sum_{(x, a)} \ell_{\bm{\alpha}, \bm{\beta}, \bm{\theta}}(x, a)\, \nu(x, a) \Big] \\
&\hspace{4.20cm}
- \langle \bm{\alpha}, \langle \varphi \rangle_{\pi_E, \mu_E} \rangle
- \sum_{x} \bm{\theta}_x\, \mu_E(x) \bigg\}.
\end{align*}
The variational identity \citep[Prop.~1.4.2]{PaulDupuis1997}
\[
\log \sum_{t \in T} e^{\ell(t)}
= \max_{\nu \in \mathcal{P}(T)} \left[ H(\nu) + \sum_{t \in T} \ell(t)\, \nu(t) \right]
\]
evaluates the inner maximum to $\log \sum_{(x, a)} e^{\ell_{\bm{\alpha}, \bm{\beta}, \bm{\theta}}(x, a)}$, which gives the displayed dual. The same identity shows that the maximiser is the Boltzmann distribution $\nu^\star$ in the statement. The absence of a duality gap is the conclusion of Sion's theorem.
\end{proof}

Theorem~\ref{thm:linear_dual} reduces the linear inverse problem to a finite-dimensional dual optimisation
problem. For convenience, we denote the dual objective by
\begin{equation}\label{def:linear_dual_h}
h(\boldsymbol{\alpha},\boldsymbol{\beta},\boldsymbol{\theta})
\coloneqq
\log \sum_{(x,a)\in \X\times\A} e^{\ell_{\boldsymbol{\alpha},\boldsymbol{\beta},\boldsymbol{\theta}}(x,a)}
-\big\langle \boldsymbol{\alpha},\langle \varphi\rangle_{\pi_E,\mu_E}\big\rangle
-\sum_{x\in\X}\boldsymbol{\theta}_x\,\mu_E(x).
\end{equation}
The regularity of $h$ is the subject of the next subsection.

\subsection{Regularity of the Dual Objective}\label{sec:linear-regularity}

We now establish smoothness and strong convexity of $h$. Together, these two properties justify gradient descent with a constant step size.

\begin{theorem}[Lipschitz smoothness of $h$]\label{thm:linear_smoothness}
The function $h$ is $L$-smooth with
\[
L \coloneqq 6 M^2 \sqrt{|\X| \cdot |\A|}, \qquad M \coloneqq \max\{M_{\bm{\alpha}}, M_{\bm{\beta}}, M_{\bm{\theta}}\},
\]
where
\[
M_{\bm{\alpha}} \coloneqq \sup_{(x, a)} \|\varphi(x, a, \mu_E)\|, \quad
M_{\bm{\beta}} \coloneqq \sup_{(x, a)} \|p(\cdot \mid x, a, \mu_E) - \mu_E(\cdot)\|, \quad
M_{\bm{\theta}} \coloneqq \sup_{x} \|e_x\|,
\]
and $e_x(y) \coloneqq \1_{\{x = y\}}$. Each of these constants is finite by the finiteness of $\X \times \A$.
\end{theorem}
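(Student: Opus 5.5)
Since $h$ is a log-sum-exp composed with an affine map, plus linear terms, the plan is to bound its Hessian in operator norm and read off the Lipschitz constant of $\nabla h$. Collect the variables as $w \coloneqq (\bm{\alpha},\bm{\beta},\bm{\theta}) \in \R^{k+2|\X|}$ and note that
\[
\ell_w(x,a) = \log\mu_E(x) + \langle w, \Psi(x,a)\rangle,
\qquad
\Psi(x,a) \coloneqq \bigl(\varphi(x,a,\mu_E),\; p(\cdot\mid x,a,\mu_E)-\mu_E(\cdot),\; e_x\bigr),
\]
so $\ell_w$ is affine in $w$. The last two terms of $h$ in \eqref{def:linear_dual_h} are linear in $w$ and contribute nothing to the Hessian. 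Differentiating the log-partition term gives
\[
\nabla h(w) = \E_{\nu^\star_w}[\Psi] - \bigl(\langle\varphi\rangle_{\pi_E,\mu_E},\,0,\,\mu_E\bigr),
\qquad
\nabla^2 h(w) = \cov_{\nu^\star_w}(\Psi).
\]
Here $\nu^\star_w$ is the Boltzmann distribution of Theorem~\ref{thm:linear_dual}. This reduces the claim to a uniform bound $\|\cov_{\nu}(\Psi)\|_{\mathrm{op}} \le L$ over all $\nu \in \mathcal P(\X\times\A)$.

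For the covariance bound, write $\cov_\nu(\Psi) = \sum_{(x,a)} \nu(x,a)\,\Psi(x,a)\Psi(x,a)^\top - \bar\Psi\bar\Psi^\top$, where $\bar\Psi \coloneqq \E_\nu[\Psi]$. Since the covariance is positive semidefinite, its operator norm is at most that of the second-moment term, which in turn is at most $\sup_{(x,a)}\|\Psi(x,a)\|^2$. Splitting into the three blocks, $\|\Psi(x,a)\|^2 \le M_{\bm\alpha}^2+M_{\bm\beta}^2+M_{\bm\theta}^2 \le 3M^2$. This already gives $L$-smoothness with constant $3M^2$, which is at most $6M^2\sqrt{|\X||\A|}$ since $|\X||\A|\ge1$.

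The stated constant is presumably what a cruder route produces. That route bounds the Jacobian of $w\mapsto \E_{\nu^\star_w}[\Psi]$ directly: use $\|\E_\nu\Psi - \E_{\nu'}\Psi\| \le \sqrt{3}M\|\nu-\nu'\|_1$, then the softmax Lipschitz bound $\|\nu^\star_w-\nu^\star_{w'}\|_1 \le \sqrt{|\X||\A|}\,\|\nu^\star_w-\nu^\star_{w'}\|_2$, with $\|\nu^\star_w-\nu^\star_{w'}\|_2 \lesssim \|\ell_w-\ell_{w'}\|_\infty \le \sqrt3 M\|w-w'\|$. Cross terms in such a block-by-block estimate give the factor $6$. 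If the paper's constant must be matched exactly, I would follow this route. Otherwise the covariance argument is cleaner and yields a sharper bound that still implies the statement.

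The main obstacle is bookkeeping rather than substance. The block norms of $\Psi$ must use the same norms as in the definition of $M$, and the cross-block entries of the Hessian, for instance between $\bm\alpha$ and $\bm\beta$, must be bounded consistently. The safest way to handle both is to bound the full operator norm via $\sup\|\Psi\|^2$ at once, rather than block by block.
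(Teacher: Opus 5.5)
Your proof is correct, but it follows a different route from the paper's. The paper never forms the Hessian. It differentiates the Boltzmann weights coordinatewise, $\nabla_\gamma\nu^\star(x,a)=\nu^\star(x,a)\bigl(\nabla_\gamma\ell(x,a)-\langle\nabla_\gamma\ell\rangle_{\nu^\star}\bigr)$, and bounds each block by $2M$ after discarding the factor $\nu^\star(x,a)\le 1$. It then applies the mean value theorem to each scalar $\nu^\star(x,a)$ to get $|\Delta\nu^\star(x,a)|\le 2\sqrt{3}M\|\Delta w\|$, and transfers this bound block by block to $\nabla h$. The factor $\sqrt{|\X||\A|}$ comes from turning the coordinatewise bound into an $\ell_2$ norm over $\X\times\A$. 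The factor $6=2\sqrt{3}\cdot\sqrt{3}$ comes from recombining the three blocks in Euclidean norm, not from cross terms as you guessed.

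Your identity $\nabla^2 h=\cov_{\nu^\star_w}(\Psi)$, together with $0\preceq\cov\preceq\E[\Psi\Psi^\top]$, handles all blocks, including the off-diagonal ones, in one step. It gives the dimension-free constant $3M^2\le 6M^2\sqrt{|\X||\A|}$, which implies the statement. It would also permit a larger constant step size in Algorithm~\ref{alg:linear-gd}.

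Your route also avoids the one loose step in the paper's chain. The paper bounds $\bigl\|\sum_{(x,a)}\Delta\nu^\star(x,a)\,\nabla_\gamma\ell(x,a)\bigr\|$ by $M_\gamma\|\Delta\nu^\star\|_2$. For general vectors of norm at most $M_\gamma$, only $M_\gamma\|\Delta\nu^\star\|_1$ is justified. As written, the coordinatewise argument therefore really yields a factor $|\X||\A|$ rather than $\sqrt{|\X||\A|}$. The exception is if one keeps the weight $\nu^\star(x,a)$ in the derivative bound, which again gives a dimension-free constant. Your covariance argument is thus a clean, rigorous certificate for the stated $L$.
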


\begin{proof}
We first compute the partial gradients of \(h\):
\begin{align*}
\nabla_{\boldsymbol{\alpha}}h(\boldsymbol{\alpha},\boldsymbol{\beta},\boldsymbol{\theta})
&=
\sum_{(x,a)\in\X\times\A}
\nu^\star(x,a)\varphi(x,a,\mu_E)
-
\langle\varphi\rangle_{\pi_E,\mu_E},\\
\nabla_{\boldsymbol{\beta}}h(\boldsymbol{\alpha},\boldsymbol{\beta},\boldsymbol{\theta})
&=
\sum_{(x,a)\in\X\times\A}
\nu^\star(x,a)
\big(p(\cdot\mid x,a,\mu_E)-\mu_E(\cdot)\big),\\
\nabla_{\boldsymbol{\theta}}h(\boldsymbol{\alpha},\boldsymbol{\beta},\boldsymbol{\theta})
&=
\sum_{(x,a)\in\X\times\A}
\nu^\star(x,a)e_x(\cdot)
-
\mu_E(\cdot),
\end{align*}
where \(e_x(y)\coloneqq \1_{\{x=y\}}\). Thus, the dependence of
\(\nabla h\) on the parameters enters through the Boltzmann distribution
\(\nu^\star\). For
\(\gamma\in\{\boldsymbol{\alpha},\boldsymbol{\beta},\boldsymbol{\theta}\}\),
differentiating \(\nu^\star\) gives
\[
\nabla_{\gamma}\nu^\star(x,a)
=
\nu^\star(x,a)
\left(
\nabla_{\gamma}\ell_{\boldsymbol{\alpha},\boldsymbol{\beta},\boldsymbol{\theta}}(x,a)
-
\left\langle
\nabla_{\gamma}\ell_{\boldsymbol{\alpha},\boldsymbol{\beta},\boldsymbol{\theta}}
\right\rangle_{\nu^\star}
\right).
\]

\noindent Notice that
\begin{align*}
\sup_{(x,a)\in\X\times\A}
\left\|
\nabla_{\boldsymbol{\alpha}}
\ell_{\boldsymbol{\alpha},\boldsymbol{\beta},\boldsymbol{\theta}}(x,a)
\right\|
&=
\sup_{(x,a)\in\X\times\A}
\|\varphi(x,a,\mu_E)\|
=:M_{\boldsymbol{\alpha}}<\infty,\\
\sup_{(x,a)\in\X\times\A}
\left\|
\nabla_{\boldsymbol{\beta}}
\ell_{\boldsymbol{\alpha},\boldsymbol{\beta},\boldsymbol{\theta}}(x,a)
\right\|
&=
\sup_{(x,a)\in\X\times\A}
\|p(\cdot\mid x,a,\mu_E)-\mu_E(\cdot)\|
=:M_{\boldsymbol{\beta}}<\infty,\\
\sup_{x\in\X}
\left\|
\nabla_{\boldsymbol{\theta}}
\ell_{\boldsymbol{\alpha},\boldsymbol{\beta},\boldsymbol{\theta}}(x,a)
\right\|
&=
\sup_{x\in\X}
\|e_x\|
=:M_{\boldsymbol{\theta}}<\infty.
\end{align*}
Let $M\coloneqq
\max\{M_{\boldsymbol{\alpha}},M_{\boldsymbol{\beta}},M_{\boldsymbol{\theta}}\}$. Since \(0\leq \nu^\star(x,a)\leq 1\), the previous display implies
\[
\|\nabla_{\gamma}\nu^\star(x,a)\|
\leq
\left\|
\nabla_{\gamma}\ell_{\boldsymbol{\alpha},\boldsymbol{\beta},\boldsymbol{\theta}}(x,a)
\right\|
+
\left\|
\left\langle
\nabla_{\gamma}\ell_{\boldsymbol{\alpha},\boldsymbol{\beta},\boldsymbol{\theta}}
\right\rangle_{\nu^\star}
\right\|
\leq 2M.
\]
Combining the three parameter blocks, the full parameter derivative of
\(\nu^\star(x,a)\) is bounded by \(2\sqrt{3}M\). Hence, by the Mean Value
Theorem, for every \((x,a)\in\X\times\A\),
\[
|\nu^\star(x,a)-\nu^{\star\prime}(x,a)|
\leq
2\sqrt{3}M
\left\|
\Delta(\boldsymbol{\alpha},\boldsymbol{\beta},\boldsymbol{\theta})
\right\|,
\]
where \(\Delta(\cdot)\) denotes the difference between two parameter points.

We now transfer this bound to the gradient. For
\(\gamma\in\{\boldsymbol{\alpha},\boldsymbol{\beta},\boldsymbol{\theta}\}\),
the constant terms in \(\nabla_\gamma h\) cancel when taking differences, and
we obtain
\begin{align*}
\left\|
\Delta\big(
\nabla_{\gamma}h(\boldsymbol{\alpha},\boldsymbol{\beta},\boldsymbol{\theta})
\big)
\right\|
&\leq
M_{\gamma}
\sqrt{
\sum_{(x,a)\in\X\times\A}
\big(
\nu^\star(x,a)-\nu^{\star\prime}(x,a)
\big)^2
}\\
&\leq
2\sqrt{3}M_{\gamma}M\sqrt{|\X||\A|}
\left\|
\Delta(\boldsymbol{\alpha},\boldsymbol{\beta},\boldsymbol{\theta})
\right\|\\
&\leq
2\sqrt{3}M^2\sqrt{|\X||\A|}
\left\|
\Delta(\boldsymbol{\alpha},\boldsymbol{\beta},\boldsymbol{\theta})
\right\|.
\end{align*}
Combining the three gradient blocks gives
\[
\left\|
\Delta\big(
\nabla h(\boldsymbol{\alpha},\boldsymbol{\beta},\boldsymbol{\theta})
\big)
\right\|
\leq
6M^2\sqrt{|\X||\A|}
\left\|
\Delta(\boldsymbol{\alpha},\boldsymbol{\beta},\boldsymbol{\theta})
\right\|.
\]
Therefore \(h\) is \(L\)-smooth with $L\coloneqq 6M^2\sqrt{|\X||\A|}$.
\end{proof}

We next turn to the strong-convexity properties of the dual objective. Unlike smoothness, strong convexity cannot hold on the full parameter space because the dual variables contain constant-shift invariances. Let $\mathbf 1\in\R^\X$ denote the vector whose every component is equal to one, and define
\[
\R^\X_0
\coloneqq
\{v\in\R^\X:\langle v,\mathbf 1\rangle=0\}.
\]
For any $c\in\R$, adding $c\mathbf 1$ to $\boldsymbol{\beta}$ leaves the score unchanged, since
\[
\left\langle
\boldsymbol{\beta}+c\mathbf 1,
p(\cdot\mid x,a,\mu_E)-\mu_E(\cdot)
\right\rangle
=
\left\langle
\boldsymbol{\beta},
p(\cdot\mid x,a,\mu_E)-\mu_E(\cdot)
\right\rangle,
\]
because both $p(\cdot\mid x,a,\mu_E)$ and $\mu_E$ have total mass one.
Similarly, replacing $\boldsymbol{\theta}$ by
$\boldsymbol{\theta}+c\mathbf 1$ adds the constant $c$ to the log-partition term,
while the linear term changes by
\[
-c\sum_{x\in\X}\mu_E(x)=-c.
\]
Hence these two contributions cancel. Therefore,
\[
h(\boldsymbol{\alpha},\boldsymbol{\beta}+c\mathbf 1,\boldsymbol{\theta})
=
h(\boldsymbol{\alpha},\boldsymbol{\beta},\boldsymbol{\theta}),
\qquad
h(\boldsymbol{\alpha},\boldsymbol{\beta},\boldsymbol{\theta}+c\mathbf 1)
=
h(\boldsymbol{\alpha},\boldsymbol{\beta},\boldsymbol{\theta}).
\]
Thus, the full parameter space contains directions along which $h$ is constant. We therefore carry out the strong-convexity analysis on the gauge-fixed space $\R^k\times\R^\X_0\times\R^\X_0$.

This restriction only removes the constant-shift ambiguity in
$\boldsymbol{\beta}$ and $\boldsymbol{\theta}$; it does not change the Boltzmann occupation measure $\nu^\star$ or the recovered policy. On this space, we prove a compact-set strong-convexity result under a non-degeneracy condition on the corresponding feature-transition-state-indicator difference vectors.

\begin{theorem}[Strong convexity of $h$]\label{thm:linear_constant_step}
Suppose that, for some fixed $(x_0,a_0)\in\X\times\A$, the set
{\small
\[
\Big\{
\big(
\varphi(x,a,\mu_E)-\varphi(x_0,a_0,\mu_E),
p(\cdot\mid x,a,\mu_E)-p(\cdot\mid x_0,a_0,\mu_E),
e_x-e_{x_0}
\big)
: (x,a)\in\X\times\A
\Big\}
\]}
spans $\R^k\times\R^\X_0\times\R^\X_0$, where $e_x\in\R^\X$ is defined by
$e_x(y)\coloneqq \1_{\{x=y\}}$ for each $y\in\X$.

Then, for every compact subset
$D\subset\R^k\times\R^\X_0\times\R^\X_0$, the function $h$ is
$\rho(D)$-strongly convex on $D$ with respect to the variables $(\boldsymbol{\alpha},\boldsymbol{\beta},\boldsymbol{\theta})
\in
\R^k\times\R^\X_0\times\R^\X_0$, where $\rho(D)>0$.
\end{theorem}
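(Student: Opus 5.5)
The plan is to compute the Hessian of $h$ explicitly, show it is a covariance matrix under the Boltzmann distribution $\nu^\star$, prove that this covariance is positive definite on $\R^k\times\R^\X_0\times\R^\X_0$ pointwise, and then pass to a uniform bound on $D$ by compactness.

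\textbf{Hessian as a covariance.} Since $\ell_{\bm{\alpha},\bm{\beta},\bm{\theta}}(x,a)$ is affine in $w\coloneqq(\bm{\alpha},\bm{\beta},\bm{\theta})$, we can write $\ell_w(x,a)=c(x,a)+\langle w,\psi(x,a)\rangle$, where
\[
\psi(x,a)\coloneqq\big(\varphi(x,a,\mu_E),\,p(\cdot\mid x,a,\mu_E)-\mu_E,\,e_x\big)
\]
and $c(x,a)=\log\mu_E(x)$. The linear terms in $h$ have zero Hessian. The log-partition function of an exponential family has Hessian equal to the covariance of the sufficient statistic, so
\[
\nabla^2 h(w)=\mathrm{Cov}_{\nu^\star_w}\big(\psi\big)=\sum_{(x,a)}\nu^\star_w(x,a)\,\big(\psi(x,a)-\bar\psi_w\big)\big(\psi(x,a)-\bar\psi_w\big)^{\top},
\]
with $\bar\psi_w=\sum\nu^\star_w\psi$. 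This is consistent with the gradient formulas in the proof of Theorem~\ref{thm:linear_smoothness}. Hence for any direction $v=(v_1,v_2,v_3)\in\R^k\times\R^\X_0\times\R^\X_0$ we get $v^\top\nabla^2h(w)v=\mathrm{Var}_{\nu^\star_w}\big(\langle v,\psi\rangle\big)$.

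\textbf{Pointwise positive definiteness.} Take $v\neq 0$ in the gauge-fixed space and suppose the variance vanishes. The support of $\nu^\star_w$ consists of all $(x,a)$ with $\mu_E(x)>0$; I will assume $\mu_E$ has full support, since otherwise $\log\mu_E(x)=-\infty$ and those pairs drop out. Under this assumption, vanishing variance means $\langle v,\psi(x,a)\rangle$ is constant in $(x,a)$. Then
\[
\langle v,\psi(x,a)-\psi(x_0,a_0)\rangle=0\quad\text{for all }(x,a).
\]
The subtraction of $\mu_E$ in $\psi$ cancels in these differences, so they are exactly the vectors in the spanning hypothesis. Since those vectors span $\R^k\times\R^\X_0\times\R^\X_0$ and $v$ lies in that space, $v=0$, a contradiction. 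Thus the Hessian restricted to the subspace is positive definite, with smallest eigenvalue
\[
\lambda(w)\coloneqq\min_{\|v\|=1,\ v\in\R^k\times\R^\X_0\times\R^\X_0} v^\top\nabla^2h(w)v>0.
\]

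\textbf{Uniformity over $D$.} The map $w\mapsto\nabla^2h(w)$ is continuous because $\nu^\star_w$ is smooth in $w$. Minimising a continuous function over the compact unit sphere of the subspace gives that $\lambda$ is continuous, being an infimum of a jointly continuous function over a compact set. Hence $\rho(D)\coloneqq\min_{w\in \mathrm{conv}(D)}\lambda(w)>0$. I use the convex hull of $D$, which is still compact and contained in the subspace, so that segments between points of $D$ stay inside it. A second-order Taylor expansion along the segment $[w,w']$ then gives
\[
h(w')\ge h(w)+\langle\nabla h(w),w'-w\rangle+\tfrac{\rho(D)}{2}\|w'-w\|^2,
\]
which is the claimed $\rho(D)$-strong convexity.

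\textbf{Main obstacle.} The delicate step is the pointwise argument: correctly identifying the kernel of the covariance with the orthogonal complement of the span of the difference vectors. This requires handling the support of $\nu^\star$ when $\mu_E$ vanishes somewhere, and checking that the gauge directions $\mathbf 1$ are exactly what is excluded by working in $\R^\X_0$. I expect the compactness step to be routine. The bound $\rho(D)$ necessarily degrades as $D$ grows, because $\nu^\star$ can concentrate, which is why only a compact-set statement is claimed.
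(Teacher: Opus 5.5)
Your proposal is correct and follows the paper's proof essentially step for step. The Hessian of $h$ is the $\nu^\star$-covariance of the sufficient statistic, the spanning hypothesis rules out null directions in $\R^k\times\R^\X_0\times\R^\X_0$, and compactness together with continuity of the smallest restricted eigenvalue gives $\rho(D)>0$.

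Your two extra precautions are sound and tighten points the paper passes over. First, the paper tacitly uses full support of $\mu_E$ when it asserts that $\nu^\star$ charges every pair. That assumption is genuinely needed, since otherwise $h$ is constant along some nonzero $\boldsymbol{\theta}$-direction in $\R^\X_0$. Second, minimising over $\mathrm{conv}(D)$ keeps the Taylor segments inside the region where the Hessian bound holds.
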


\begin{remark}
The spanning hypothesis is a non-degeneracy condition on the gauge-fixed parameter space: it asks that the joint feature-transition-state-indicator difference vectors, relative to the reference pair $(x_0,a_0)$, span the identifiable space $\R^k\times\R^\X_0\times\R^\X_0$. The restriction to $\R^k\times\R^\X_0\times\R^\X_0$ removes the constant-shift invariances in the $\boldsymbol{\beta}$ and $\boldsymbol{\theta}$ variables. Under the hypothesis, the Hessian of $h$ restricted to this space is uniformly positive definite on any compact set, with a strong-convexity modulus equal to the minimum eigenvalue of the restricted Hessian over that set.
\end{remark}
\begin{proof}
Our aim is to establish the strong convexity of \(h\) on
\(\R^k\times\R^\X_0\times\R^\X_0\). It suffices to show that the Hessian of
\(h\), restricted to this space, is uniformly positive definite on compact
subsets \citep[Sect.~9.1.2]{BoLi04}.

We first compute the Hessian blocks. From the expression of the Boltzmann
occupation measure \(\nu^\star\), we have
\[
\nabla_\gamma \nu^\star(x,a)
=
\nu^\star(x,a)
\left(
\nabla_\gamma \ell_{\boldsymbol{\alpha},\boldsymbol{\beta},\boldsymbol{\theta}}(x,a)
-
\left\langle
\nabla_\gamma \ell_{\boldsymbol{\alpha},\boldsymbol{\beta},\boldsymbol{\theta}}
\right\rangle_{\nu^\star}
\right),
\qquad
\gamma\in\{\boldsymbol{\alpha},\boldsymbol{\beta},\boldsymbol{\theta}\}.
\]
Therefore,
\begin{align*}
\nabla_{\boldsymbol{\alpha},\boldsymbol{\alpha}}^2 h(\cdot)
&=
\operatorname{Cov}_{(x,a)\sim\nu^\star}
\big(\varphi(x,a,\mu_E)\big),\\
\nabla_{\boldsymbol{\beta},\boldsymbol{\beta}}^2 h(\cdot)
&=
\operatorname{Cov}_{(x,a)\sim\nu^\star}
\big(p(\cdot\mid x,a,\mu_E)\big),\\
\nabla_{\boldsymbol{\theta},\boldsymbol{\theta}}^2 h(\cdot)
&=
\operatorname{Cov}_{(x,a)\sim\nu^\star}
\big(e_x\big).
\end{align*}
Similarly, the cross terms are
\begin{align*}
\nabla_{\boldsymbol{\alpha},\boldsymbol{\beta}}^2 h(\cdot)
&=
\operatorname{Cov}_{(x,a)\sim\nu^\star}
\big(p(\cdot\mid x,a,\mu_E),\varphi(x,a,\mu_E)\big),\\
\nabla_{\boldsymbol{\alpha},\boldsymbol{\theta}}^2 h(\cdot)
&=
\operatorname{Cov}_{(x,a)\sim\nu^\star}
\big(e_x,\varphi(x,a,\mu_E)\big),\\
\nabla_{\boldsymbol{\beta},\boldsymbol{\theta}}^2 h(\cdot)
&=
\operatorname{Cov}_{(x,a)\sim\nu^\star}
\big(e_x,p(\cdot\mid x,a,\mu_E)\big).
\end{align*}
Here we use that subtracting the deterministic vector \(\mu_E(\cdot)\) does not
change covariances. Define
\[
\textbf{X}(x,a)
\coloneqq
\big(
\varphi(x,a,\mu_E),\,
p(\cdot\mid x,a,\mu_E),\,
e_x
\big).
\]
Then, as quadratic forms on
\(\R^k\times\R^\X_0\times\R^\X_0\), we have
\begin{equation}
\operatorname{Hes}_{\R^k\times\R^\X_0\times\R^\X_0}(h)
(\boldsymbol{\alpha},\boldsymbol{\beta},\boldsymbol{\theta})
=
\operatorname{Cov}(\textbf{X}).
\label{hes=cov}
\end{equation}

We now show that this restricted covariance is positive definite. Suppose, for
contradiction, that it is not. Then there exists a nonzero vector
\[
\boldsymbol{v}\in \R^k\times\R^\X_0\times\R^\X_0
\]
such that
\[
\left\langle
\boldsymbol{v},
\operatorname{Cov}(\textbf{X})\boldsymbol{v}
\right\rangle
=0.
\]
Equivalently,
\[
0
=
\operatorname{Var}_{\nu^\star}
\left(
\left\langle \boldsymbol{v},\textbf{X}(x,a)\right\rangle
\right).
\]
Hence there exists \(c\in\R\) such that
\[
\left\langle \boldsymbol{v},\textbf{X}(x,a)\right\rangle=c
\]
for \(\nu^\star\)-almost every \((x,a)\). Since \(\nu^\star\) is a Boltzmann
distribution, it assigns positive probability to every state-action pair in
\(\X\times\A\). Therefore,
\[
\left\langle \boldsymbol{v},\textbf{X}(x,a)\right\rangle=c,
\qquad
\forall (x,a)\in\X\times\A.
\]
In particular, for the fixed reference pair \((x_0,a_0)\),
\[
\left\langle
\boldsymbol{v},
\textbf{X}(x,a)-\textbf{X}(x_0,a_0)
\right\rangle
=0,
\qquad
\forall (x,a)\in\X\times\A.
\]
By the spanning assumption, the vectors
\[
\textbf{X}(x,a)-\textbf{X}(x_0,a_0),
\qquad (x,a)\in\X\times\A,
\]
span \(\R^k\times\R^\X_0\times\R^\X_0\). Thus
\(\boldsymbol{v}\) is orthogonal to all of
\(\R^k\times\R^\X_0\times\R^\X_0\). Since
\(\boldsymbol{v}\) itself belongs to this space, we obtain
\(\boldsymbol{v}=0\), contradicting the choice of \(\boldsymbol{v}\). Therefore,
\(\operatorname{Cov}(\textbf{X})\) is positive definite on
\(\R^k\times\R^\X_0\times\R^\X_0\).

For each parameter value
\((\boldsymbol{\alpha},\boldsymbol{\beta},\boldsymbol{\theta})\), let
\(\lambda_{\min}(\boldsymbol{\alpha},\boldsymbol{\beta},\boldsymbol{\theta})\)
denote the smallest eigenvalue of the restricted covariance operator on
\(\R^k\times\R^\X_0\times\R^\X_0\). The map $(\boldsymbol{\alpha},\boldsymbol{\beta},\boldsymbol{\theta})
\mapsto
\lambda_{\min}(\boldsymbol{\alpha},\boldsymbol{\beta},\boldsymbol{\theta})$ is continuous, because \(\nu^\star\), and hence \(\operatorname{Cov}(\textbf{X})\), depends continuously on the parameters.
Therefore, for every compact subset \(D\subset\R^k\times\R^\X_0\times\R^\X_0\), the minimum
\[
\rho(D)
\coloneqq
\min_{(\boldsymbol{\alpha},\boldsymbol{\beta},\boldsymbol{\theta})\in D}
\lambda_{\min}(\boldsymbol{\alpha},\boldsymbol{\beta},\boldsymbol{\theta})
\]
is attained. Since the restricted covariance is positive definite at every
point, we have \(\rho(D)>0\). Hence, for all
\((\boldsymbol{\alpha},\boldsymbol{\beta},\boldsymbol{\theta})\in D\),
\[
\operatorname{Hes}_{\R^k\times\R^\X_0\times\R^\X_0}(h)
(\boldsymbol{\alpha},\boldsymbol{\beta},\boldsymbol{\theta})
\succeq
\rho(D) I.
\]
Thus the restriction of \(h\) to
\(\R^k\times\R^\X_0\times\R^\X_0\) is \(\rho(D)\)-strongly convex on \(D\).
\end{proof}

The smoothness and strong convexity properties established above justify the application of gradient descent with a constant step size to the dual problem. The resulting procedure is summarised in the algorithm below.

\begin{algorithm}[H]
\caption{Gradient Descent for the Linear Dual}
\label{alg:linear-gd}
\begin{algorithmic}[1]
\REQUIRE Initial parameters
$(\boldsymbol{\alpha}_0,\boldsymbol{\beta}_0,\boldsymbol{\theta}_0)
\in\R^k\times\R^\X_0\times\R^\X_0$, step size
$\delta\in(0,1/L]$, number of iterations $K$
\STATE $(\bm{\alpha}, \bm{\beta}, \bm{\theta}) \leftarrow (\bm{\alpha}_0, \bm{\beta}_0, \bm{\theta}_0)$
\FOR{$k = 0, 1, \ldots, K - 1$}
    \STATE $(\bm{\alpha}, \bm{\beta}, \bm{\theta}) \leftarrow (\bm{\alpha}, \bm{\beta}, \bm{\theta}) - \delta\, \nabla h(\bm{\alpha}, \bm{\beta}, \bm{\theta})$
\ENDFOR
\STATE Compute $\nu^\star_{\bm{\alpha}, \bm{\beta}, \bm{\theta}}$ at the final parameters.
\RETURN $(\bm{\alpha}, \bm{\beta}, \bm{\theta})$ and $\nu^\star_{\bm{\alpha}, \bm{\beta}, \bm{\theta}}$.
\end{algorithmic}
\end{algorithm}

\begin{lemma}[Invariance of the gauge-fixed parameter space]
\label{lem:gauge_fixed_invariance}
Consider the gradient descent iteration
\[
(\bm{\alpha}_{n+1},\bm{\beta}_{n+1},\bm{\theta}_{n+1})
=
(\bm{\alpha}_{n},\bm{\beta}_{n},\bm{\theta}_{n})
-
\delta
\nabla h(\bm{\alpha}_{n},\bm{\beta}_{n},\bm{\theta}_{n}),
\]
where $\delta>0$. If $(\bm{\alpha}_{0},\bm{\beta}_{0},\bm{\theta}_{0})
\in
\R^k\times\R^\X_0\times\R^\X_0,$ then
\[
(\bm{\alpha}_{n},\bm{\beta}_{n},\bm{\theta}_{n})
\in
\R^k\times\R^\X_0\times\R^\X_0
\]
for every $n\geq 0$.
\end{lemma}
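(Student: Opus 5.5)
We argue by induction on $n$. The base case is the hypothesis. For the inductive step it suffices to show that, at every parameter point, the $\bm{\beta}$- and $\bm{\theta}$-blocks of $\nabla h$ lie in $\R^\X_0$. Then
\[
\langle \bm{\beta}_{n+1},\mathbf 1\rangle=\langle \bm{\beta}_{n},\mathbf 1\rangle-\delta\langle \nabla_{\bm{\beta}}h,\mathbf 1\rangle=0,
\]
and the same holds for $\bm{\theta}$. The $\bm{\alpha}$-block imposes no constraint, since it lives in $\R^k$.

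The key step is to pair the explicit partial gradients from the proof of Theorem~\ref{thm:linear_smoothness} with $\mathbf 1$. For the $\bm{\beta}$-block,
\[
\langle \nabla_{\bm{\beta}}h,\mathbf 1\rangle=\sum_{(x,a)}\nu^\star(x,a)\Big(\sum_{z}p(z\mid x,a,\mu_E)-\sum_z\mu_E(z)\Big)=0,
\]
because both $p(\cdot\mid x,a,\mu_E)$ and $\mu_E$ are probability vectors. For the $\bm{\theta}$-block,
\[
\langle \nabla_{\bm{\theta}}h,\mathbf 1\rangle=\sum_{(x,a)}\nu^\star(x,a)\langle e_x,\mathbf 1\rangle-\sum_x\mu_E(x)=\sum_{(x,a)}\nu^\star(x,a)-1=0,
\]
since the Boltzmann distribution $\nu^\star$ of Theorem~\ref{thm:linear_dual} has total mass one.

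As a cross-check, the same conclusion follows without the explicit formulas. The shift invariances $h(\bm{\alpha},\bm{\beta}+c\mathbf 1,\bm{\theta})=h(\bm{\alpha},\bm{\beta},\bm{\theta})$ and $h(\bm{\alpha},\bm{\beta},\bm{\theta}+c\mathbf 1)=h(\bm{\alpha},\bm{\beta},\bm{\theta})$ were established just before Theorem~\ref{thm:linear_constant_step}. Differentiating each identity in $c$ at $c=0$ gives $\langle\nabla_{\bm{\beta}}h,\mathbf 1\rangle=\langle\nabla_{\bm{\theta}}h,\mathbf 1\rangle=0$ directly.

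There is no real obstacle here. The only point needing care is that the gradient is taken in the ambient space $\R^k\times\R^\X\times\R^\X$ rather than projected onto the subspace. The computation above shows that the unprojected gradient already lies in $\R^k\times\R^\X_0\times\R^\X_0$, so no projection is needed and the iterates stay in the gauge-fixed space.
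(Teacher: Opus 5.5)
Your proposal is correct and follows the paper's proof essentially verbatim: you pair the explicit $\bm{\beta}$- and $\bm{\theta}$-gradient blocks with $\mathbf 1$ to show that they lie in $\R^\X_0$, then induct using that $\R^\X_0$ is a linear subspace. Your cross-check, differentiating the shift invariances of $h$ in $c$ at $c=0$, is also valid; the paper does not use it, and the argument does not need it.
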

Lemma \ref{lem:gauge_fixed_invariance} ensures that gradient descent is compatible with the gauge-fixed formulation used in Theorem~\ref{thm:linear_constant_step}. In particular, once
the algorithm is initialised in $\R^k\times\R^\X_0\times\R^\X_0$, the iterates never leave this space, so the strong-convexity estimate can be applied along the whole trajectory. The proof is
elementary: the $\boldsymbol{\beta}$- and $\boldsymbol{\theta}$-components of $\nabla h$ have zero inner product with $\mathbf 1$, and hence belong to $\R^\X_0$; since $\R^\X_0$ is a linear subspace, the claim follows by induction. The details are given in Appendix \ref{app:linear-proofs}.

\iffalse
\begin{algorithm}[H]
\caption{Gradient Descent for Parameter Optimisation}
\label{GD}
\begin{algorithmic}[1]
    \REQUIRE Initial parameters $(\boldsymbol{\alpha}_0,\boldsymbol{\beta}_0,\boldsymbol{\theta}_0)$, step size $\delta > 0$, number of iterations $K$
    \STATE Set $(\boldsymbol{\alpha},\boldsymbol{\beta},\boldsymbol{\theta}) \leftarrow (\boldsymbol{\alpha}_0,\boldsymbol{\beta}_0,\boldsymbol{\theta}_0)$
    \FOR{$k=0,1,\dots,K-1$} 
        \STATE $(\boldsymbol{\alpha},\boldsymbol{\beta},\boldsymbol{\theta}) \leftarrow (\boldsymbol{\alpha},\boldsymbol{\beta},\boldsymbol{\theta}) - \delta\, \nabla h(\boldsymbol{\alpha},\boldsymbol{\beta},\boldsymbol{\theta})$
    \ENDFOR
    \STATE Compute $\nu^\star_{\boldsymbol{\alpha},\boldsymbol{\beta},\boldsymbol{\theta}}$ from the final parameters.
    \RETURN $(\boldsymbol{\alpha},\boldsymbol{\beta},\boldsymbol{\theta})$ and $\nu^\star_{\boldsymbol{\alpha},\boldsymbol{\beta},\boldsymbol{\theta}}$
\end{algorithmic}
\end{algorithm}

\fi

\paragraph*{Convergence rate.}
Let $z_\star \coloneqq (\bm{\alpha}_\star, \bm{\beta}_\star, \bm{\theta}_\star) \in \R^k\times\R^\X_0\times\R^\X_0$ be the unique minimiser of $h$ on $\R^k\times\R^\X_0\times\R^\X_0$, and let $z_0 \coloneqq (\bm{\alpha}_0, \bm{\beta}_0, \bm{\theta}_0)$. Define the compact set
\[
D \coloneqq \big\{ z \in \R^k\times\R^\X_0\times\R^\X_0 : \|z - z_\star\| \leq \|z_0 - z_\star\| \big\},
\]
and let $\rho(D) > 0$ be the strong-convexity constant of $h$ on $D$ from Theorem~\ref{thm:linear_constant_step}. By the standard analysis of gradient descent under smoothness and strong convexity \citep[Theorem~3.6]{GaGo24}, for any $\delta \in (0, 1/L]$ the iterates $\{z_k\}_{k \geq 0}$ converge linearly to $z_\star$:
\[
\|z_k - z_\star\|^2 \leq (1 - \delta\, \rho(D))^k\, \|z_0 - z_\star\|^2, \qquad k \geq 0.
\]
For $\delta = 1/L$, the contraction factor is $1 - \rho(D)/L$.

\paragraph*{Convergence of the recovered occupation measure.}
Since each coordinate $\nu^\star_{\bm{\alpha}, \bm{\beta}, \bm{\theta}}(x, a)$ is $2\sqrt{3}M$-Lipschitz in $(\bm{\alpha}, \bm{\beta}, \bm{\theta})$ by Theorem~\ref{thm:linear_smoothness}, the parameter convergence rate transfers to the occupation-measure level:
\[
\big\| \nu^\star_{\bm{\alpha}_k, \bm{\beta}_k, \bm{\theta}_k}
       - \nu^\star_{\bm{\alpha}_\star, \bm{\beta}_\star, \bm{\theta}_\star} \big\|
\leq 2\sqrt{3}M\sqrt{|\X| \cdot |\A|}\, (1 - \delta\, \rho(D))^{k/2}\, \|z_0 - z_\star\|.
\]

\paragraph*{Recovery of the optimal policy.}
At the dual minimiser $z_\star$, the optimal occupation measure of $\optt$ is the Boltzmann distribution
\[
\nu^\star_{\bm{\alpha}_\star, \bm{\beta}_\star, \bm{\theta}_\star}(x, a)
= \frac{e^{\ell_{\bm{\alpha}_\star, \bm{\beta}_\star, \bm{\theta}_\star}(x, a)}}
       {\sum_{(x, a) \in \X \times \A} e^{\ell_{\bm{\alpha}_\star, \bm{\beta}_\star, \bm{\theta}_\star}(x, a)}}.
\]
The corresponding policy, obtained via Remark~\ref{rem:linear-correspondence}, is
\[
\pi_{\nu^\star_{\bm{\alpha}_\star, \bm{\beta}_\star, \bm{\theta}_\star}}(a \mid x)
= \frac{\nu^\star_{\bm{\alpha}_\star, \bm{\beta}_\star, \bm{\theta}_\star}(x, a)}
       {\nu^{\star, \X}_{\bm{\alpha}_\star, \bm{\beta}_\star, \bm{\theta}_\star}(x)}.
\]
By Theorem~\ref{thm:linear_correspondence}, this policy is the optimiser of the original maximum causal entropy problem $\opt$; by Proposition~\ref{prop:linear-mfe}, the pair $(\pi_{\nu^\star_{\bm{\alpha}_\star, \bm{\beta}_\star, \bm{\theta}_\star}}, \mu_E)$ is a mean-field equilibrium under the recovered reward.

\section{Reproducing Kernel Hilbert Space Reward Model}\label{sec:rkhs}

We now specialise the optimisation problem $\opt$ from Section~\ref{sect:MaxEnt} to a functional reward model based on reproducing kernel Hilbert spaces. In contrast to the finite-dimensional linear case of Section~\ref{sect:linear_reward}, the reward is no longer restricted to a finite-dimensional feature family; instead, it is assumed to belong to a reproducing kernel Hilbert space (RKHS), giving a flexible non-parametric representation while preserving the analytical structure required for first-order optimisation.

Let $\sZ \coloneqq \X \times \A \times \mathcal{P}(\X)$
denote the domain of the reward function. We fix a symmetric, measurable, positive semidefinite kernel $k : \sZ \times \sZ \to \R$, and let $\mathcal{H}_k \subset \R^{\sZ}$ denote the associated RKHS, equipped with inner product $\langle \cdot, \cdot \rangle_{\mathcal{H}_k}$.\footnote{An RKHS is a Hilbert space of functions on $\sZ$ in which every point-evaluation map $f \mapsto f(z)$ is continuous; this is equivalent to the existence of a reproducing kernel $k(\cdot, z)$. See \citet{PaVe16} for an accessible introduction.} We specialise the generic feature map of Section~\ref{sect:IRL} to the canonical RKHS feature
\[
\varphi : \sZ \to \mathcal{H}_k, \qquad \varphi(z) \coloneqq k(\cdot, z),
\]
which satisfies the following standard properties.

\iffalse
To formalize this reward model, we first introduce the underlying RKHS structure. Let
\[
\sZ \coloneqq \X \times \A \times \mathcal{P}(\X)
\]
denote the domain of the reward function. We fix a positive semidefinite kernel
\[
k : \sZ \times \sZ \to \mathbb{R},
\]
symmetric and measurable, and let $\mathcal{H}_k \subset \mathbb{R}^{\sZ}$
denote the associated reproducing kernel Hilbert space, equipped with inner
product $\langle \cdot,\cdot\rangle_{\mathcal{H}_k}$.\footnote{%
An RKHS is a Hilbert space of functions on $\sZ$ in which every point-evaluation
map $f\mapsto f(z)$ is continuous; this is equivalent to the existence of a
reproducing kernel $k(\cdot,z)$. We refer the reader to \citet{PaVe16} for an accessible introduction to RKHS theory.}
\fi

%In this section, we specialise the generic feature map introduced in Section~\ref{sect:IRL} to the canonical RKHS feature map \[ \varphi : \sZ \to \mathcal{H}_k, \qquad \varphi(z)\coloneqqk(\cdot,z). \]

\begin{lemma}\label{lem:rkhs-properties}
The canonical feature map $\varphi(z) = k(\cdot, z)$ satisfies:
\begin{enumerate}[label=(\roman*), leftmargin=2.5em]
\item \emph{Reproducing property:} $f(z) = \langle f, \varphi(z) \rangle_{\mathcal{H}_k}$ for every $f \in \mathcal{H}_k$ and $z \in \sZ$;
\item \emph{Kernel as inner product:} $k(z, z') = \langle \varphi(z), \varphi(z') \rangle_{\mathcal{H}_k}$ for every $z, z' \in \sZ$;
\item \emph{Density of the feature span:} $\mathcal{H}_k = \overline{\operatorname{span}} \{ \varphi(z) : z \in \sZ \}$.
\end{enumerate}
\end{lemma}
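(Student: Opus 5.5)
The plan is to rely on the Moore--Aronszajn construction of $\mathcal{H}_k$, or equivalently on its defining property that every point evaluation $f \mapsto f(z)$ is a continuous linear functional on $\mathcal{H}_k$. By the Riesz representation theorem, for each $z \in \sZ$ there is a unique element $k_z \in \mathcal{H}_k$ with $f(z) = \langle f, k_z\rangle_{\mathcal{H}_k}$ for all $f$. This element is the reproducing kernel section, and we identify it with $k(\cdot,z) = \varphi(z)$. For the identification, evaluate $k_z$ at a point $z'$:
\[
k_z(z') = \langle k_z, k_{z'}\rangle_{\mathcal{H}_k}.
\]
This is exactly the kernel associated with $\mathcal{H}_k$ through the Moore--Aronszajn correspondence. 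Symmetry and positive semidefiniteness of $k$ guarantee that this correspondence is a bijection, and that gives (i).

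For (ii), apply (i) to $f = \varphi(z') = k(\cdot,z') \in \mathcal{H}_k$. This yields
\[
k(z,z') = \varphi(z')(z) = \langle \varphi(z'), \varphi(z)\rangle_{\mathcal{H}_k}.
\]
Symmetry of the real inner product then gives the stated order.

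For (iii), let $V \coloneqq \overline{\operatorname{span}}\{\varphi(z): z \in \sZ\}$, which is a closed subspace of $\mathcal{H}_k$. Take any $f \in V^\perp$. By (i), $f(z) = \langle f,\varphi(z)\rangle_{\mathcal{H}_k} = 0$ for every $z$. Since elements of $\mathcal{H}_k$ are functions on $\sZ$, this means $f = 0$ as an element of $\mathcal{H}_k$. Hence $V^\perp = \{0\}$, and the orthogonal decomposition $\mathcal{H}_k = V \oplus V^\perp$ gives $V = \mathcal{H}_k$.

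The only delicate point is the first step: showing that the Riesz representer of evaluation really coincides with the prescribed kernel section $k(\cdot,z)$, which requires $\mathcal{H}_k$ to be \emph{the} RKHS of $k$. I would handle this by citing the Moore--Aronszajn theorem (e.g.\ \citet{PaVe16}). In that construction, $\mathcal{H}_k$ is the completion of $\operatorname{span}\{k(\cdot,z)\}$ under the inner product $\langle k(\cdot,z),k(\cdot,z')\rangle = k(z',z)$. With this construction, (i) holds on finite linear combinations by bilinearity, and it extends to the completion by continuity. Parts (ii) and (iii) then follow as above, and density in (iii) is also immediate from the construction itself.
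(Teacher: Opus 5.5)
Your proof is correct. It matches the paper's treatment: the paper gives no separate proof and regards (i)--(iii) as standard consequences of $\mathcal{H}_k$ being the RKHS of $k$, deferring to the RKHS reference in its footnote, and your derivation is exactly that standard textbook route. You get (i) from the Moore--Aronszajn construction, (ii) by applying (i) to $f=\varphi(z')$, and (iii) from $V^\perp=\{0\}$. The Riesz-representer step in your first paragraph is redundant, since (i) is simply what it means for $k$ to be the reproducing kernel of $\mathcal{H}_k$, as your final paragraph recognises.
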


%Below, we summarise some basic properties of the canonical RKHS. These results are standard and can be found in most references on RKHS theory \citep[see, e.g.,][]{PaVe16}. 

%\begin{enumerate}[label=(\roman*), leftmargin=2.5em]
%\item[ ] \emph{\textbf{Reproducing property:}} For every $f\in\mathcal{H}_k$, $f(z)=\langle f,\varphi(z)\rangle_{\mathcal{H}_k},
%\qquad \forall z\in \sZ.$
%\item[ ] \emph{\textbf{Kernel as inner product:}} For every $z,z'\in \sZ$, $k(z,z')=\langle \varphi(z),\varphi(z')\rangle_{\mathcal{H}_k}.$

%\item[ ] \emph{\textbf{Density of the feature span:}} The linear span of the feature vectors is dense in $\mathcal{H}_k$:
%\[ \mathcal{H}_k=\overline{\operatorname{span}}\{\varphi(z):z\in \sZ\}. \]
%\end{enumerate} }

The kernel representation of the inner product yields a convenient formula for finite linear combinations of feature vectors. For \(n,m\in\mathbb N\), points
\(z_1,\ldots,z_n,y_1,\ldots,y_m\in \mathcal Z\), and coefficients
\(c_1,\ldots,c_n,d_1,\ldots,d_m\in\mathbb R\), bilinearity together with
the second property yields
\[
\left\langle
\sum_{i=1}^{n}c_i\varphi(z_i),
\sum_{j=1}^{m}d_j\varphi(y_j)
\right\rangle_{\mathcal{H}_k}
=
\sum_{i=1}^{n}\sum_{j=1}^{m}c_i d_j\,k(z_i,y_j).
\]
This identity will be used repeatedly in the following analysis.

To guarantee well-posedness, we restrict the reward class by replacing the linear assumption (Assumption~\ref{ass:linear_reward}) with the requirement that the reward resides in the aforementioned RKHS. In what follows, we leverage this functional framework to analyse the inverse problem.
 
\begin{assumption}[RKHS reward model]\label{ass:rkhs_reward}
The unknown reward function belongs to $\mathcal{H}_k$, i.e.\ $r \in \mathcal{H}_k$ and $r(z) = \langle r, \varphi(z) \rangle_{\mathcal{H}_k}$ for all $z \in \sZ$.
\end{assumption}

%To make the IRL problem well posed, we now restrict the class of admissible reward functions. In particular, we no longer work under Assumption~\ref{ass:linear_reward} from the previous section, where the reward was assumed to belong to a finite-dimensional linear family. Instead, throughout the remainder of this section, we replace Assumption~\ref{ass:linear_reward} with the following functional requirement: the reward belongs to the RKHS introduced above.

\begin{remark}[On realisability]\label{rem:rkhs-realizability}
Assumption~\ref{ass:rkhs_reward} is the RKHS analogue of the linear-realisability hypothesis in Assumption~\ref{ass:linear_reward}: it asserts that the expert reward lies in the chosen function class. The hypothesis is essential for the equilibrium-recovery statement of Proposition~\ref{prop:rkhs-mfe}. If $r_E \notin \mathcal{H}_k$, the algorithm still solves the projected problem: it returns the reward in $\mathcal{H}_k$ whose long-run feature expectation under $\pi_E$ coincides with that of $r_E$. A formal analysis of misspecified rewards is beyond the scope of this paper.
\end{remark}

\begin{remark}[Linear rewards as a special case]\label{rem:rkhs-linear-embedding}
If $r \in \operatorname{span}\{\varphi(z_1), \ldots, \varphi(z_k)\}$, then there exist $\alpha_1, \ldots, \alpha_k \in \R$ with $r = \sum_{i=1}^k \alpha_i\, \varphi(z_i)$. By the reproducing property,
\[
r(z) = \sum_{i=1}^k \alpha_i\, k(z, z_i) = \langle \bm{\alpha}, \xi(z) \rangle_{\R^k},
\quad \text{where } \quad \xi(z) \coloneqq (k(z, z_1), \ldots, k(z, z_k)) \in \R^k.
\]
Hence the finite-dimensional linear reward model of Section~\ref{sect:linear_reward} embeds into the RKHS model as the span of finitely many canonical features.
\end{remark}

%Under Assumption~\ref{ass:rkhs_reward}, the expert feature expectation vector $\langle \varphi\rangle_{\pi_E,\mu_E}$ takes values in $\mathcal H_k$, and the original problem $\opt$ becomes a maximum causal entropy problem with a possibly infinite-dimensional constraint. 

\iffalse

The following observation makes explicit how finite-dimensional linear rewards can be represented within the RKHS model.
\begin{remark}
If $r \in \operatorname{span}\{\varphi(z_1),\dots,\varphi(z_k)\}$, then there exist coefficients $\alpha_1,\dots,\alpha_k\in\mathbb R$ such that
\[
r(\cdot)=\sum_{i=1}^k \alpha_i\,\varphi(z_i).
\]
Hence, by the reproducing property,
\[
r(z)=\sum_{i=1}^k \alpha_i 
\langle \varphi(z),\varphi(z_i)\rangle_{\mathcal H_k} = \sum_{i=1}^k \alpha_i k(z,z_i)  
= \left\langle {\bf \alpha}, \big(k(z,z_1),\ldots,k(z,z_k)\big) \right\rangle_{\R^k}.
\]
\end{remark}

\fi

Under Assumption~\ref{ass:rkhs_reward}, the expert feature expectation vector $\langle \varphi \rangle_{\pi_E, \mu_E}$ takes values in $\mathcal{H}_k$.\footnote{The Cesàro average defining $\langle \varphi \rangle_{\pi_E, \mu_E}$ is a Bochner integral of $\mathcal{H}_k$-valued random elements; it is well-defined because $\varphi(x, a, \mu_E) = k(\cdot, (x, a, \mu_E))$ has bounded $\mathcal{H}_k$-norm on the finite set $\X \times \A$, and the relevant evaluations lie in the finite-dimensional subspace $\operatorname{span}\{\varphi(x, a, \mu_E) : (x, a) \in \X \times \A\} \subset \mathcal{H}_k$} The problem $\opt$ becomes a maximum causal entropy problem with a possibly infinite-dimensional constraint. We next show that, as in the linear case, an optimal solution of $\opt$ induces an MFE.

%We next {\color{blue} state} that, exactly as in the finite-dimensional linear case, an optimal solution of the original maximum causal entropy problem $\opt$ induces a mean-field equilibrium under the RKHS reward model.

\begin{proposition}\label{prop:rkhs-mfe}
Suppose Assumption~\ref{ass:rkhs_reward} holds and let $\pi^\star$ be an optimal solution of $\opt$. Then $(\pi^\star, \mu_E)$ is a mean-field equilibrium.
\end{proposition}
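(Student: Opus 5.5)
The plan mirrors the proof of Proposition~\ref{prop:linear-mfe}, with the Euclidean inner product replaced by the $\mathcal{H}_k$ inner product and the finite sum replaced by a Bochner average. I would verify the two conditions of Definition~\ref{def:MFE} separately.

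\emph{Invariance.} Since $\pi^\star$ is feasible for $\opt$, its first constraint is exactly the statement $\mu_E \in \Lambda(\pi^\star)$. No further work is needed here.

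\emph{Optimality.} Let $r_E \in \mathcal{H}_k$ be the expert's reward, as guaranteed by Assumption~\ref{ass:rkhs_reward}. Because $(\pi_E,\mu_E)$ is an MFE, we have $\mathcal{J}_{\mu_E}(\pi_E,\mu_E) = \sup_{\pi}\mathcal{J}_{\mu_E}(\pi,\mu_E)$. It therefore suffices to show
\[
\mathcal{J}_{\mu_E}(\pi,\mu_E) = \langle r_E, \langle\varphi\rangle_{\pi,\mu_E}\rangle_{\mathcal{H}_k}
\]
for every stationary $\pi$, in particular for $\pi^\star$ and $\pi_E$. The feature-matching constraint then gives
\[
\mathcal{J}_{\mu_E}(\pi^\star,\mu_E) = \langle r_E,\langle\varphi\rangle_{\pi_E,\mu_E}\rangle_{\mathcal{H}_k} = \mathcal{J}_{\mu_E}(\pi_E,\mu_E),
\]
so $\pi^\star \in \Phi(\mu_E)$.

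To establish the displayed identity, I would proceed in three steps.
\begin{enumerate}
\item Under Assumption~\ref{ass:ergodicity}, the $\liminf$ in \eqref{eq:cum-reward} is a limit, with $x_0\sim\mu_E$ as in Definition~\ref{def:feature_exp_vec}.
\item By the reproducing property in Lemma~\ref{lem:rkhs-properties}(i), $r_E(x_t,a_t,\mu_E) = \langle r_E,\varphi(x_t,a_t,\mu_E)\rangle_{\mathcal{H}_k}$.
\item Both the expectation and the Cesàro limit can be moved inside the continuous linear functional $\langle r_E,\cdot\rangle_{\mathcal{H}_k}$.
\end{enumerate}

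Step 3 is the only point requiring care, since the averages are now $\mathcal{H}_k$-valued. The cleanest route is Lemma~\ref{lem:feature-exp}, which gives $\langle\varphi\rangle_{\pi,\mu_E} = \sum_{(x,a)}\varphi(x,a,\mu_E)\,\nu_\pi(x,a)$, a finite sum in $\mathcal{H}_k$. Similarly,
\[
\mathcal{J}_{\mu_E}(\pi,\mu_E) = \sum_{(x,a)} r_E(x,a,\mu_E)\,\nu_\pi(x,a),
\]
which follows from the same argument applied to the scalar function $r_E$, using Definition~\ref{def:occupation_measures}. Linearity of the inner product over this finite sum, together with the reproducing property, then yields the identity without invoking any Bochner-integral interchange. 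This is the main (and still mild) obstacle. It also explains why the footnote's observation that everything lives in the finite-dimensional span of $\{\varphi(x,a,\mu_E)\}$ matters.
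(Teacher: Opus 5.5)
Your proposal is correct and follows the paper's proof essentially step for step: invariance comes from the first constraint of $\opt$, then the identity $\J_{\mu_E}(\pi,\mu_E) = \langle r_E, \langle\varphi\rangle_{\pi,\mu_E}\rangle_{\mathcal{H}_k}$ via the reproducing property, then feature matching, and finally the optimality of $\pi_E$. The only difference is that you justify moving the inner product through the Ces\`aro average by using the finite-sum representation of Lemma~\ref{lem:feature-exp}; the paper asserts this interchange in one line, with its footnote as support, so your version is a slightly more careful rendering of the same argument rather than a different route.
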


%\begin{proposition}\label{prop:rkhs-mfe}
%Suppose Assumption~\ref{ass:rkhs_reward} holds, and let $\pi^\star$ be anoptimal solution of the original maximum causal entropy problem $\opt$. Then the pair $(\pi^\star,\mu_E)$ is a mean-field equilibrium.
%\end{proposition}

\begin{proof}
Since $\pi^\star$ is feasible for $\opt$, the first constraint of $\opt$ gives $\mu_E \in \Lambda(\pi^\star)$.
 
Let $r_E \in \mathcal{H}_k$ denote the true expert reward. By the reproducing property, for every $(x, a) \in \X \times \A$,
\[
r_E(x, a, \mu_E) = \langle r_E, \varphi(x, a, \mu_E) \rangle_{\mathcal{H}_k}.
\]
Hence, for any stationary policy $\pi$,
\[
\J_{\mu_E}(\pi, \mu_E)
= \lim_{T \to \infty} \frac{1}{T} \sum_{t=0}^{T-1} \E^{\pi, \mu_E} \big[ r_E(x_t, a_t, \mu_E) \big]
= \big\langle r_E,\, \langle \varphi \rangle_{\pi, \mu_E} \big\rangle_{\mathcal{H}_k}.
\]
Since $\pi^\star$ is feasible for $\opt$, its feature expectation matches the expert's: $\langle \varphi \rangle_{\pi^\star, \mu_E} = \langle \varphi \rangle_{\pi_E, \mu_E}$. Therefore,
\[
\J_{\mu_E}(\pi^\star, \mu_E)
= \big\langle r_E,\, \langle \varphi \rangle_{\pi_E, \mu_E} \big\rangle_{\mathcal{H}_k}
= \J_{\mu_E}(\pi_E, \mu_E).
\]
Since $(\pi_E, \mu_E)$ is a mean-field equilibrium, $\pi_E \in \Phi(\mu_E)$, i.e., $\J_{\mu_E}(\pi_E, \mu_E) = \sup_{\pi \in \Pi} \J_{\mu_E}(\pi, \mu_E)$. Combined with the previous identity, this gives $\pi^\star \in \Phi(\mu_E)$. Together with $\mu_E \in \Lambda(\pi^\star)$, this proves the claim.
\end{proof}

Proposition~\ref{prop:rkhs-mfe} shows that $\opt$ retains its equilibrium interpretation under the RKHS reward model. In the remainder of this section, we analyse $\opt$ through a Lagrangian relaxation and the associated soft Bellman equations.

%{\color{blue}
%Proposition~\ref{prop:rkhs-mfe} shows that the optimisation problem $\opt$ retains its equilibrium interpretation under the RKHS reward model. In the remainder of this section, we analyse this optimisation problem through a dual relaxation and the associated soft Bellman equations.
%}

\subsection{Maximum Causal Entropy Formulation and Dual Relaxation}\label{sec:rkhs-formulation}

 It is convenient to rewrite $\opt$ in a form where the invariance of the expert mean-field term is expressed directly through long-run state frequencies. This yields the equivalent problem
\[
\begin{array}{ll}
\opttH \quad \underset{\pi \in \mathcal{P}(\A \mid \X)}{\text{maximise}} & H(\pi) \\[0.4em]
\phantom{\opttH \quad} \text{subject to}
& \displaystyle \lim_{T \to \infty} \frac{1}{T} \sum_{t=0}^{T-1} \E^{\pi, \mu_E} \left[ \1_{\{x_t = x\}} \right] = \mu_E(x), \quad \forall x \in \X, \\[0.7em]
& \displaystyle \lim_{T \to \infty} \frac{1}{T} \sum_{t=0}^{T-1} \E^{\pi, \mu_E} \left[ \varphi(x_t, a_t, \mu_E) \right] = \langle \varphi \rangle_{\pi_E, \mu_E}.
\end{array}
\]

\iffalse
For the analysis that follows, it is convenient to rewrite the original maximum
causal entropy formulation $\opt$ in a form where the invariance of the expert
mean-field term is expressed directly through long-run state frequencies. This leads to the following equivalent problem.

\[
\begin{array}{lll}
\optH \,\, \underset{\pi \in \mathcal{P}(\A \mid \X)}{\text{maximise}} \text{ } &H(\pi) 
\\
\phantom{\mathbf{(OPT_1)} \,\, } \text{subject to}  & \lim_{T\to \infty}\frac{1}{T}\sum_{t=0}^{T-1} \E^{\pi,\mu_E} \left[ \1_{ \{ x_t= x \} } \right] = \mu_E(x) \,\, \forall x \in \X \\
 \phantom{x} & \lim_{T \to \infty} \frac{1}{T} \sum_{t=0}^{T-1} \, \E^{\pi,\mu_E}[\varphi(x_t,a_t,\mu_E)] = \langle \varphi \rangle_{\pi_E,\mu_E}
\end{array}
\]

\fi

%The next result shows that the reformulated problem is equivalent to the original maximum causal entropy formulation under the RKHS reward model.
\begin{proposition}\label{prop:RKHS_equivilance}
$\opt$ and $\optH$ are equivalent.
\end{proposition}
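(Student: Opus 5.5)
Both problems maximise the same objective $H(\pi)$ over the same policy class $\Pi=\mathcal{P}(\A\mid\X)$, and both have the same feature-matching constraint. So the plan is to show that their feasible sets coincide. It then suffices to prove, for each $\pi\in\Pi$, the equivalence
\[
\mu_E(x)=\sum_{(y,a)} p(x\mid y,a,\mu_E)\,\pi(a\mid y)\,\mu_E(y)\ \ \forall x
\quad\Longleftrightarrow\quad
\nu_\pi^\X(x)=\mu_E(x)\ \ \forall x ,
\]
where the right-hand side is just the first constraint of $\optH$ written via Definition~\ref{def:occupation_measures}, because $\1_{\{x_t=x\}}=\sum_a \1_{\{(x_t,a_t)=(x,a)\}}$ and finite sums commute with the limit.

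For ($\Rightarrow$) I would invoke Lemma~\ref{lem:muE_nu_marginal}. Its proof, as I read it, only uses the invariance constraint: $\mu_E$ is then invariant for the chain $P_{\pi,\mu_E}(x'\mid x)=\sum_a\pi(a\mid x)p(x'\mid x,a,\mu_E)$, and the chain is started at $x_0\sim\mu_E$. Hence $\mathbb{P}^{\pi,\mu_E}(x_t=x)=\mu_E(x)$ for every $t$, and the Cesàro average equals $\mu_E(x)$.

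For ($\Leftarrow$) I would combine Lemma~\ref{lem:flow} with Lemma~\ref{lem:occ-factorization}. These give
\[
\nu_\pi^\X(x)=\sum_{(y,a)}p(x\mid y,a,\mu_E)\,\pi(a\mid y)\,\nu_\pi^\X(y).
\]
Substituting $\nu_\pi^\X=\mu_E$ on both sides yields exactly the invariance constraint of $\opt$. An alternative route is uniqueness: Assumption~\ref{ass:ergodicity} gives a unique invariant distribution of $P_{\pi,\mu_E}$, and both $\nu^\X_\pi$ and $\mu_E$ satisfy the balance equation. The direct substitution is simpler, though.

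With the feasible sets equal and the objectives identical, optimal values and optimisers coincide, which proves the proposition. The only delicate point is ($\Rightarrow$), namely justifying that time-$t$ marginals stay equal to $\mu_E$. This relies on the initial law in Definition~\ref{def:feature_exp_vec} being $x_0\sim\mu_E$. Even without that, ergodicity (Assumption~\ref{ass:ergodicity}) makes the Cesàro limit independent of the initial law and equal to the unique invariant distribution, which is $\mu_E$. I would note this in one line.
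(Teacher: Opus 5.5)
Your proof is correct and essentially the paper's: it likewise reduces to the equivalence of the first constraints, and it proves the converse by the same Ces\`aro telescoping that underlies Lemma~\ref{lem:flow}, which you cite together with Lemma~\ref{lem:occ-factorization} instead of redoing it. The only difference is in the forward direction: the paper invokes the ergodic theorem under Assumption~\ref{ass:ergodicity}, while you use exact stationarity of the marginals from $x_0\sim\mu_E$. Your uniqueness remark is really an alternative for this forward direction, not for the converse.
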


 The two formulations share the same objective and feature-matching constraint, so equivalence reduces to showing that the first constraint of $\opt$ (invariance) is equivalent to the first constraint of $\opttH$ (matching the Cesàro state averages to $\mu_E$). The forward direction follows from Assumption~\ref{ass:ergodicity} and the ergodic theorem; the converse follows from a Cesàro telescoping argument. The full proof is given in Appendix~\ref{app:proof-rkhs-equivalence}.

We now analyse $\opttH$ through a Lagrangian relaxation of its constraints. Introduce the dual parameter
\[
\vartheta \coloneqq (\zeta,\psi) \in \mathbb{R}^{\X}\times \mathcal{H}_k,
\]
where $\zeta$ corresponds to the state-frequency constraint and $\psi$ to the feature-matching constraint. The Lagrangian relaxation of $\optH$ is therefore given by
\begin{align*}
    \mathcal{D}(\vartheta)  \coloneqq \max_{\pi \in \mathcal{P}(\A \mid \X)}  \Bigg\{& H(\pi)  + \Big\langle \zeta, \lim_{T\to \infty}\frac{1}{T} \sum_{t=0}^{T-1} \E^{\pi,\mu_E} \left[ \1_{ \{ x_t= \cdot \} } \right] - \mu_E( \cdot) \Big\rangle_{\R^\X} \\
    & + \Big\langle \psi, \lim_{T \to \infty} \frac{1}{T} \sum_{t=0}^{T-1} \, \E^{\pi,\mu_E}[\varphi(x_t,a_t,\mu_E)] - \langle \varphi \rangle_{\pi_E,\mu_E}  \Big\rangle_{\mathcal{H}_k} \Bigg\}
\end{align*}
Let us denote the term inside the $\max$ as $\mathcal{L}(\pi,\vartheta)$. Then, the dual function is 
$$\displaystyle \mathcal{D}(\vartheta) = \max_{\pi \in \mathcal{P}(\A \mid \X)} \mathcal{L}(\pi,\vartheta).$$ 
For fixed $\vartheta$, the terms $\langle \zeta,\mu_E\rangle_{\mathbb{R}^{\X}}$
and $\langle \psi,\langle \varphi\rangle_{\pi_E,\mu_E}\rangle_{\mathcal H_k}$ do not depend
on $\pi$ and therefore do not affect the maximiser. 

Consequently, the policy
that maximises $\mathcal{L}(\pi,\vartheta)$ is equivalently characterised by the reduced
optimisation problem below.
{\small
\begin{align*}
\underset{\pi\in\mathcal{P}(\A\mid \X)}{\arg\max} \ \mathcal{L}(\pi, \vartheta) 
=\underset{\pi\in\mathcal{P}(\A\mid \X)}{\arg\max}
\bigg\{ 
H(\pi)
& +\Big\langle \zeta,\ \lim_{T\to\infty}\frac{1}{T}\sum_{t=0}^{T-1}
\E^{\pi,\mu_E}\!\big[ \1\{x_t=\cdot\}\big]\Big\rangle_{\R^\X} \nonumber\\ &\quad
+\Big\langle \psi,\ \lim_{T\to\infty}\frac{1}{T}\sum_{t=0}^{T-1}
\E^{\pi,\mu_E}\!\big[\varphi(x_t,a_t,\mu_E)\big]\Big\rangle_{\mathcal H_k}
\bigg\}.
\end{align*}}
To characterise the maximising policy \(\pi_\vartheta\in\arg\max_{\pi}\mathcal{L}(\pi,\vartheta)\), it suffices to solve the following equivalent maximisation problem:
\begin{align*}
&\max_{\pi \in \mathcal{P}(\A \mid \X)} \Bigg\{  H(\pi) 
+ \Big\langle \zeta, \lim_{T\to \infty}\frac{1}{T} \sum_{t=0}^{T-1} \E^{\pi,\mu_E} \left[ \1_{ \{ x_t= \cdot \} } \right] \Big\rangle_{\R^\X} \\ &\hspace{5em} + \Big\langle \psi, \lim_{T \to \infty} \frac{1}{T} \sum_{t=0}^{T-1} \, \E^{\pi,\mu_E}[\varphi(x_t,a_t,\mu_E)]  \Big\rangle_{\mathcal{H}_k} \Bigg\} \\
&= \max_{\pi \in \mathcal{P}(\A \mid \X)} \Bigg\{  H(\pi) 
+ \limT \E^{\pi,\mu_E} \left[\zeta_{x_t}\right] + \limT \E^{\pi,\mu_E} \left[\psi(x_t,a_t,\mu_E) \right] \Bigg\}\\
&=\max_{\pi \in \mathcal{P}(\A \mid \X)} \Bigg\{  H(\pi) + \limT \E^{\pi,\mu_E}\left[\zeta_{x_t} +\psi(x_t,a_t,\mu_E) \right]\Bigg\},
\end{align*}
where $\zeta_{x_t}$ denotes the coordinate of $\zeta \in \R^\X$ evaluated at $x_t$, and the final term arises from the reproducing property of the RKHS inner product. 

Thus, for each fixed dual parameter $\vartheta$, the inner maximisation reduces to
an entropy-regularised average-reward control problem under the expert
mean-field term, with effective reward
\[
r_\vartheta(x,a,\mu)\coloneqq\zeta_x+\psi(x,a,\mu).
\]
We next characterise the maximiser of this relaxed control problem through soft
Bellman equations.

\subsection{Soft Bellman Equations and Differentiability}\label{sect:RKHS_Bellman_equations}

To obtain a strict contraction for the soft Bellman optimality operator in the average-reward setting, we introduce the following minorisation condition.

\begin{assumption}[Minorisation]\label{ass:minorization}
There exists a sub-probability measure \(\xi\) on \(\mathcal X\) such that, for all
\((x,a,\mu)\in\mathcal X\times\mathcal A\times\mathcal P(\mathcal X)\),
$$p(\cdot \mid x, a, \mu) \geq \xi(\cdot)$$ pointwise, with $0<\xi(\X) \coloneqq \sum_{y \in \X} \xi(y) < 1$.
\end{assumption}

\iffalse
\begin{assumption}\label{ass:minorization}
    For all $(x,a,\mu)\in\X\times\A\times\mathcal{P}(\mathcal{X})$ there exists a sub-probability measure $\xi(\cdot)$ such that
    \[
    p(\cdot\mid x,a,\mu) \geq \xi(\cdot),
    \]
    where $\xi(\mathcal{\X})\coloneqq \sum_{y\in\X} \xi(y) < 1$.
\end{assumption}
\fi

\noindent Under Assumption~\ref{ass:minorization}, define the sub-stochastic kernel $\Tilde{p}$ by
\begin{equation}\label{def:sub_prob_measure}
\Tilde{p}(y \mid x,a,\mu) \coloneqq p(y \mid x,a,\mu)-\xi(y).
\end{equation}
and set $\kappa \coloneqq 1 - \xi(\X) \in (0, 1)$. The kernel $\Tilde{p}$ has total mass $\sum_y \Tilde{p}(y \mid x, a, \mu) = \kappa$ uniformly in $(x, a, \mu)$. 

%\noindent and set $\kappa \coloneqq 1- \xi(\mathcal{\X}) \in(0,1)$. 

%{\color{blue} Moreover, the optimal policy of the relaxed problem admits the following representation via soft Bellman operators based on the sub-stochastic kernel $\tilde p$.} More precisely, for each fixed $\vartheta$, the maximising policy is characterised through the following soft Bellman system \citep{NeJoVi17}:5\begin{align*} Q_\vartheta(x,a) &\coloneqq r_\vartheta(x,a,\mu_E)  + \sum_{y \in \mathcal{X}} \Tilde{p}(y \mid x,a, \mu_E) \, V_{Q_\vartheta}(y), \\ V_Q(x)  &\coloneqq \log \Big( \sum_{b \in \mathcal{A}} \, e^{  Q(x,b)} \Big), \\ A^\vartheta(x,a) &\coloneqq Q_\vartheta(x,a) - V_{Q_\vartheta}(x). \end{align*}

\begin{proposition}[Soft Bellman characterisation; cf.\ \citealp{NeJoVi17}]\label{prop:soft-bellman}
For each fixed $\vartheta = (\zeta, \psi) \in \R^\X \times \mathcal{H}_k$, the inner maximiser $\pi_\vartheta \in \arg\max_\pi \mathcal{L}(\pi, \vartheta)$ admits the softmax representation
\begin{equation}\label{eq:policy_theta}
\pi_\vartheta(a \mid x) = \exp\big( A^\vartheta(x, a) \big),
\end{equation}
where $A^\vartheta(x, a) \coloneqq Q_\vartheta(x, a) - V_{Q_\vartheta}(x)$ and $Q_\vartheta \in \R^{\X \times \A}$ is the unique fixed point of the soft Bellman system
\begin{align}
Q_\vartheta(x, a) &= r_\vartheta(x, a, \mu_E) + \sum_{y \in \X} \Tilde{p}(y \mid x, a, \mu_E)\, V_{Q_\vartheta}(y), \label{eq:soft-Q} \\
V_Q(x) &= \log \sum_{b \in \A} e^{Q(x, b)}. \label{eq:soft-V}
\end{align}
\end{proposition}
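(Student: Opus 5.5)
The plan has two parts. First, show that the soft Bellman system \eqref{eq:soft-Q}--\eqref{eq:soft-V} has a unique solution $Q_\vartheta$, using the sub-stochastic kernel $\Tilde p$. Second, show that $Q_\vartheta$ satisfies an average-reward optimality (ACOE-type) equation for the entropy-regularised reward $r_\vartheta=\zeta_x+\psi(x,a,\mu_E)$. A verification argument over long-run averages then shows that the softmax policy \eqref{eq:policy_theta} attains $\max_\pi\mathcal L(\pi,\vartheta)$.

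\emph{Step 1 (contraction).} Define $T_\vartheta:\R^{\X\times\A}\to\R^{\X\times\A}$ by $(T_\vartheta Q)(x,a)=r_\vartheta(x,a,\mu_E)+\sum_y\Tilde p(y\mid x,a,\mu_E)V_Q(y)$. The value $r_\vartheta(x,a,\mu_E)=\zeta_x+\langle\psi,\varphi(x,a,\mu_E)\rangle_{\mathcal H_k}$ is finite, since $\X\times\A$ is finite. The log-sum-exp map is $1$-Lipschitz in the sup norm: $V_Q(x)-V_{Q'}(x)\le\max_b(Q-Q')(x,b)$, and symmetrically. Hence
\[
\|V_Q-V_{Q'}\|_\infty\le\|Q-Q'\|_\infty .
\]
Since $\Tilde p\ge0$ by Assumption~\ref{ass:minorization} and has total mass $\kappa<1$, we obtain $\|T_\vartheta Q-T_\vartheta Q'\|_\infty\le\kappa\|Q-Q'\|_\infty$. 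Banach's fixed-point theorem then gives existence and uniqueness of $Q_\vartheta$. Set $h\coloneqq V_{Q_\vartheta}$ and $\pi_\vartheta\coloneqq\exp(Q_\vartheta-h)$, which is a probability vector in $a$ for each $x$.

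\emph{Step 2 (average-reward equation).} Write $p=\Tilde p+\xi$ and define the constant $g\coloneqq\sum_y\xi(y)h(y)$. The fixed-point equation becomes
\[
Q_\vartheta(x,a)=r_\vartheta(x,a,\mu_E)-g+\sum_y p(y\mid x,a,\mu_E)h(y).
\]
Fix any $\pi\in\mathcal P(\A\mid\X)$ and a state $x$. The Gibbs variational identity (already used in Theorem~\ref{thm:linear_dual}) gives
\[
h(x)=\log\sum_b e^{Q_\vartheta(x,b)}\ge\sum_a\pi(a\mid x)\big(Q_\vartheta(x,a)-\log\pi(a\mid x)\big),
\]
with equality when $\pi(\cdot\mid x)=\pi_\vartheta(\cdot\mid x)$. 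Substituting the equation for $Q_\vartheta$ yields the pointwise inequality
\[
h(x)+g\ge\E_{a\sim\pi(\cdot\mid x)}\big[r_\vartheta(x,a,\mu_E)-\log\pi(a\mid x)\big]+(P_{\pi}h)(x),
\]
where $P_\pi$ is the state transition matrix under $\pi$ and $\mu_E$. Equality holds for $\pi_\vartheta$.

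\emph{Step 3 (verification).} Evaluate the inequality at $x_t$ and take $\E^{\pi,\mu_E}$, with $x_0\sim\mu_E$. Summing over $t=0,\dots,T-1$ telescopes the $h$ terms, leaving
\[
g\ge\frac1T\sum_{t=0}^{T-1}\E^{\pi,\mu_E}\big[r_\vartheta(x_t,a_t,\mu_E)-\log\pi(a_t\mid x_t)\big]+\frac{\E[h(x_T)]-\E[h(x_0)]}{T}.
\]
The last term is $O(\|h\|_\infty/T)$ and vanishes as $T\to\infty$. Under Assumption~\ref{ass:ergodicity} the Cesàro limits exist, and Definition~\ref{def:entropy} identifies the limit of the right-hand side with $H(\pi)+\lim_T\frac1T\sum_t\E^{\pi,\mu_E}[r_\vartheta]$. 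This is exactly the reduced objective obtained at the end of Section~\ref{sec:rkhs-formulation}, which differs from $\mathcal L(\pi,\vartheta)$ only by a $\pi$-independent constant. So the reduced objective is bounded by $g$ for every $\pi$, with equality for $\pi_\vartheta$, and therefore $\pi_\vartheta\in\arg\max_\pi\mathcal L(\pi,\vartheta)$.

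The main obstacle is Step~2. The discounted contraction argument does not by itself say anything about average-reward optimality. The key point is that the minorising mass $\xi$ absorbed into $\Tilde p$ reappears as the constant gain $g=\langle\xi,h\rangle$, so the $\Tilde p$-fixed point is a genuine solution of the average-reward soft optimality equation. A secondary care point is handling possibly zero-probability actions in $-\log\pi$; this is covered by the $0\log0=0$ convention and the fact that the Gibbs inequality holds on the support of $\pi$.
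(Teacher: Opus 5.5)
Your proof is correct. It rests on the same mechanism the paper invokes: splitting $p=\xi+\Tilde p$ turns the minorising mass into the $(x,a)$-independent scalar $\sum_y\xi(y)V_Q(y)$. The difference lies in what is proved and what is cited.

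The paper gives no self-contained optimality argument. It imports from \citet{NeJoVi17} the average-reward soft optimality equation, written with the original kernel $p$ and an additive gain constant $\rho$. Remark~\ref{rem:soft-Bellman-form} then absorbs the $\xi$-term into the representative of $Q$, using that the softmax policy depends only on the advantage. Existence and uniqueness of the $\Tilde p$-fixed point come separately from Theorem~\ref{thm:Q_contraction}.

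You run the implication the other way. From the contraction fixed point you recover the optimality equation with the explicit gain $g=\sum_y\xi(y)V_{Q_\vartheta}(y)$. You then prove optimality directly, via the Gibbs variational inequality and a telescoping verification that needs only boundedness of $V_{Q_\vartheta}$ and the Ces\`aro limits of Assumption~\ref{ass:ergodicity}.

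The paper's route is shorter but depends on the external result. Its constant-shift step also tacitly needs $\xi(\X)>0$, in order to solve $\xi(\X)\,c=\rho-\sum_y\xi(y)V_Q(y)$ for the shift $c$. Your route is self-contained, and it also identifies $g$ as the optimal value of the reduced objective $\max_\pi\{H(\pi)+\langle\vartheta,\sum_{(x,a)}\nu_\pi(x,a)f(x,a)\rangle\}$, where $f(x,a)=(e_x,\varphi(x,a,\mu_E))$. With that identification, the identity $\sum_{(x,a)}\nu_{\pi_\vartheta}(x,a)f(x,a)=\sum_y\xi(y)\nabla_\vartheta V_{Q_\vartheta}(y)$ used in the proof of Theorem~\ref{thm:rkhs-score-stationarity} becomes an envelope-theorem statement.

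One further point: the slack in your Gibbs inequality is exactly $\mathrm{KL}(\pi(\cdot\mid x)\,\|\,\pi_\vartheta(\cdot\mid x))$. So the same computation shows that any maximiser agrees with $\pi_\vartheta$ wherever $\nu_\pi^\X>0$. You would need this if ``the inner maximiser'' in the statement is read as asserting uniqueness; your Step~3 as written proves only that $\pi_\vartheta$ is a maximiser.
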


We encapsulate the right-hand side of~\eqref{eq:soft-Q} into the soft Bellman operator $\mathcal{T}^\vartheta$, defined as follows.

\begin{definition}[RKHS Bellman Operator]
\label{RKHS_operatorT}
For any $Q \in \mathbb{R}^{\mathcal{X} \times \mathcal{A}}$ and $\vartheta \in \mathbb{R}^\mathcal{X} \times \mathcal{H}_k$, the operator $\mathcal{T}^\vartheta: \mathbb{R}^{\mathcal{X} \times \mathcal{A}} \to \mathbb{R}^{\mathcal{X} \times \mathcal{A}}$ is given by
\begin{equation*}
\label{eq:RKHS_operatorT}
(\mathcal{T}^\vartheta Q)(x, a) \coloneqq r_\vartheta(x, a, \mu_E) + \sum_{y \in \mathcal{X}} \tilde{p}(y \mid x, a, \mu_E) V_Q(y)
\end{equation*}
for all $(x,a) \in \mathcal{X} \times \mathcal{A}$.
\end{definition}

To compute the optimal policy, one first needs to solve the soft Bellman system. In what follows, we show that the optimal soft $Q$-function can be obtained as the unique fixed point of a contraction mapping, namely the soft Bellman optimality operator.

\begin{remark}\label{rem:soft-Bellman-form}
The standard soft Bellman characterisation in \citet{NeJoVi17} uses the original kernel $p$ together with an additive constant representing the average-reward gain. Under the decomposition $p = \xi + \Tilde{p}$, the $\xi$-contribution to $\sum_y p(y \mid x, a, \mu_E)\, V_Q(y)$ is the scalar $\sum_y \xi(y)\, V_Q(y)$, which is independent of $(x, a)$. Since the induced policy depends on $Q$ only through the advantage $A_Q = Q - V_Q$ (and is therefore invariant to additive constants in $Q$), this scalar can be absorbed into the representative of $Q$, and we may work with the equivalent operator $\mathcal{T}^\vartheta$ written in terms of $\Tilde{p}$. The total mass $\kappa < 1$ of $\Tilde{p}$ is the mechanism behind the strict contraction in the next theorem.
\end{remark}

The key advantage of the representation above is that the associated Bellman operator becomes a strict contraction in the sup-norm.

\begin{theorem}\label{thm:Q_contraction}
Under Assumption~\ref{ass:minorization}, the operator $\mathcal{T}^\vartheta$ is a strict contraction in the sup-norm:
\[
\| \mathcal{T}^\vartheta Q_1 - \mathcal{T}^\vartheta Q_2 \|_\infty \leq \kappa\, \| Q_1 - Q_2 \|_\infty.
\]

In particular, $\mathcal{T}^\vartheta$ admits a unique fixed point $Q_\vartheta$ by the Banach fixed-point theorem.
\end{theorem}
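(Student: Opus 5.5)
The reward term $r_\vartheta(x,a,\mu_E)$ is the same for $Q_1$ and $Q_2$, so it cancels in the difference. For every $(x,a)$ this leaves
\[
(\mathcal{T}^\vartheta Q_1 - \mathcal{T}^\vartheta Q_2)(x,a) = \sum_{y\in\X} \tilde p(y\mid x,a,\mu_E)\bigl(V_{Q_1}(y)-V_{Q_2}(y)\bigr).
\]
Assumption~\ref{ass:minorization} gives $\tilde p(\cdot\mid x,a,\mu_E)\ge 0$, and its total mass is exactly $\kappa$. By the triangle inequality, the absolute value of the right-hand side is therefore at most $\kappa\,\max_{y}|V_{Q_1}(y)-V_{Q_2}(y)|$.

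It then suffices to show that the log-sum-exp map $Q\mapsto V_Q$ is $1$-Lipschitz from $(\R^{\X\times\A},\|\cdot\|_\infty)$ to $(\R^\X,\|\cdot\|_\infty)$. I will prove this by monotonicity and translation equivariance. Fix $y$ and set $c\coloneqq\|Q_1-Q_2\|_\infty$. Then $Q_1(y,b)\le Q_2(y,b)+c$ for every $b$. Since $\exp$ and $\log$ are increasing,
\[
V_{Q_1}(y)\le \log\sum_b e^{Q_2(y,b)+c}=V_{Q_2}(y)+c.
\]
Exchanging $Q_1$ and $Q_2$ gives the reverse inequality, so $|V_{Q_1}(y)-V_{Q_2}(y)|\le c$. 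An alternative is the mean value theorem: the gradient of log-sum-exp is a softmax vector, which has $\ell_1$-norm $1$.

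Combining the two steps and taking the supremum over $(x,a)$ gives $\|\mathcal{T}^\vartheta Q_1-\mathcal{T}^\vartheta Q_2\|_\infty\le\kappa\|Q_1-Q_2\|_\infty$ with $\kappa<1$. The space $\R^{\X\times\A}$ with the sup-norm is complete because it is finite-dimensional. The Banach fixed-point theorem then gives a unique fixed point $Q_\vartheta$.

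No step is a serious obstacle. The only point needing care is the nonnegativity of $\tilde p$, which is exactly the pointwise minorisation $p\ge\xi$. Without it, the bound would involve the total variation $\sum_y|\tilde p(y\mid x,a,\mu_E)|$ rather than $\kappa$.
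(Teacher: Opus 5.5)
Your proof is correct and follows essentially the same route as the paper: you cancel the reward term, bound the $\tilde p$-weighted difference by $\kappa\,\|V_{Q_1}-V_{Q_2}\|_\infty$, obtain the $1$-Lipschitz property of log-sum-exp from $Q_1\le Q_2+\|Q_1-Q_2\|_\infty$ plus symmetry, and then invoke Banach's theorem. The only differences are the order (the paper proves the $V$-bound first) and your explicit note that nonnegativity of $\tilde p$ comes from the pointwise minorisation $p\ge\xi$, which the paper uses without stating it.
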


\begin{proof}
    Fix $x\in \X$. For every $b\in \A$, by the definition of the sup-norm,
    \begin{equation*}
        Q_1(x,b) \leq Q_2(x,b) + \left\| Q_1 - Q_2 \right\|_\infty.
    \end{equation*}
    Exponentiating, summing over $b \in \A$ and taking $\log$ yields
{\small 
\begin{equation*}
    \begin{aligned}
             V_{Q_1}(x) = \log \sum_{b \in \A} e^{Q_1(x,b)}  \leq  \log \sum_{b \in \A} e^{Q_2(x,b)} + \left\| Q_1 - Q_2\right\|_\infty   = V_{Q_2}(x) + \left\| Q_1 - Q_2\right\|_\infty.
    \end{aligned}
\end{equation*}}
Equivalently,
    \begin{equation}\label{RKHS_contraction_1}
        V_{Q_1}(x)-V_{Q_2}(x) \leq  \left\| Q_1 - Q_2 \right\|_\infty.
    \end{equation}
    The symmetric argument starting from $Q_2 \leq Q_1 + \|Q_1 - Q_2\|_\infty$ yields the reverse inequality, hence
    %We also have, by the same argument starting from
    %\begin{equation*}
     %   Q_2(x,b) \leq Q_1(x,b) + \left\| Q_1 - Q_2 \right\|_\infty \qquad \forall b \in \A, \end{equation*} that
    \begin{equation}\label{RKHS_contraction_2}
        V_{Q_2}(x) -  V_{Q_1}(x)  \leq \left\| Q_1 - Q_2 \right\|_\infty.
    \end{equation}
    Combining~\eqref{RKHS_contraction_1} and~\eqref{RKHS_contraction_2} implies
    \begin{equation*}
        \left| V_{Q_1}(x)- V_{Q_2}(x) \right| \leq \left\| Q_1-Q_2 \right\|_{\infty} \qquad \forall x \in \X, 
    \end{equation*}
    and therefore    \begin{equation}\label{ineq:thm5_V_Q}
        \left\| V_{Q_1} - V_{Q_2} \right\|_\infty \leq \left\| Q_1 - Q_2 \right\|_\infty .
    \end{equation}
    Now fix $(x,a)\in \X \times \A$. Using Definition~\ref{RKHS_operatorT}, the reward term cancels in the difference:
    {\small
    \begin{equation*}
        \left( \mathcal{T}^\vartheta Q_1 - \mathcal{T}^\vartheta Q_2 \right)(x,a) = \sum_{y \in \X} \Tilde{p}(y \mid x,a, \mu_E) \big( V_{Q_1}(y) - V_{Q_2}(y)\big).
    \end{equation*}}
    Therefore,
    {\small
    \begin{equation*}
        \left| \left( \mathcal{T}^\vartheta Q_1 - \mathcal{T}^\vartheta Q_2 \right)(x,a) \right| \leq \sum_{y \in \X} \Tilde{p}(y \mid x,a, \mu_E) \left\| V_{Q_1} - V_{Q_2}\right\|_\infty.
    \end{equation*}}
    Since $\Tilde{p}$ is sub-stochastic with total mass $\kappa \in (0,1)$, we get
    \begin{equation*}
        \begin{aligned}
            \left| \left( \mathcal{T}^\vartheta Q_1 - \mathcal{T}^\vartheta Q_2 \right)(x,a) \right|  \leq \kappa \left\| V_{Q_1} - V_{Q_2}\right\|_\infty  \overset{\eqref{ineq:thm5_V_Q}}{\leq} \kappa \left\| Q_1 - Q_2 \right\|_\infty .
        \end{aligned}
    \end{equation*}
    Taking supremum over $(x,a)$ gives
    \begin{equation*} \left\|\mathcal{T}^\vartheta Q_1 - \mathcal{T}^\vartheta Q_2  \right\|_\infty  \leq \kappa \left\| Q_1 -Q_2 \right\|_\infty.
    \end{equation*}
    This proves that $\mathcal{T}^\vartheta$ is a $\kappa$-contraction in $\| \cdot\|_\infty$. Since $\kappa \in (0,1)$ and $\left
    ( \R^{\X \times \A} , \; \| \cdot \|_\infty \right)$ is complete, $\mathcal{T}^\vartheta$ admits a unique fixed point by the Banach fixed point theorem.
\end{proof}

%{\color{blue} As a consequence, for every fixed $\vartheta$, the Bellman equation admits a unique solution $Q_\vartheta$.  We next study how this fixed point varies with respect to the dual parameter. To this end, we rewrite the fixed-point equation as a non-linear equation in $(Q,\vartheta)$ and invoke the implicit function theorem. }

As a consequence, for every fixed dual variable \(\vartheta\), the Bellman equation admits a unique solution \(Q_\vartheta\). We now investigate the dependence of this fixed point \(Q_\vartheta\) on  \(\vartheta\). Setting
\[
F(Q, \vartheta) \coloneqq Q - \mathcal{T}^\vartheta Q,
\]
the equation $F(Q, \vartheta) = 0$ has $Q_\vartheta$ as its unique solution by Theorem~\ref{thm:Q_contraction}. The implicit function theorem will yield Fréchet differentiability of $\vartheta \mapsto Q_\vartheta$ once we verify that $F$ is $C^1$ and that the Jacobian $\nabla_Q F(Q, \vartheta) = I - D_Q \mathcal{T}^\vartheta(Q)$ is invertible.
 
\begin{proposition}\label{prop:Q-differentiable}
Under Assumption~\ref{ass:minorization}, the map $\vartheta \mapsto Q_\vartheta$ is Fréchet differentiable on $\R^\X \times \mathcal{H}_k$. The Fréchet derivative of $\mathcal{T}^\vartheta(Q)$ with respect to $Q$ in direction $H \in \ell_\infty(\X \times \A)$ satisfies
\begin{equation}\label{eq:DT-bound}
\| D_Q \mathcal{T}^\vartheta(Q)[H] \|_\infty \leq \kappa\, \|H\|_\infty,
\end{equation}
so the operator $I - D_Q \mathcal{T}^\vartheta(Q)$ is invertible via the Neumann series
\begin{equation}\label{eq:F_inverse}
\big( I - D_Q \mathcal{T}^\vartheta(Q) \big)^{-1} = \sum_{n = 0}^\infty \big( D_Q \mathcal{T}^\vartheta(Q) \big)^n,
\end{equation}
with the uniform bound
\begin{equation}\label{eq:resolvent-bound}
\big\| \big( I - D_Q \mathcal{T}^\vartheta(Q) \big)^{-1} \big\|_\infty \leq \frac{1}{1 - \kappa}.
\end{equation}
\end{proposition}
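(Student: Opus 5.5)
The plan is to apply the implicit function theorem in Banach spaces to $F(Q,\vartheta) = Q - \mathcal{T}^\vartheta Q$ on $\R^{\X\times\A}\times(\R^\X\times\mathcal{H}_k)$. This needs three ingredients: (a) $F$ is continuously Fréchet differentiable jointly in $(Q,\vartheta)$; (b) the partial derivative $D_Q F(Q,\vartheta) = I - D_Q\mathcal{T}^\vartheta(Q)$ is a bounded invertible operator at every point; and (c) uniqueness of the zero $Q_\vartheta$ from Theorem~\ref{thm:Q_contraction}. Together these identify the local implicit function with $\vartheta\mapsto Q_\vartheta$, which is therefore Fréchet differentiable everywhere.

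For (a), I split $\mathcal{T}^\vartheta Q = r_\vartheta(\cdot,\cdot,\mu_E) + \tilde p V_Q$. The reward part is
\[
r_\vartheta(x,a,\mu_E) = \zeta_x + \langle \psi, \varphi(x,a,\mu_E)\rangle_{\mathcal{H}_k}
\]
by the reproducing property (Lemma~\ref{lem:rkhs-properties}). It is a bounded linear map of $\vartheta$ into $\ell_\infty(\X\times\A)$, with norm controlled by $1+\max_{(x,a)}\sqrt{k(z,z)}$, and it does not depend on $Q$. Hence it is $C^1$ with constant derivative. The second part depends only on $Q$. The map $Q\mapsto V_Q$ is a finite-dimensional log-sum-exp, so it is $C^\infty$, with
\[
D V_Q[H](y) = \sum_b \pi_Q(b\mid y)\,H(y,b), \qquad \pi_Q(b\mid y) = e^{Q(y,b)-V_Q(y)}.
\]
This gives
\[
D_Q\mathcal{T}^\vartheta(Q)[H](x,a) = \sum_y \tilde p(y\mid x,a,\mu_E)\sum_b \pi_Q(b\mid y)H(y,b),
\]
which depends continuously on $Q$ and not at all on $\vartheta$. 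Joint continuity of the derivative then follows.

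For (b), the displayed formula shows that $D_Q\mathcal{T}^\vartheta(Q)$ is a nonnegative kernel whose row sums equal $\sum_y\tilde p(y\mid x,a,\mu_E)=\kappa$, because $\pi_Q(\cdot\mid y)$ is a probability vector. Hence $\|D_Q\mathcal{T}^\vartheta(Q)[H]\|_\infty\le\kappa\|H\|_\infty$, which is \eqref{eq:DT-bound}. Since $\kappa<1$, the Neumann series \eqref{eq:F_inverse} converges in operator norm. Its sum is a two-sided inverse of $I-D_Q\mathcal{T}^\vartheta(Q)$, by the usual telescoping $(I-T)\sum_{n\le N}T^n = I - T^{N+1}\to I$. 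Bounding termwise by $\kappa^n$ gives \eqref{eq:resolvent-bound}.

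The only step needing care is (c), the passage from local to global. The implicit function theorem gives, near any $(Q_{\vartheta_0},\vartheta_0)$, a $C^1$ map $g$ with $F(g(\vartheta),\vartheta)=0$. Because Theorem~\ref{thm:Q_contraction} makes the zero unique for each $\vartheta$, we get $g(\vartheta)=Q_\vartheta$ on a neighbourhood of $\vartheta_0$. Since $\vartheta_0$ was arbitrary, Fréchet differentiability holds on all of $\R^\X\times\mathcal{H}_k$. It also yields the formula
\[
D_\vartheta Q_\vartheta = \big(I-D_Q\mathcal{T}^\vartheta(Q_\vartheta)\big)^{-1} D_\vartheta r_\vartheta,
\]
which I will record for later use. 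Throughout, I will make sure the Banach space is $\ell_\infty(\X\times\A)$ with the product norm on $\R^\X\times\mathcal{H}_k$, so that the hypotheses of the infinite-dimensional implicit function theorem hold verbatim.
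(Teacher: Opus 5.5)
Your proposal is correct and follows essentially the same route as the paper. Like the paper, you compute $D_Q\mathcal{T}^\vartheta(Q)$ through the softmax-weighted derivative of $V_Q$ and use the sub-stochastic mass $\kappa$ of $\tilde p$ to get the $\kappa$-bound, then invert $I-D_Q\mathcal{T}^\vartheta(Q)$ by the Neumann series with bound $1/(1-\kappa)$ and conclude with the implicit function theorem. Your joint-continuity check and your explicit local-to-global step, which uses the global uniqueness of $Q_\vartheta$ from Theorem~\ref{thm:Q_contraction}, are details the paper leaves implicit.
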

 
\noindent A direct computation shows that $V_Q$ is $C^\infty$ as a map $\ell_\infty(\X \times \A) \to \ell_\infty(\X)$, with Fréchet derivative
\[
D_Q V_Q[H](x) = \sum_{b \in \A} \frac{e^{Q(x, b)}}{\sum_{b' \in \A} e^{Q(x, b')}}\, H(x, b),
\]
satisfying $|D_Q V_Q[H](x)| \leq \|H\|_\infty$. The sub-stochasticity of $\Tilde{p}$ then yields~\eqref{eq:DT-bound}, the Neumann series~\eqref{eq:F_inverse} converges, and the implicit function theorem gives the Fréchet differentiability of $Q_\vartheta$. Full computational details are in Appendix~\ref{app:proof-rkhs-Q-differentiable}.

\subsection{Score Function and Primal Recovery}\label{sec:rkhs-score}

We now define a score function that compares the policy $\pi_\vartheta$ induced by the dual parameter with the expert occupation measure. 
\begin{equation}\label{eq:score_fn}
\mathcal{V}(\vartheta) \coloneqq \sum_{(x, a) \in \X \times \A} \nu_{\pi_E}(x, a)\, \log \pi_\vartheta(a \mid x).
\end{equation}
Its stationary points will be shown to encode the constraints of the primal problem. By Definition~\ref{def:occupation_measures}, $\mathcal{V}$ admits the equivalent long-run representation
\begin{equation}\label{eq:score_fn_2}
\mathcal{V}(\vartheta) = \lim_{T \to \infty} \frac{1}{T} \sum_{t=0}^{T-1} \E^{\pi_E, \mu_E} \big[ \log \pi_\vartheta(a_t \mid x_t) \big].
\end{equation}

%We now return to the occupation measures introduced in Section~\ref{sect:occupation_measures} and use them todefine a score function for recovering primal feasibility. The basic idea is to compare the policy induced by the dual parameter $\vartheta$ with the expert occupation measure. This leads to a score whose stationary points encode the constraints of the primal problem. Motivated by this observation, we define the score function by

\begin{remark}[Maximum-likelihood interpretation]\label{rem:rkhs-MLE}
$\mathcal{V}(\vartheta)$ is the expected log-likelihood of the expert trajectory under the dual-parametrised policy $\pi_\vartheta$. Stationary points of $\mathcal{V}$ are therefore maximum-likelihood estimators of $\vartheta$ given the expert demonstrations, and Theorem~\ref{thm:rkhs-score-stationarity} below shows that these stationary points additionally solve the primal--dual problem.
\end{remark}

%{\color{blue}
%The next result shows that stationary points of the score recover the primalconstraints and, at the same time, identify optimal solutions of the primal and dual problems.}

The following identity is the key technical ingredient in the score-stationarity proof; it makes explicit how the sub-stochastic kernel $\Tilde{p}$ interacts with occupation measures.
 
\begin{lemma}[Sub-stochastic flow identity]\label{lem:substochastic-flow}
Under Assumption~\ref{ass:minorization}, for any stationary policy $\pi$ whose occupation measure is taken under the expert mean-field term $\mu_E$, and any $y \in \X$,
\[
\sum_{(x, a) \in \X \times \A} \nu_\pi(x, a)\, \Tilde{p}(y \mid x, a, \mu_E) = \nu_\pi^\X(y) - \xi(y).
\]
\end{lemma}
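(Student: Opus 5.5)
The plan is to substitute the definition~\eqref{def:sub_prob_measure} of $\Tilde{p}$ directly and split the sum into a $p$-part and a $\xi$-part. For fixed $y \in \X$,
\[
\sum_{(x,a)\in\X\times\A} \nu_\pi(x,a)\,\Tilde{p}(y\mid x,a,\mu_E)
= \sum_{(x,a)\in\X\times\A} \nu_\pi(x,a)\,p(y\mid x,a,\mu_E) \;-\; \xi(y)\sum_{(x,a)\in\X\times\A}\nu_\pi(x,a).
\]
Because $\X\times\A$ is finite, both sums are finite sums, so this splitting needs no interchange-of-limits argument.

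The first term is exactly the right-hand side of the flow-balance identity in Lemma~\ref{lem:flow}. That lemma is stated for every $\pi\in\Pi$, with the occupation measure taken under $\mu_E$, so the first term equals $\nu_\pi^\X(y)$.

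For the second term, I use the fact recorded after Definition~\ref{def:occupation_measures} (Appendix~\ref{app:nu-prob}): under Assumption~\ref{ass:ergodicity}, $\nu_\pi$ is a probability measure on $\X\times\A$, so $\sum_{(x,a)}\nu_\pi(x,a)=1$. The second term therefore reduces to $\xi(y)$, which gives $\nu_\pi^\X(y)-\xi(y)$ as claimed.

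The only step with real content is the appeal to Lemma~\ref{lem:flow}. If I needed to justify it from scratch, I would write $\mathbb{P}(x_{t+1}=y)=\sum_{(x,a)}\mathbb{P}((x_t,a_t)=(x,a))\,p(y\mid x,a,\mu_E)$ and Cesàro-average over $t=0,\dots,T-1$. The left-hand side then differs from the averaged state frequency at $y$ by at most $1/T$, and letting $T\to\infty$ gives the flow-balance identity. Assumption~\ref{ass:minorization} is used only to make $\Tilde{p}$ well defined and nonnegative; it plays no role in the computation itself.
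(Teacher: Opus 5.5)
Your proposal is correct and is essentially the paper's proof. The paper likewise decomposes $p = \xi + \Tilde{p}$, applies the flow-balance identity of Lemma~\ref{lem:flow}, and uses $\sum_{(x,a)}\nu_\pi(x,a)=1$ to reduce the $\xi$-contribution to $\xi(y)$ before rearranging; your remark that Assumption~\ref{ass:minorization} only serves to make $\Tilde{p}$ well defined is also accurate.
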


\begin{proof}
By the decomposition $p = \xi + \Tilde{p}$ from~\eqref{def:sub_prob_measure} and the flow identity (Lemma~\ref{lem:flow}),
\[
\nu_\pi^\X(y) = \sum_{(x, a) \in \X \times \A} \nu_\pi(x, a)\, \big( \xi(y) + \Tilde{p}(y \mid x, a, \mu_E) \big).
\]
Since $\nu_\pi$ is a probability measure on $\X \times \A$, we have $\sum_{(x, a)} \nu_\pi(x, a) = 1$, so the $\xi$-contribution equals $\xi(y)$. Rearranging yields the claim.
\end{proof}

\begin{theorem}\label{thm:rkhs-score-stationarity}
    If $\nabla \mathcal{V}(\vartheta^\star)=0$, then
    \[
    \vartheta^\star \in \operatorname*{arg\,min}_{\vartheta \in \R^\X \times \mathcal{H}_k} \mathcal{D}(\vartheta) \quad \text{and} \quad \pi_{\vartheta^\star} \in \operatorname*{arg\,max}_{\pi \in \mathcal{P}(\A \mid \X)} \, \opttH.
    \]
\end{theorem}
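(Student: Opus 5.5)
The plan is to compute $\nabla\mathcal{V}$ explicitly, show that its vanishing is exactly primal feasibility of $\pi_{\vartheta^\star}$ for $\opttH$, and then close the argument with weak duality.

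\textbf{Step 1: compute the gradient.} Write $\log\pi_\vartheta = Q_\vartheta - V_{Q_\vartheta}$. Then
\[
\mathcal{V}(\vartheta)=\sum_{(x,a)}\nu_{\pi_E}(x,a)\big(Q_\vartheta(x,a)-V_{Q_\vartheta}(x)\big).
\]
Substitute the soft Bellman equation~\eqref{eq:soft-Q} for $Q_\vartheta(x,a)$. Apply Lemma~\ref{lem:substochastic-flow} to $\pi_E$, together with $\nu^\X_{\pi_E}=\mu_E$ (Lemma~\ref{lem:muE_nu_marginal}). This gives
\[
\mathcal{V}(\vartheta)=\sum_{(x,a)}\nu_{\pi_E}(x,a)\,r_\vartheta(x,a,\mu_E)-\sum_y\xi(y)V_{Q_\vartheta}(y).
\]
By Lemma~\ref{lem:feature-exp} and the reproducing property, the first term equals $\langle\zeta,\mu_E\rangle+\langle\psi,\langle\varphi\rangle_{\pi_E,\mu_E}\rangle_{\mathcal H_k}$. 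This part is linear in $\vartheta$, so its gradient is $(\mu_E,\langle\varphi\rangle_{\pi_E,\mu_E})$.

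For the second term, Proposition~\ref{prop:Q-differentiable} lets us differentiate $V_{Q_\vartheta}$ by the chain rule. We have $D_QV_Q[H](y)=\sum_b\pi_\vartheta(b\mid y)H(y,b)$, and differentiating the fixed-point equation gives
\[
DQ_\vartheta[h]=(I-D_Q\mathcal T^\vartheta)^{-1}[h(\cdot,\cdot,\mu_E)],
\]
with $h=(\delta\zeta,\delta\psi)$ entering through $r_\vartheta$. Expanding the Neumann series~\eqref{eq:F_inverse}, the row vector $\xi$ propagated by $\pi_\vartheta$ and $\tilde p$ sums to
\[
\sum_n \xi (P^{\tilde p}_{\pi_\vartheta})^n .
\]
I claim this equals $\nu^\X_{\pi_\vartheta}$, the invariant measure of the chain under $\pi_\vartheta$ and $\mu_E$. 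Indeed, by Lemma~\ref{lem:substochastic-flow}, $\nu^\X_{\pi_\vartheta}=\xi+\nu^\X_{\pi_\vartheta}P^{\tilde p}_{\pi_\vartheta}$, and the equation is uniquely solvable because $\kappa<1$. Hence
\[
\frac{d}{d\vartheta}\sum_y\xi(y)V_{Q_\vartheta}(y)=\Big(\nu^\X_{\pi_\vartheta},\ \sum_{(x,a)}\nu_{\pi_\vartheta}(x,a)\varphi(x,a,\mu_E)\Big).
\]
Together this yields
\[
\nabla\mathcal V(\vartheta)=\big(\mu_E-\nu^\X_{\pi_\vartheta},\ \langle\varphi\rangle_{\pi_E,\mu_E}-\langle\varphi\rangle_{\pi_\vartheta,\mu_E}\big),
\]
where Lemma~\ref{lem:feature-exp} is used once more for the second component.

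\textbf{Step 2: feasibility.} If $\nabla\mathcal V(\vartheta^\star)=0$, then $\nu^\X_{\pi_{\vartheta^\star}}=\mu_E$. By the ergodic theorem (Assumption~\ref{ass:ergodicity}), the Cesàro state frequencies are exactly $\nu^\X$, so both constraints of $\opttH$ hold for $\pi_{\vartheta^\star}$.

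\textbf{Step 3: duality.} Since $\pi_{\vartheta^\star}$ maximises $\mathcal L(\cdot,\vartheta^\star)$ (Proposition~\ref{prop:soft-bellman}) and is feasible, the constraint terms vanish and $\mathcal D(\vartheta^\star)=H(\pi_{\vartheta^\star})$. Weak duality gives, for every $\vartheta$ and every feasible $\pi$,
\[
H(\pi)=\mathcal L(\pi,\vartheta)\le\mathcal D(\vartheta).
\]
Taking $\vartheta=\vartheta^\star$ yields $H(\pi)\le H(\pi_{\vartheta^\star})$, so $\pi_{\vartheta^\star}$ is primal optimal. Taking $\pi=\pi_{\vartheta^\star}$ yields $\mathcal D(\vartheta^\star)\le\mathcal D(\vartheta)$ for all $\vartheta$, so $\vartheta^\star$ is dual optimal.

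\textbf{Main obstacle.} The hardest part is the second half of Step 1: identifying the adjoint Neumann series applied to $\xi$ with the occupation measure $\nu^\X_{\pi_\vartheta}$. I would handle it by transposing~\eqref{eq:F_inverse}, checking that the resulting vector satisfies the sub-stochastic flow identity of Lemma~\ref{lem:substochastic-flow}, and invoking uniqueness. Uniqueness holds because the transpose operator has norm at most $\kappa<1$ in $\ell_1$. A secondary subtlety is that $\pi_\vartheta$ must be a genuine maximiser of the average-reward Lagrangian, which I take from Proposition~\ref{prop:soft-bellman} via Remark~\ref{rem:soft-Bellman-form}.
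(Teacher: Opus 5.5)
Your proof is correct. It has the same three-stage skeleton as the paper's: derive $\nabla\mathcal{V}(\vartheta)=\langle f\rangle_{\pi_E,\mu_E}-\sum_{(x,a)}\nu_{\pi_\vartheta}(x,a)f(x,a)$, read off feasibility of $\pi_{\vartheta^\star}$, and close with weak duality. The difference lies in how you obtain the gradient identity.

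\textbf{How the gradient identity is obtained.} The paper differentiates first. It pairs the differentiated Bellman equation $\nabla_\vartheta Q_\vartheta=f+\tilde{P}\,\nabla_\vartheta V_{Q_\vartheta}$ with $\nu_{\pi_\vartheta}$ and applies Lemma~\ref{lem:substochastic-flow} to $\pi_\vartheta$. This gives $\sum_{(x,a)}\nu_{\pi_\vartheta}(x,a)f(x,a)=\sum_y\xi(y)\nabla_\vartheta V_{Q_\vartheta}(y)$ at once. It then telescopes under the expert chain inside $\nabla\mathcal{V}$.

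You instead telescope on $\mathcal{V}$ itself, via the flow lemma for $\pi_E$. This yields the closed form $\mathcal{V}(\vartheta)=\langle\zeta,\mu_E\rangle+\langle\psi,\langle\varphi\rangle_{\pi_E,\mu_E}\rangle_{\mathcal{H}_k}-\sum_y\xi(y)V_{Q_\vartheta}(y)$. Only then do you differentiate, identifying the adjoint Neumann series $\sum_n\xi(P^{\tilde p}_{\pi_\vartheta})^n$ with $\nu^\X_{\pi_\vartheta}$ by uniqueness for an $\ell_1$-contraction.

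\textbf{What each route buys.} The paper's pairing is slightly more economical. It uses only that $\nu^\X_{\pi_\vartheta}$ solves the adjoint flow equation, never that the solution is unique; your uniqueness step is the same fact read in the opposite direction.

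Your route exposes more structure. One further application of Lemma~\ref{lem:substochastic-flow} gives $H(\pi_\vartheta)+\sum_{(x,a)}\nu_{\pi_\vartheta}(x,a)r_\vartheta(x,a,\mu_E)=\sum_y\xi(y)V_{Q_\vartheta}(y)$. Combined with Proposition~\ref{prop:soft-bellman}, your closed form then says $\mathcal{V}=-\mathcal{D}$ identically. Dual optimality of $\vartheta^\star$ would then follow directly from convexity of $\mathcal{D}$, which is a supremum of affine functions of $\vartheta$; this is worth noting.

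Your Step~3 is also slightly cleaner than the paper's. The paper starts from an assumed primal optimiser $\pi^\star$ and writes $\min_\vartheta\mathcal{D}(\vartheta)$ before attainment is established. Your argument (for every feasible $\pi$ and every $\vartheta$, $H(\pi)\le\mathcal{D}(\vartheta)$, then specialise) needs neither.
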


The proof proceeds in two steps. First, differentiating the soft Bellman equations and applying Lemma~\ref{lem:substochastic-flow} yields the gradient identity
\[
\nabla_\vartheta \mathcal{V}(\vartheta)
= \langle f \rangle_{\pi_E, \mu_E} - \lim_{T \to \infty} \frac{1}{T} \sum_{t=0}^{T-1} \E^{\pi_\vartheta, \mu_E}[f(x_t, a_t)],
\]
where $f(x, a) \coloneqq (e_x,\ \varphi(x, a, \mu_E)) \in \R^\X \times \mathcal{H}_k$. Hence $\nabla \mathcal{V}(\vartheta^\star) = 0$ if and only if $\pi_{\vartheta^\star}$ satisfies both constraints of $\opttH$. Second, a strong-duality argument using the Lagrangian shows that under feasibility, $\opttH = \mathcal{D}(\vartheta^\star)$, simultaneously identifying $\vartheta^\star$ as a dual minimiser and $\pi_{\vartheta^\star}$ as a primal optimum. The full argument is in Appendix~\ref{app:proof-rkhs-score-stationarity}.
 
The stationary points of $\mathcal{V}$ thus encode the optimal solutions of both the primal and dual problems. This characterisation is the basis for the gradient-based algorithm of the next subsection, whose justification rests on smoothness of $\mathcal{V}$.

\subsection{Smoothness of the Score Function and Gradient-Based Solution}\label{sec:rkhs-smoothness}
 
We now establish a Lipschitz smoothness bound for the score function. This bound provides the step-size scale for the gradient-based update introduced below.

%We now establish a smoothness bound for the score function. This will provide anatural step-size scale for the gradient-based update considered below. The argument proceeds by deriving a uniform Lipschitz bound for the score gradient through the Bellman {\color{blue} equations} introduced in Section~\ref{sect:RKHS_Bellman_equations}.

\begin{theorem}[Lipschitz smoothness of $\mathcal{V}$]\label{RKHS_thm_smoothness}
The score function $\mathcal{V} : \R^\X \times \mathcal{H}_k \to \R$ is Fréchet differentiable, and its gradient is $L$-Lipschitz with respect to the Hilbert-space norm $\|\cdot\|_{\R^\X \times \mathcal{H}_k}$:
\[
\big\| \nabla \mathcal{V}(\vartheta_1) - \nabla \mathcal{V}(\vartheta_2) \big\|_{\R^\X \times \mathcal{H}_k}
\leq L\, \|\vartheta_1 - \vartheta_2\|_{\R^\X \times \mathcal{H}_k},
\]
with
\[
L = \frac{|\A|\, K^2\, (\kappa + 1)}{(1 - \kappa)^3},
\qquad
K \coloneqq \max_{(x, a) \in \X \times \A} \|f(x, a)\|_{\R^\X \times \mathcal{H}_k},
\]
where $f(x, a) \coloneqq (e_x,\ \varphi(x, a, \mu_E))$.
\end{theorem}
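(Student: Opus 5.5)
The plan is to differentiate $\mathcal{V}$ explicitly through the soft Bellman fixed point of Proposition~\ref{prop:soft-bellman} and then bound the variation of each ingredient using the contraction modulus $\kappa$. Write $G_\vartheta(x,a)\coloneqq \nabla_\vartheta Q_\vartheta(x,a)\in\R^\X\times\mathcal{H}_k$; this exists by Proposition~\ref{prop:Q-differentiable}. Since $r_\vartheta(x,a,\mu_E)=\langle \vartheta, f(x,a)\rangle_{\R^\X\times\mathcal{H}_k}$ by the reproducing property, differentiating~\eqref{eq:soft-Q} gives the vector-valued linear fixed-point equation
\[
G_\vartheta(x,a)=f(x,a)+\sum_{y}\Tilde{p}(y\mid x,a,\mu_E)\sum_{b}\pi_\vartheta(b\mid y)\,G_\vartheta(y,b).
\]
Here I use $\nabla V_Q(y)=\sum_b \pi_\vartheta(b\mid y)\nabla Q(y,b)$, which follows from the derivative formula for $V_Q$ stated after Proposition~\ref{prop:Q-differentiable}. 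Combining~\eqref{eq:policy_theta} with~\eqref{eq:score_fn} then yields
\[
\nabla\mathcal{V}(\vartheta)=\sum_{(x,a)}\nu_{\pi_E}(x,a)\Big[G_\vartheta(x,a)-\sum_b\pi_\vartheta(b\mid x)G_\vartheta(x,b)\Big],
\]
which is consistent with the gradient identity quoted after Theorem~\ref{thm:rkhs-score-stationarity}. Fréchet differentiability of $\mathcal{V}$ follows from this formula and Proposition~\ref{prop:Q-differentiable}.

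Next come three uniform estimates, all in the norm $\|G\|_\infty\coloneqq\max_{(x,a)}\|G(x,a)\|_{\R^\X\times\mathcal{H}_k}$.

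\emph{(a) Bound on $G_\vartheta$.} The map $G\mapsto \Tilde{p}\,\pi_\vartheta G$ has norm at most $\kappa$, because $\Tilde{p}$ has mass $\kappa$ and $\pi_\vartheta$ is a probability. The Neumann-series argument of~\eqref{eq:resolvent-bound}, transplanted to vector-valued functions, therefore gives $\|G_\vartheta\|_\infty\le K/(1-\kappa)$.

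\emph{(b) Lipschitz bound on $Q_\vartheta$.} By the mean value inequality, (a) gives $\|Q_{\vartheta_1}-Q_{\vartheta_2}\|_\infty\le \frac{K}{1-\kappa}\|\vartheta_1-\vartheta_2\|$.

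\emph{(c) Lipschitz bound on $\pi_\vartheta$.} Since $\pi=\exp(Q-V_Q)$ is a softmax, $|\pi_{\vartheta_1}(b\mid x)-\pi_{\vartheta_2}(b\mid x)|$ is controlled by a constant times $\|Q_{\vartheta_1}-Q_{\vartheta_2}\|_\infty$. One can take the constant $2$, or $1$ with sharper bookkeeping, since softmax is $1$-Lipschitz from $\ell_\infty$ to $\ell_1$ up to a factor. This makes $\pi_\vartheta$ Lipschitz with constant of order $K/(1-\kappa)$.

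The main step, and the expected obstacle, is the Lipschitz bound for $\vartheta\mapsto G_\vartheta$. Subtracting the two fixed-point equations gives
\[
G_1-G_2=\Tilde{p}\,\pi_1(G_1-G_2)+\Tilde{p}\,(\pi_1-\pi_2)G_2 .
\]
Hence
\[
(1-\kappa)\|G_1-G_2\|_\infty\le \kappa\,\max_y\sum_b|\pi_1(b\mid y)-\pi_2(b\mid y)|\,\|G_2\|_\infty .
\]
Summing over $b$ is where the factor $|\A|$ enters. Using (a) and (c), this yields
\[
\|G_1-G_2\|_\infty\lesssim \frac{\kappa\,|\A|\,K^2}{(1-\kappa)^3}\|\vartheta_1-\vartheta_2\| .
\]

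Finally, I split $\nabla\mathcal{V}(\vartheta_1)-\nabla\mathcal{V}(\vartheta_2)$ into a $G$-difference term and a policy-difference term $\sum_b(\pi_1-\pi_2)G_2$. I bound each inside the $\nu_{\pi_E}$-average, using that $\nu_{\pi_E}$ is a probability measure. The $G$-difference contributes the $\kappa$ part of $(\kappa+1)$ in $L$, and the policy-difference term contributes the $1$ part. This term is bounded by $|\A|\cdot\frac{K}{1-\kappa}\cdot\frac{K}{1-\kappa}\|\Delta\vartheta\|$, which I dominate by the $(1-\kappa)^{-3}$ scale. Summing the two contributions gives $L=|\A|K^2(\kappa+1)/(1-\kappa)^3$. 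The care needed is in tracking the softmax Lipschitz constant so that these constants match exactly.
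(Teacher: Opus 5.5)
Your route is essentially the paper's. You prove $\|G_\vartheta\|_\infty\le K/(1-\kappa)$, a $K/(1-\kappa)$-Lipschitz bound on $Q_\vartheta$, and an $|\A|K/(1-\kappa)$-Lipschitz bound on the policy; you get the last from a per-entry softmax bound summed over $b$, where the paper uses the $\ell_2$ Gao--Pavel estimate plus two $\sqrt{|\A|}$ conversions. You then perturb the linear fixed-point equation for $G_\vartheta$, which is the same computation as the paper's resolvent identity and gives the same bound $\kappa|\A|K^2(1-\kappa)^{-3}\|\vartheta_1-\vartheta_2\|$. The problem is your final constant bookkeeping, which, as described, does not produce the stated $L$.

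With $D\coloneqq G_{\vartheta_1}-G_{\vartheta_2}$,
\[
\nabla\mathcal V(\vartheta_1)-\nabla\mathcal V(\vartheta_2)=\sum_{(x,a)}\nu_{\pi_E}(x,a)\Big[D(x,a)-\sum_b\pi_{\vartheta_1}(b\mid x)D(x,b)-\sum_b\big(\pi_{\vartheta_1}-\pi_{\vartheta_2}\big)(b\mid x)\,G_{\vartheta_2}(x,b)\Big].
\]
So $D$ enters twice, and $D(x,a)-\sum_b\pi_{\vartheta_1}(b\mid x)D(x,b)$ is in general only bounded by $2\|D\|_\infty$. The $G$-difference therefore contributes $2\kappa|\A|K^2/(1-\kappa)^3$, not just the ``$\kappa$ part''. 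If you then dominate the policy term $|\A|K^2/(1-\kappa)^2$ by $|\A|K^2/(1-\kappa)^3$, as you propose, you end with $|\A|K^2(1+2\kappa)/(1-\kappa)^3$, which is weaker than the claim. The fix is to keep the policy term exact:
\[
\frac{2\kappa|\A|K^2}{(1-\kappa)^3}+\frac{|\A|K^2}{(1-\kappa)^2}=\frac{|\A|K^2\,(2\kappa+1-\kappa)}{(1-\kappa)^3}=\frac{|\A|K^2(1+\kappa)}{(1-\kappa)^3}.
\]
This matches the paper's split. There the $\kappa$ comes from the $\nabla_\vartheta Q_\vartheta$-difference, and the $1$ is the whole $\nabla_\vartheta V_{Q_\vartheta}$-difference, which is itself $\kappa+(1-\kappa)$: a $D$-term plus the policy term.

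Similarly, in (c) the per-entry softmax constant must be at most $1$; it is in fact $1/2$. With your constant $2$, the resulting $L$ would double. Softmax is even $1$-Lipschitz from $\ell_\infty$ to $\ell_1$, so you could drop the factor $|\A|$ altogether and get a smaller constant that still implies the theorem.
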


%\begin{theorem}\label{RKHS_thm_smoothness}
 %   Score function $\mathcal{V}$ is $L$-smooth where
  %  \[
   % L = \frac{\left| \A \right| K^2 (\kappa+1)}{(1-\kappa)^3} \quad \text{and} \quad  K= \max_{(x,a)\in \X \times \A} \left\| f(x,a) \right\|_{\R^\X \times \mathcal{H}_k}
   % \]
%\end{theorem}

\begin{proof}[Proof outline]
The score gradient depends on $\vartheta$ only through the soft $Q$-function $Q_\vartheta$ and the induced policy $\pi_\vartheta$. We establish, in order:
\begin{enumerate}[label=(\roman*), leftmargin=2.5em]
\item a $K/(1 - \kappa)$-Lipschitz bound for $Q_\vartheta$ in $\vartheta$, obtained from the contraction property of $\mathcal{T}^\vartheta$ (Theorem~\ref{thm:Q_contraction});
\item an $|\A|\, K / (1 - \kappa)$-Lipschitz bound for the policy-averaging operator $\Pi(\vartheta)$, using the softmax-Lipschitz estimate of \citet[Proposition~4]{GaoPav18} together with an $\ell_2$-to-$\ell_\infty$ conversion that costs a factor of $\sqrt{|\A|}$ on each side;
\item a resolvent representation $\nabla_\vartheta Q_\vartheta = (I - \Tilde{P}\, \Pi(\vartheta))^{-1} F$, where $\Tilde{P}$ is the linear map induced by $\Tilde{p}$ and $F(x, a) = f(x, a)$, which converts smoothness of $\nabla_\vartheta Q_\vartheta$ into a question about the dependence of the resolvent on $\Pi(\vartheta)$;
\item the explicit constant follows from the identity $B^{-1} - A^{-1} = A^{-1}(A - B) B^{-1}$ applied to the two resolvents, the uniform bound $\|(I - \Tilde{P}\Pi(\vartheta))^{-1}\|_\infty \leq 1 / (1 - \kappa)$ from Proposition~\ref{prop:Q-differentiable}, and the Lipschitz bound from (ii).
\end{enumerate}
Combining (i)--(iv) yields a Lipschitz bound for $\nabla_\vartheta \log \pi_\vartheta$ that propagates through the expectation in~\eqref{eq:score_fn} to give the stated constant. The detailed operator computations are in Appendix~\ref{app:proof-rkhs-smoothness}.
\end{proof}

\noindent The smoothness bound suggests the natural step-size range $\gamma \in (0, 1/L]$ for the constant-step-size gradient-ascent procedure below.

%Having an explicit $L$-smoothness bound for $\mathcal{V}$ is useful because it allows us to control the change of the score under perturbations of $\vartheta$. In particular, it provides a natural step-size scale for a gradient-based update, which we use to define a simple ascent algorithm on $\mathcal{V}$ as follows. {\color{red} Accordingly, we adopt the gradient-ascent procedure below.}

\begin{algorithm}[H]
\caption{Gradient Ascent for the Score Function}
\label{alg:rkhs_mll_ga}
\begin{algorithmic}[1]
\REQUIRE Initial point $\vartheta_0\in\R^\X\times\mathcal{H}_k$, step size $\gamma\in(0,1/L]$, gradient tolerance $\varepsilon>0$, number of iterations $K\in\mathbb{N}$
\STATE $\vartheta\gets\vartheta_0$
\FOR{$k=0,1,\dots,K-1$}
    \IF{$\|\nabla\mathcal{V}(\vartheta)\|_{\R^\X\times\mathcal{H}_k}\le\varepsilon$}
        \STATE \textbf{break}
    \ENDIF
    \STATE $\vartheta \gets \vartheta + \gamma\,\nabla\mathcal{V}(\vartheta)$
\ENDFOR
\STATE Recover the policy $\pi_\vartheta$ from $Q_\vartheta$ via \eqref{eq:policy_theta}.
\RETURN $\vartheta$ and $\pi_\vartheta$
\end{algorithmic}
\end{algorithm}

%%\begin{algorithm}[H]
%\caption{Gradient Ascent for the Score Function}
%\label{alg:rkhs_mll_ga}
%\begin{algorithmic}[1]
 %   \REQUIRE  $\vartheta_0 \in \R^{\X}\times \mathcal{H}_k$, $\gamma \in \left(0,\, \frac{1}{L}\right]$, and $K\in \mathbb{N}$
  %  \STATE Set $\vartheta \leftarrow \vartheta_0$
   % \FOR{$k=0,1,\dots,K-1$}
    %    \STATE $\vartheta_{k+1} \leftarrow \vartheta_{k} + \gamma\, \nabla \mathcal{V}(\vartheta_k)$
    %\ENDFOR
    %\STATE Compute the policy $\pi_{\vartheta_K}$ induced by $\vartheta_K$
   % \RETURN $\vartheta_K$ and $\pi_{\vartheta_K}$
%\end{algorithmic}
%\end{algorithm}

%It remains to justify the choice of constant step-size in the update. The smoothness property established in Theorem~\ref{RKHS_thm_smoothness} suggests restricting to $\gamma \in \left( 0,\ \frac{1}{L}\right] $ as a natural range for a fixed step size. The following theorem formalizes the resulting convergence statement. The argument relies on the standard \emph{ascent lemma}. We still include it here for completeness.  

\begin{theorem}[Best-iterate gradient bound]
\label{thm:rkhs_const_stepsize}
Let $\mathcal{V}^\star \coloneqq \sup_\vartheta \mathcal{V}(\vartheta) \;\le\; 0$. For any fixed step-size $\gamma\in(0,1/L]$, the iterates of Algorithm~\ref{alg:rkhs_mll_ga} satisfy, for every $K\ge 1$,
\[
\min_{0\le k<K}\bigl\|\nabla \mathcal{V}(\vartheta_k)\bigr\|_{\R^\X\times\mathcal{H}_k}^{2}
\;\le\;
\frac{2\bigl(\mathcal{V}^\star-\mathcal{V}(\vartheta_0)\bigr)}{\gamma K}.
\]
In particular, $\|\nabla\mathcal{V}(\vartheta_k)\|_{\R^\X\times\mathcal{H}_k}\to 0$ as $k\to\infty$.
\end{theorem}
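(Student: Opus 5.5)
The argument is the standard ascent-lemma analysis for an $L$-smooth function maximised by gradient ascent, carried out in the Hilbert space $\R^\X\times\mathcal{H}_k$. We take as given Theorem~\ref{RKHS_thm_smoothness}: $\mathcal{V}$ is Fréchet differentiable with $L$-Lipschitz gradient.

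The first step is the quadratic lower bound
\[
\mathcal{V}(\vartheta')\ \ge\ \mathcal{V}(\vartheta)+\langle\nabla\mathcal{V}(\vartheta),\vartheta'-\vartheta\rangle-\tfrac{L}{2}\|\vartheta'-\vartheta\|^2 .
\]
To prove it, apply the fundamental theorem of calculus to $s\mapsto\mathcal{V}(\vartheta+s(\vartheta'-\vartheta))$ on $[0,1]$. This is legitimate because Fréchet differentiability makes the map $C^1$ in $s$. Bound the remainder $\int_0^1\langle\nabla\mathcal{V}(\vartheta+s h)-\nabla\mathcal{V}(\vartheta),h\rangle\,ds$, with $h=\vartheta'-\vartheta$, by Cauchy--Schwarz and the Lipschitz property; this gives $\int_0^1 Ls\|h\|^2\,ds=\tfrac L2\|h\|^2$.

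Next, plug in the update $\vartheta_{k+1}=\vartheta_k+\gamma\nabla\mathcal{V}(\vartheta_k)$. This gives
\[
\mathcal{V}(\vartheta_{k+1})-\mathcal{V}(\vartheta_k)\ \ge\ \gamma\bigl(1-\tfrac{L\gamma}{2}\bigr)\|\nabla\mathcal{V}(\vartheta_k)\|^2\ \ge\ \tfrac{\gamma}{2}\|\nabla\mathcal{V}(\vartheta_k)\|^2,
\]
where the last inequality uses $\gamma\le 1/L$. Summing over $k=0,\dots,K-1$ telescopes to $\tfrac{\gamma}{2}\sum_{k<K}\|\nabla\mathcal{V}(\vartheta_k)\|^2\le\mathcal{V}(\vartheta_K)-\mathcal{V}(\vartheta_0)\le\mathcal{V}^\star-\mathcal{V}(\vartheta_0)$. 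Bounding the minimum by the average then gives the stated estimate.

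For this to be meaningful, $\mathcal{V}^\star$ must be finite. This follows from $\mathcal{V}^\star\le 0$: by \eqref{eq:policy_theta} and the softmax form, $\log\pi_\vartheta(a\mid x)=A^\vartheta(x,a)\le 0$, and $\nu_{\pi_E}$ is a probability measure. Finiteness of $\mathcal{V}(\vartheta_0)$ is immediate from the finiteness of $Q_{\vartheta_0}$.

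For the final claim, the summed inequality shows that $\sum_{k\ge0}\|\nabla\mathcal{V}(\vartheta_k)\|^2\le 2(\mathcal{V}^\star-\mathcal{V}(\vartheta_0))/\gamma<\infty$, so the terms tend to zero. If the algorithm stops early at the tolerance check, the iterates are frozen and the bound holds trivially with the stopping iterate.

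The only delicate point is the descent-lemma step in infinite dimensions. There we must check that the Lipschitz constant of Theorem~\ref{RKHS_thm_smoothness} is with respect to the same Hilbert norm in which the gradient is represented via Riesz, which it is by that statement. Everything else is routine.
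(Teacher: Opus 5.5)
Your proposal is correct and follows the paper's proof step for step. Both use the ascent lemma from $L$-smoothness, the per-step gain $\frac{\gamma}{2}\|\nabla\mathcal{V}(\vartheta_k)\|^2$ for $\gamma\le 1/L$, the bound $\mathcal{V}\le 0$ from $\log\pi_\vartheta\le 0$, telescoping, minimum-versus-average, and summability for the limit claim; the only difference is that you derive the ascent lemma via the fundamental theorem of calculus where the paper simply cites it.

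One caveat: drop the early-stopping remark. A frozen sequence generally violates both the $O(1/K)$ bound for large $K$ and the limit claim, so the theorem, as in the paper, should be read for the un-stopped gradient-ascent iterates.
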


%\begin{theorem}
%\label{thm:rkhs_const_stepsize}
%For fixed step-size $\gamma \in \left( 0,\ \frac{1}{L}\right]$, the sequence generated by Algorithm~\ref{alg:rkhs_mll_ga} satisfies
%\[\|\nabla \mathcal{V}(\vartheta_k)\| \underset{k \to \infty}{\xrightarrow{\hspace*{0.8cm}}} 0.\]
%\end{theorem}

\begin{proof}
By the $L$-smoothness of $\mathcal{V}$ (Theorem~\ref{RKHS_thm_smoothness}), the standard ascent lemma gives, for every $\vartheta, \vartheta' \in \R^\X \times \mathcal{H}_k$,
%Since $\mathcal{V}$ is $L$-smooth, the standard quadratic lower bound holds for all $\vartheta,\vartheta'\in \R^{\X}\times \mathcal{H}_k$,
\begin{equation}
\label{eq:rkhs_ascent_lemma}
\mathcal{V}(\vartheta') \ge \mathcal{V}(\vartheta)
+ \left\langle \left.\nabla_\vartheta \mathcal{V}(\vartheta)\right|_{\vartheta=\vartheta},\,\vartheta'-\vartheta\right\rangle
- \frac{L}{2}\,\|\vartheta'-\vartheta\|^2 .
\end{equation}
Applying \eqref{eq:rkhs_ascent_lemma} at $\vartheta=\vartheta_k$, and taking
\(
\vartheta'=\vartheta_{k+1}
=\vartheta_k+\gamma\left.\nabla_\vartheta \mathcal{V}(\vartheta)\right|_{\vartheta=\vartheta_k},
\) we obtain
\begin{align*}
\mathcal{V}(\vartheta_{k+1})
&\ge \mathcal{V}(\vartheta_k)
+ \left\langle
\left.\nabla_\vartheta \mathcal{V}(\vartheta)\right|_{\vartheta=\vartheta_k},
\,\vartheta_{k+1}-\vartheta_k
\right\rangle
 \\ & \qquad- \frac{L}{2}\,\|\vartheta_{k+1}-\vartheta_k\|^2 \\
&= \mathcal{V}(\vartheta_k)
+ \left\langle
\left.\nabla_\vartheta \mathcal{V}(\vartheta)\right|_{\vartheta=\vartheta_k},
\,\gamma\left.\nabla_\vartheta \mathcal{V}(\vartheta)\right|_{\vartheta=\vartheta_k}
\right\rangle
\\ & \qquad- \frac{L}{2}\,\left\|\gamma\left.\nabla_\vartheta \mathcal{V}(\vartheta)\right|_{\vartheta=\vartheta_k}\right\|^2 \\
&= \mathcal{V}(\vartheta_k)
+ \Big(\gamma-\frac{L\gamma^2}{2}\Big)
\left\|
\left.\nabla_\vartheta \mathcal{V}(\vartheta)\right|_{\vartheta=\vartheta_k}
\right\|^2.
\end{align*}
Using $\gamma\le 1/L$ gives $\gamma-\frac{L\gamma^2}{2}\ge \gamma/2$, hence
\begin{equation}
\label{eq:rkhs_increment_eval}
\mathcal{V}(\vartheta_{k+1}) \ge \mathcal{V}(\vartheta_k)
+ \frac{\gamma}{2}
\left\|
\left.\nabla_\vartheta \mathcal{V}(\vartheta)\right|_{\vartheta=\vartheta_k}
\right\|^2.
\end{equation}
Therefore $\big(\mathcal{V}(\vartheta_k)\big)_{k\ge 0}$ is non-decreasing. Next, $\mathcal{V}$ is bounded above. Indeed, by definition
\[
\mathcal{V}(\vartheta)=\sum_{(x,a)\in \X \times \A}\nu_{\pi_E}(x,a)\,\log \pi_\vartheta(a\mid x),
\]
and since $0<\pi_\vartheta(a\mid x)\le 1$ we have $\log \pi_\vartheta(a\mid x)\le 0$. Because $\nu_{\pi_E}$ is a probability measure on $\X\times\A$, it follows that $\mathcal{V}(\vartheta)\le 0$ for all $\vartheta$. Hence $\big(\mathcal{V}(\vartheta_k)\big)_{k\geq 0}$ is monotone and bounded above, and thus convergent.
Summing~\eqref{eq:rkhs_increment_eval} from $k=0$ to $K-1$ and using $\mathcal{V}(\vartheta_K)\le \mathcal{V}^\star$,
\[
\frac{\gamma}{2}\sum_{k=0}^{K-1}\bigl\|\nabla\mathcal{V}(\vartheta_k)\bigr\|^2
\;\le\;
\mathcal{V}(\vartheta_K)-\mathcal{V}(\vartheta_0)
\;\le\;
\mathcal{V}^\star-\mathcal{V}(\vartheta_0).
\]
The minimum of a non-negative sequence is bounded by its average, so
\[
\min_{0\le k<K}\bigl\|\nabla\mathcal{V}(\vartheta_k)\bigr\|^{2}
\;\le\;
\frac{1}{K}\sum_{k=0}^{K-1}\bigl\|\nabla\mathcal{V}(\vartheta_k)\bigr\|^{2}
\;\le\;
\frac{2\bigl(\mathcal{V}^\star-\mathcal{V}(\vartheta_0)\bigr)}{\gamma K},
\]
proving the displayed bound. Letting $K\to\infty$ in the summed inequality yields $\displaystyle \sum_{k=0}^{\infty}\|\nabla\mathcal{V}(\vartheta_k)\|^2<\infty$, hence $\|\nabla\mathcal{V}(\vartheta_k)\|\to 0$.
\end{proof}

%\noindent Summing~\eqref{eq:rkhs_increment_eval} from $k=0$ to $K-1$ yields
%\[\frac{\gamma}{2}\sum_{k=0}^{K-1} \left\| \left.\nabla_\vartheta \mathcal{V}(\vartheta)\right|_{\vartheta=\vartheta_k} \right\|^2 \le \mathcal{V}(\vartheta_K)-\mathcal{V}(\vartheta_0) \le 0-\mathcal{V}(\vartheta_0). \]
%Letting $K\to\infty$ gives
%\begin{equation}\label{RKHS_const_step_size_V_div}  \sum_{k=0}^{\infty} \left\| \left.\nabla_\vartheta \mathcal{V}(\vartheta)\right|_{\vartheta=\vartheta_k} \right\|^2 <\infty,  \end{equation}
%which forces
%\[ \left\| \left.\nabla_\vartheta \mathcal{V}(\vartheta)\right|_{\vartheta=\vartheta_k} \right\| \underset{k \to \infty}{\xrightarrow{\hspace*{0.8cm}}} 0, \]
%since otherwise there would exist $\varepsilon>0$ and infinitely many $k$ with $\left\| \left.\nabla_\vartheta \mathcal{V}(\vartheta)\right|_{\vartheta=\vartheta_k} \right\|^2 \geq \varepsilon$, making the series diverge, which contradicts with~\eqref{RKHS_const_step_size_V_div}. This proves the claim.
%\end{proof}

Theorem~\ref{thm:rkhs_const_stepsize} justifies the use of constant-step-size gradient ascent and completes the RKHS-based solution framework. Combining Theorems~\ref{thm:rkhs-score-stationarity} and~\ref{thm:rkhs_const_stepsize}, the limit point $\vartheta_\star$ of Algorithm~\ref{alg:rkhs_mll_ga} yields, via the soft Bellman equation~\eqref{eq:policy_theta}, a policy $\pi_{\vartheta_\star}$ that solves the maximum causal entropy problem; by Proposition~\ref{prop:rkhs-mfe}, the pair $(\pi_{\vartheta_\star}, \mu_E)$ is a mean-field equilibrium under the recovered reward.

%{\color{blue}
%Theorem~\ref{thm:rkhs_const_stepsize} justifies the use of constant-step-size gradient ascent for optimising the score function and thereby completes the RKHS-based solution framework. In particular, the gradient ascent algorithm can be used to compute a stationary point of the score function, from which one can recover solutions to both the primal and dual problems.
%}
\section{Numerical Illustrations} \label{sec:numerics}

We now present two numerical illustrations of the proposed framework. The first example
concerns the finite-dimensional linear reward formulation of Section~\ref{sect:linear_reward}, while the second
illustrates the RKHS-based formulation of Section~\ref{sec:rkhs}. In each case, we first specify a
stationary mean-field game and construct an expert equilibrium, which provides the expert
mean-field term and the corresponding long-run statistics used in the inverse problem.
This evaluation procedure is in line with the MFG-IRL literature, where expert behaviour is
generated from an equilibrium and the inverse method is assessed through its ability to recover
the induced behaviour from expert statistics \citep{YaLiLiHu22,ChZhLiWi23}. The inverse procedure
is then applied \emph{without access} to the underlying reward parameters.

%The inverse procedures are then applied without using the underlying reward parameters.
The results demonstrate that the proposed methods recover policies that closely match the
expert behaviour and reproduce the associated equilibrium statistics.

\subsection{Malware-Spread Model}

We consider an average-reward stationary mean-field game motivated by the malware-spread model studied by \citet{SuAd19}. The discounted counterpart of this type of numerical experiment was considered in our earlier work~\citep{AnKaSa24}.

The state space is $\X=\{0,0.1,0.2,\ldots,0.9\}$,
where \(x\in \X\) represents the malware severity level of an agent. The action space is
\(\A=\{0,1\}\), where \(0\) corresponds to doing nothing and \(1\) corresponds to repair.

The transition kernel is given by
\[
p(y\mid x,a,\mu)
=
\begin{cases}
\dfrac{\1_{\{y\geq x\}}}
{\left|\{z\in \X:z\geq x\}\right|}, & a=0,\\[3mm]
\1_{\{y=0\}}, & a=1.
\end{cases}
\]
Hence, choosing repair resets the state to \(0\), whereas choosing to do nothing moves the
agent uniformly among the current and worse severity states.

In the infinite-population limit, the one-stage reward for each agent is given by
\[
    r(x,a,\mu)=\theta_1x+\theta_2x\mu_{\mathrm{av}}+\theta_3a, \quad \text{ where} \quad \mu_{\mathrm{av}}\coloneqq\sum_{z\in \X}z\mu(z).
\]
The quantity \(\mu_{\mathrm{av}}\) denotes the average malware severity in the population.
The coefficient \(\theta_1\) determines the reward loss due to an agent's own malware
severity, \(\theta_2\) determines the additional reward loss caused by the interaction between
the agent's severity and the population average severity, and \(\theta_3\) represents the
reward loss associated with choosing repair. In the numerical experiment, we take
\[
    \theta_1=-0.1,\qquad \theta_2=-1,\qquad \theta_3=-0.4.
\]
In the inverse problem, the parameter vector \(\theta=(\theta_1,\theta_2,\theta_3)\) is
unknown. Therefore, we assume that the reward belongs to the finite-dimensional linear class
\[
    \mathcal R
    =
    \left\{
    r(x,a,\mu)=\langle \theta,\varphi(x,a,\mu)\rangle:
    \theta\in\mathbb R^3
    \right\},
\]
where the feature map is $    \varphi(x,a,\mu)=\bigl(x,\;x\mu_{\mathrm{av}},\;a\bigr)$.

We first solve the forward stationary mean-field game under the known reward structure
above. The computed expert policy is
\[
\pi_E =
\begin{bmatrix}
1 & 1 & 1 & 1 & 1 & 0 & 0 & 0 & 0 & 0\\
0 & 0 & 0 & 0 & 0 & 1 & 1 & 1 & 1 & 1
\end{bmatrix},
\]
where the columns correspond to the states \(0,0.1,\ldots,0.9\), and the rows
correspond to the actions \(0\) and \(1\). The corresponding mean-field term is
\[
\begin{aligned}
\mu_E =&
\Big[0.404683,\;0.045527,\;0.052031,\;0.060702,\;0.072843,\\
&\qquad \qquad 0.072843,\;0.072843,\;0.072843,\;0.072843,\;0.072843\Big].
\end{aligned}
\]
The pair \((\pi_E,\mu_E)\) is a MFE: \(\pi_E\) is optimal for the fixed
mean-field term \(\mu_E\), and \(\mu_E\) is invariant under the transition kernel induced
by \(\pi_E\).
The average malware severity under this equilibrium is $\mu_{\mathrm{av}}= 0.317257$.

The expert occupation measure is given by $\nu_E(x,a)=\mu_E(x)\pi_E(a\mid x)$. For the MFE above, this gives
{\small
\[
\nu_E
=
\begin{bmatrix}
0.4047 & 0.0455 & 0.0520 & 0.0607 & 0.0728
& 0 & 0 & 0 & 0 & 0\\
0 & 0 & 0 & 0 & 0
& 0.0728 & 0.0728 & 0.0728 & 0.0728 & 0.0728
\end{bmatrix}.
\]
}
Using this occupation measure, the expert feature expectation vector is computed as
\[
    \langle \varphi\rangle_{\pi_E,\mu_E}
    =
    \sum_{(x,a)\in \X\times \A}
    \varphi(x,a,\mu_E)\nu_E(x,a)
    =
    \begin{bmatrix}
    0.317257 & 0.100652 & 0.364214
    \end{bmatrix}.
\]
We now solve the average-reward maximum causal entropy IRL problem using the expert
mean-field term \(\mu_E\) and the expert feature expectation vector
\(\langle\varphi\rangle_{\pi_E,\mu_E}\). The dual problem derived in
Section~\ref{sec:linear-dual} is solved by gradient descent. At the minimiser of the dual
objective, the optimal occupation measure \(\nu^\star\) is recovered, and the corresponding
policy $\pi$ is obtained from
\[
    \pi_{\nu^\star}(a\mid x)
    =
    \frac{\nu^\star(x,a)}{\nu^{\star,\X}(x)},
    \qquad
    \nu^{\star,\X}(x)=\sum_{a\in \A}\nu^\star(x,a).
\]

Starting from zero initial dual variables, we run the algorithm for \(8\times 10^4\)
iterations with step size \(\gamma=0.05<1/L\approx 0.0591\), using the smoothness
bound from Section~\ref{sec:linear-regularity}. The recovered policy is
{\small
\[
   \pi_{\nu^\star}
=
\begin{bmatrix}
0.9948 & 0.9949 & 0.9905 & 0.9761 & 0.8902
& 0.1361 & 0.0183 & 0.0040 & 0.0011 & 0.0003\\
0.0052 & 0.0051 & 0.0095 & 0.0239 & 0.1098
& 0.8639 & 0.9817 & 0.9960 & 0.9989 & 0.9997
\end{bmatrix}. 
\]
}
The recovered policy is close to the expert policy. For malware severity levels below
\(0.5\), the probability of repair is close to zero, whereas for malware severity levels above
\(0.5\), the probability of repair is close to one. At the boundary state \(x=0.5\), the
recovered repair probability is \(0.8639\). Quantitatively, the largest entrywise policy error is
\[
    \|\pi_{\nu^\star}-\pi_E\|_\infty
    =
    \max_{(x,a)\in \X\times \A}
    |\pi_{\nu^\star}(a\mid x)-\pi_E(a\mid x)|
    = 0.1361.
\]
Figure~\ref{fig:linear-malware-ten-state} visualises the numerical results. The left panel
shows that the dual objective decreases steadily and stabilises as the iterations progress.
The right panel compares the expert and recovered repair probabilities, showing that the
recovered policy follows the same behaviour as the expert policy across the malware severity
levels.

\begin{figure}[H]
    \centering
    \begin{minipage}[t]{0.48\columnwidth}
        \centering

        \includegraphics[width=\linewidth]{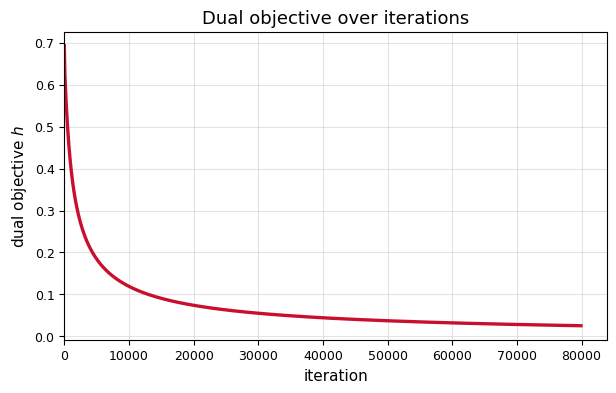}

        \vspace{0.4em}
        \small (a) Convergence of the dual objective
    \end{minipage}
    \hfill
    \begin{minipage}[t]{0.48\columnwidth}
        \centering

        \includegraphics[width=\linewidth]{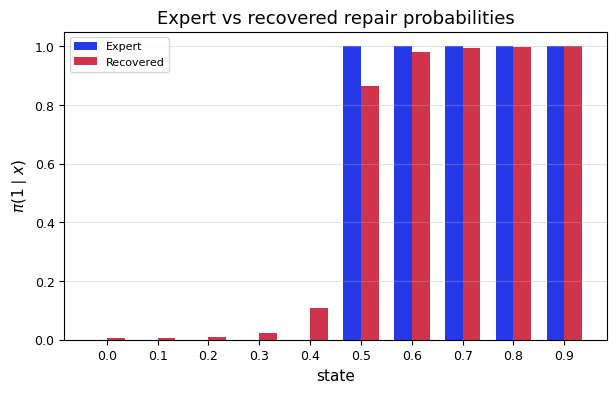}

        \vspace{0.4em}
        \small (b) Expert and recovered repair probabilities
    \end{minipage}
    \caption{Numerical results for the ten-state malware spread model.}
    \label{fig:linear-malware-ten-state}
\end{figure}

\subsection{RKHS-Based Consumer-Choice Model}
\label{sec:rkhs-simulation}

We next illustrate the RKHS-based formulation on an average-reward stationary consumer-choice model motivated by the consumer-choice example of \citet{Neumann20}. In this example, agents choose between two service providers through stay/change actions, and the inverse procedure is used to recover the expert equilibrium behaviour from the corresponding long-run statistics.

A state is written as $x=(i,j)$, where $i\in\{1,2\}$ denotes the
provider currently used by the agent and $j\in\{1,2\}$ denotes the preferred provider.
Thus
\[
\mathcal X=\{(1,1),(1,2),(2,1),(2,2)\},
\qquad
\mathcal A=\{\mathrm{stay},\mathrm{change}\}.
\]
For $x=(i,j)$, define the provider used after action $a$ by
\[
i^+(x,a)
\coloneqq
\begin{cases}
i, & a=\mathrm{stay},\\
3-i, & a=\mathrm{change},
\end{cases}
\qquad
x^+(x,a) \coloneqq (i^+(x,a),j).
\]
Hence, the action changes only the provider component, while the preference component
remains fixed. To ensure ergodicity and to satisfy the minorisation condition in
Assumption~\ref{ass:minorization}, we use the transition kernel
\[
p(y\mid x,a,\mu)
=
(1-\varepsilon)\1_{\{y=x^+(x,a)\}}
+
\varepsilon \nu_{\mathrm{noise}}(y),
\]
where $\nu_{\mathrm{noise}}$ is the uniform distribution on $\mathcal X$. In the experiment,
we set $\varepsilon=0.2$. Therefore, each transition probability is bounded below by
$0.05$, so that
\[
\xi=(0.05,0.05,0.05,0.05),
\qquad
\kappa=1-\sum_{y\in\mathcal X}\xi(y)=0.8.
\]

Let
\[
m_1(\mu)=\mu(1,1)+\mu(1,2),
\qquad
m_2(\mu)=\mu(2,1)+\mu(2,2)
\]
denote the population shares of providers $1$ and $2$, respectively. The expert data are
generated from the true non-linear reward
\[
\begin{aligned}
r(x,a,\mu)
={}&
0.1\log\bigl(m_{i^+(x,a)}(\mu)+\delta\bigr)
-
0.05\,m_{i^+(x,a)}^2(\mu)
+
0.05\,\1_{\{i^+(x,a)=1\}}  \\
&-
0.3\,\1_{\{a=\mathrm{change}\}}
-
0.1\,\1_{\{i^+(x,a)\neq j\}},
\end{aligned}
\]
where \(\delta=10^{-20}\) is used only to avoid the singularity of the logarithm at zero. This reward combines a non-linear mean-field dependence, a small bias
towards provider $1$, a switching cost, and a mismatch penalty.

Solving the forward stationary mean-field game with this reward yields the expert
mean-field equilibrium
\[
\mu_E=\big[0.45,0.25,0.05,0.25\big].
\]
 The expert policy is given by
\[
\pi_E
=
\begin{bmatrix}
1 & 1 & 0 & 1\\
0 & 0 & 1 & 0
\end{bmatrix},
\]
where the columns correspond to the states $(1,1),(1,2),(2,1),(2,2)$, respectively,
and the rows correspond to the actions $\mathrm{stay}$ and $\mathrm{change}$. Thus, the expert stays in states $(1,1)$, $(1,2)$, and $(2,2)$, while
changing provider in state $(2,1)$. The corresponding provider shares are
$m_1(\mu_E)=0.7$ and $m_2(\mu_E)=0.3$.

The inverse problem is solved without using the true reward parameters. The reward term
in the Lagrangian relaxation is modelled in an RKHS with a Gaussian kernel. For
$x=(i,j)$, define
\[
m_1(\mu)=\mu(1,1)+\mu(1,2),
\qquad
m_2(\mu)=\mu(2,1)+\mu(2,2),
\]
and encode $(x,a,\mu)$ as
\[
\eta(x,a,\mu)
=
\bigl(
\1_{\{i=1\}},
\1_{\{i=2\}},
\1_{\{j=1\}},
\1_{\{j=2\}},
\1_{\{a=\mathrm{stay}\}},
\1_{\{a=\mathrm{change}\}},
m_1(\mu),
m_2(\mu)
\bigr)\in\mathbb R^8 .
\]
The Gaussian kernel is then given by
\[
k\bigl((x,a,\mu),(x',a',\mu')\bigr)
=
\exp\left(
-\frac{\|\eta(x,a,\mu)-\eta(x',a',\mu')\|_2^2}{2\sigma^2}
\right),
\qquad
\sigma=0.9.
\]
We use the eight anchor points corresponding to all state-action pairs under the fixed
expert mean-field term $\mu_E=(0.45,0.25,0.05,0.25)$:
\[
\begin{aligned}
z_1 &= ((1,1),\mathrm{stay},\mu_E),&
z_2 &= ((1,1),\mathrm{change},\mu_E),\\
z_3 &= ((1,2),\mathrm{stay},\mu_E),&
z_4 &= ((1,2),\mathrm{change},\mu_E),\\
z_5 &= ((2,1),\mathrm{stay},\mu_E),&
z_6 &= ((2,1),\mathrm{change},\mu_E),\\
z_7 &= ((2,2),\mathrm{stay},\mu_E),&
z_8 &= ((2,2),\mathrm{change},\mu_E).
\end{aligned}
\]
At the expert mean-field term, the provider-share coordinates are
\[
m_1(\mu_E)=0.45+0.25=0.70,
\qquad
m_2(\mu_E)=0.05+0.25=0.30.
\]
With these anchors, the reward term used in the relaxed soft Bellman equation is
parametrised as
\[
r_\vartheta(x,a,\mu_E)
=
\zeta_x
+
\sum_{n=1}^{8}
c_n k\bigl((x,a,\mu_E),z_n\bigr),
\qquad
\vartheta=(\zeta,c_1,\ldots,c_8).
\]
Thus, the total number of parameters is
$|\mathcal X|+|\mathcal X||\mathcal A|=4+8=12$.

For each parameter $\vartheta$, the policy $\pi_\vartheta$ is obtained from the reduced
soft Bellman equation in Section~\ref{sect:RKHS_Bellman_equations}, using the frozen expert
mean-field term $\mu_E$ and the reward term $r_\vartheta$ above. We solve for
$\vartheta$ by gradient ascent on the log-likelihood score
\[
\mathcal V(\vartheta)
=
\sum_{(x,a)\in\mathcal X\times\mathcal A}
\nu_{\pi_E}(x,a)\log \pi_\vartheta(a\mid x).
\]
Starting from the zero parameter, we run the algorithm for $80000$ iterations with
step-size $\gamma=9\times 10^{-4}<1/L\approx 9.76\times 10^{-4}$, using the smoothness bound from Theorem~\ref{RKHS_thm_smoothness}. The final score
and gradient norm are
\[
\mathcal V(\vartheta)=-0.01969,
\qquad
\|\nabla_\vartheta \mathcal V(\vartheta)\|=0.01619.
\]

The recovered mean-field term is compared with the expert mean-field term in
Table~\ref{tab:rkhs-mean-field-comparison}. The resulting error is $\|\mu_\vartheta-\mu_E\|_1=0.007892$.

\begin{table}[htbp]
\centering
\caption{Comparison of the expert and recovered mean-field terms.}
\label{tab:rkhs-mean-field-comparison}
\begin{tabular}{c|c|c|c}
\hline
State $x$ & $\mu_E(x)$ & $\mu_\vartheta(x)$ & Absolute error \\
\hline
$(1,1)$ & $0.45$ & $0.446522$ & $0.003478$ \\
$(1,2)$ & $0.25$ & $0.250468$ & $0.000468$ \\
$(2,1)$ & $0.05$ & $0.053478$ & $0.003478$ \\
$(2,2)$ & $0.25$ & $0.249532$ & $0.000468$ \\
\hline
\end{tabular}
\end{table}

The recovered policy is reported in Table~\ref{tab:rkhs-policy-comparison}. The largest
absolute policy error is $0.06524$.

\begin{table}[htbp]
\centering
\caption{Comparison of the expert policy and the recovered RKHS policy.}
\label{tab:rkhs-policy-comparison}
\begin{tabular}{c|c|c|c|c}
\hline
State $x$ & Action $a$ & $\pi_E(a\mid x)$ & $\pi_\vartheta(a\mid x)$ & Absolute error \\
\hline
$(1,1)$ & stay   & $1$ & $0.998078$ & $0.001922$ \\
$(1,1)$ & change & $0$ & $0.001922$ & $0.001922$ \\
$(1,2)$ & stay   & $1$ & $0.969867$ & $0.030133$ \\
$(1,2)$ & change & $0$ & $0.030133$ & $0.030133$ \\
$(2,1)$ & stay   & $0$ & $0.065240$ & $0.065240$ \\
$(2,1)$ & change & $1$ & $0.934760$ & $0.065240$ \\
$(2,2)$ & stay   & $1$ & $0.969284$ & $0.030716$ \\
$(2,2)$ & change & $0$ & $0.030716$ & $0.030716$ \\
\hline
\end{tabular}
\end{table}

Figure~\ref{fig:rkhs-four-panel} summarises the numerical behaviour of the RKHS-based
inverse procedure. The score stabilises, the gradient norm decreases, the state-marginal
and feature-moment errors become small, and the recovered policy probabilities move
towards the expert actions.

\begin{figure}[H]
\centering

\begin{subfigure}{0.49\textwidth}
\centering
\includegraphics[width=\textwidth]{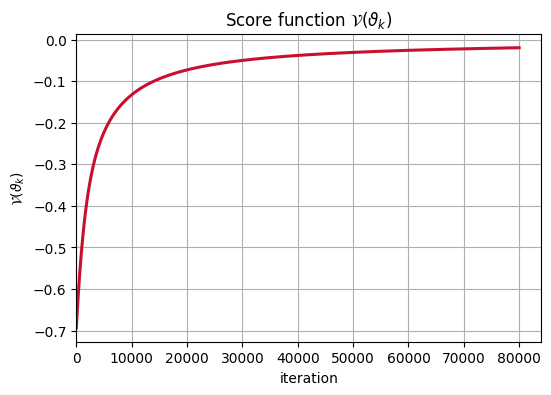}
\caption{Score function $\mathcal V(\vartheta_k)$}
\end{subfigure}
\hfill
\begin{subfigure}{0.49\textwidth}
\centering
\includegraphics[width=\textwidth]{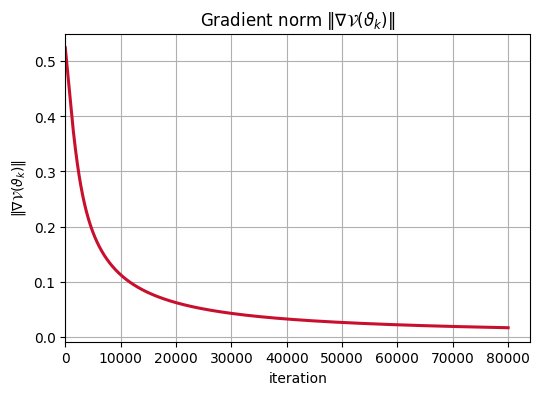}
\caption{Gradient norm $\|\nabla_\vartheta \mathcal V(\vartheta_k)\|$}
\end{subfigure}

\vspace{0.35cm}

\begin{subfigure}{0.49\textwidth}
\centering
\includegraphics[width=\textwidth]{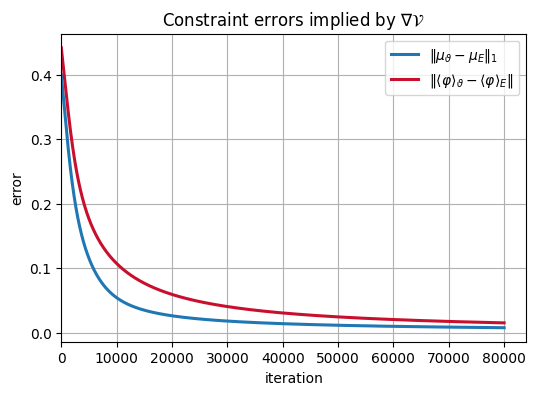}
\caption{Constraint errors}
\end{subfigure}
\hfill
\begin{subfigure}{0.49\textwidth}
\centering
\includegraphics[width=\textwidth]{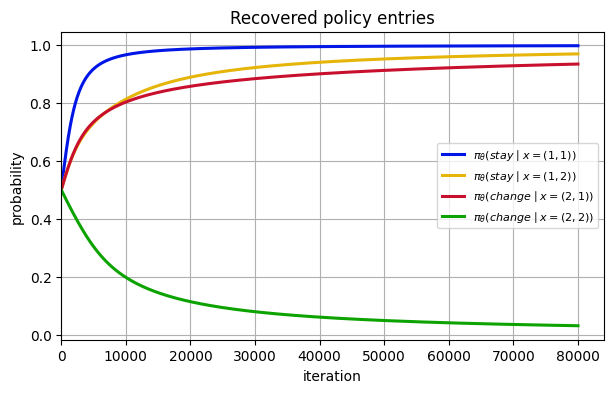}
\caption{Recovered policy entries}
\end{subfigure}

\caption{Numerical results for the RKHS-based consumer-choice model.}
\label{fig:rkhs-four-panel}
\end{figure}

\section{Conclusion}\label{sec:conclusion}
In this paper, we developed a unified maximum causal entropy framework for IRL in discrete-time, infinite-horizon stationary MFGs under the average-reward criterion. Within a single occupation-measure formulation, we treated both finite-dimensional linear rewards and infinite-dimensional rewards in an RKHS. For linear rewards, the problem reduces to a convex log-partition dual whose Lipschitz smoothness and strong convexity yield gradient descent with an explicit constant step size and linear convergence. For RKHS rewards, a Lagrangian relaxation gives a soft Bellman characterisation; the lack of a discount-factor contraction is resolved by a minorisation-based sub-stochastic kernel that restores a strict contraction, and Fr\'echet differentiability together with Lipschitz smoothness of the log-likelihood score yields a gradient-ascent algorithm with quantitative convergence guarantees in function space. Two numerical experiments, a malware-spread MFG and an RKHS-based consumer-choice model with
a non-linear expert reward, show that the recovered policies and mean-field terms closely match
the expert equilibrium.

Several directions remain open. Extending the framework to continuous state and action spaces, where the finite-dimensional log-partition representation is no longer available, would require a measure-theoretic treatment of the dual problem and of the associated Bellman fixed point in suitable function spaces. Finite-sample analysis from finitely many expert trajectories, as opposed to access to the exact expert occupation measure, is another natural next step, along with stochastic and variance-reduced gradient variants suitable for high-dimensional state and action spaces. Finally, the framework developed here is tied to the stationary, single-population, fully observed regime; extensions to time-dependent mean-field games, multi-population and heterogeneous-agent models, and partially observed dynamics are promising avenues for future research.

%In this paper, we studied inverse reinforcement learning for discrete-time, infinite-horizon mean-field games under an average-reward criterion. We formulated the inverse problem through the maximum causal entropy principle by enforcing consistency with both the expert mean-field interaction and the long-run feature statistics generated by expert demonstrations. Within a unified occupation-measure framework, we considered both finite-dimensional linear reward models and infinite-dimensional RKHS reward classes.

\acks{This work was supported by the Scientific and Technological Research Council of Turkey (TUBITAK), under Grant no: 1001-124F134.}

%\newpage

\appendix

\section{Auxiliary Results on Occupation Measures}\label{app:occupation-proofs}

This appendix collects the standard occupation-measure identities used throughout the paper. Each statement was given in Section~\ref{sect:occupation_measures}; here we record the proofs. All statements are taken under Assumption~\ref{ass:ergodicity}, which ensures that the relevant time-averaged limits exist.

\subsection*{The occupation measure is a probability measure}\label{app:nu-prob}

We first verify in detail the claim made after Definition~\ref{def:occupation_measures}: $\nu_\pi$ is a probability measure on $\X \times \A$. Under Assumption~\ref{ass:ergodicity}, for each $(x, a) \in \X \times \A$ the limit
\[
\nu_\pi(x, a)
=
\lim_{T \to \infty} \frac{1}{T} \sum_{t=0}^{T-1}
\mathbb{P}^{\pi, \mu_E}\big((x_t, a_t) = (x, a)\big)
\]
exists and is non-negative. Summing over all $(x, a) \in \X \times \A$,
\[
\sum_{x \in \X} \sum_{a \in \A} \nu_\pi(x, a)
=
\lim_{T \to \infty} \frac{1}{T} \sum_{t=0}^{T-1}
\sum_{x \in \X} \sum_{a \in \A}
\mathbb{P}^{\pi, \mu_E}\big((x_t, a_t) = (x, a)\big)
=
1,
\]
since for each $t$ the inner sum equals one. Hence $\nu_\pi$ is non-negative and has unit total mass.

\subsection*{Proof of Lemma~\ref{lem:occ-factorization} (Policy factorisation)}\label{app:proof-occ-factorization}
\begin{qedblock}
Fix $(x, a) \in \X \times \A$. Since $\pi$ is stationary, for each $t \geq 0$,
\[
\E^{\pi, \mu_E}\!\Big[\1_{\{(x_t, a_t) = (x, a)\}}\Big]
=
\E^{\pi, \mu_E}\!\Big[
\E^{\pi, \mu_E}\!\left[\1_{\{(x_t, a_t) = (x, a)\}} \mid x_t\right]
\Big].
\]
The inner conditional expectation evaluates as
\[
\E^{\pi, \mu_E}\!\left[\1_{\{(x_t, a_t) = (x, a)\}} \mid x_t\right]
=
\1_{\{x_t = x\}}\, \mathbb{P}^{\pi, \mu_E}(a_t = a \mid x_t)
=
\1_{\{x_t = x\}}\, \pi(a \mid x),
\]
and therefore
\[
\E^{\pi, \mu_E}\!\left[\1_{\{(x_t, a_t) = (x, a)\}}\right]
=
\pi(a \mid x)\,
\E^{\pi, \mu_E}\!\left[\1_{\{x_t = x\}}\right].
\]
Averaging over $t = 0, \ldots, T - 1$ and passing to the limit gives
\[
\nu_\pi(x, a)
=
\pi(a \mid x)\,
\lim_{T \to \infty} \frac{1}{T} \sum_{t=0}^{T-1}
\E^{\pi, \mu_E}\!\left[\1_{\{x_t = x\}}\right].
\]
By Definition~\ref{def:occupation_measures},
\[
\lim_{T \to \infty} \frac{1}{T} \sum_{t=0}^{T-1}
\E^{\pi, \mu_E}\!\left[\1_{\{x_t = x\}}\right]
=
\sum_{b \in \A} \nu_\pi(x, b)
=
\nu_\pi^\X(x).
\]
Hence
\[
\nu_\pi(x, a) = \pi(a \mid x)\, \nu_\pi^\X(x). 
\]
\end{qedblock}

\subsection*{Proof of Lemma~\ref{lem:flow} (Flow Balance)}\label{app:proof-flow}
\begin{qedblock}
Fix $x \in \X$. By Definition~\ref{def:occupation_measures},

{\small
\[
\sum_{(y, a) \in \X \times \A} p(x \mid y, a, \mu_E)\, \nu_\pi(y, a)
=
\lim_{T \to \infty} \frac{1}{T} \sum_{t=0}^{T-1}
\sum_{(y, a) \in \X \times \A}
p(x \mid y, a, \mu_E)\,
\E^{\pi, \mu_E}\!\left[\1_{\{(x_t, a_t) = (y, a)\}}\right].
\]}
Since $\X \times \A$ is finite, we may interchange the sum and the expectation:
{\small
\[
\sum_{(y, a) \in \X \times \A} p(x \mid y, a, \mu_E)\, \nu_\pi(y, a)
=
\lim_{T \to \infty} \frac{1}{T} \sum_{t=0}^{T-1}
\E^{\pi, \mu_E}\!\Bigg[
\sum_{(y, a) \in \X \times \A}
p(x \mid y, a, \mu_E)\, \1_{\{(x_t, a_t) = (y, a)\}}
\Bigg].
\]}
The indicator picks out the unique pair $(y, a) = (x_t, a_t)$, so
\[
\sum_{(y, a) \in \X \times \A} p(x \mid y, a, \mu_E)\, \nu_\pi(y, a)
=
\lim_{T \to \infty} \frac{1}{T} \sum_{t=0}^{T-1}
\E^{\pi, \mu_E}\!\left[p(x \mid x_t, a_t, \mu_E)\right].
\]
By the tower property,
\[
\E^{\pi, \mu_E}\!\left[p(x \mid x_t, a_t, \mu_E)\right]
=
\E^{\pi, \mu_E}\!\left[
\E^{\pi, \mu_E}\!\left[\1_{\{x_{t+1} = x\}} \mid x_t, a_t\right]
\right]
=
\E^{\pi, \mu_E}\!\left[\1_{\{x_{t+1} = x\}}\right].
\]
Therefore,
\[
\sum_{(y, a) \in \X \times \A} p(x \mid y, a, \mu_E)\, \nu_\pi(y, a)
=
\lim_{T \to \infty} \frac{1}{T} \sum_{t=0}^{T-1}
\E^{\pi, \mu_E}\!\left[\1_{\{x_{t+1} = x\}}\right].
\]
On the other hand, by definition,
\[
\nu_\pi^\X(x)
=
\lim_{T \to \infty} \frac{1}{T} \sum_{t=0}^{T-1}
\E^{\pi, \mu_E}\!\left[\1_{\{x_t = x\}}\right].
\]
The two Cesàro limits coincide because their finite-horizon difference is $O(1/T)$:
{\small
\begin{align*}
\left|
\frac{1}{T} \sum_{t=0}^{T-1} \E^{\pi, \mu_E}\!\left[\1_{\{x_{t+1} = x\}}\right]
-
\frac{1}{T} \sum_{t=0}^{T-1} \E^{\pi, \mu_E}\!\left[\1_{\{x_t = x\}}\right]
\right|
&=
\frac{1}{T}
\left|
\E^{\pi, \mu_E}\!\left[\1_{\{x_T = x\}}\right]
-
\E^{\pi, \mu_E}\!\left[\1_{\{x_0 = x\}}\right]
\right| \\
&\le \frac{1}{T}.
\end{align*}
}
Hence
\[
\nu_\pi^\X(x) = \sum_{(y, a) \in \X \times \A} p(x \mid y, a, \mu_E)\, \nu_\pi(y, a). 
\]
\end{qedblock}
\subsection*{Proof of Lemma~\ref{lem:feature-exp} (Static feature representation)}\label{app:proof-feature-exp}
\begin{qedblock}
By Definition~\ref{def:feature_exp_vec},
\begin{align*}
\langle \varphi \rangle_{\pi, \mu_E}
&= \lim_{T \to \infty} \frac{1}{T} \sum_{t=0}^{T-1}
\E^{\pi, \mu_E}\!\left[\varphi(x_t, a_t, \mu_E)\right] \\
&= \lim_{T \to \infty} \frac{1}{T} \sum_{t=0}^{T-1}
\E^{\pi, \mu_E}\!\Bigg[
\sum_{(x, a) \in \X \times \A}
\varphi(x, a, \mu_E)\, \1_{\{(x_t, a_t) = (x, a)\}}
\Bigg] \\
&= \sum_{(x, a) \in \X \times \A}
\varphi(x, a, \mu_E)\, \nu_\pi(x, a),
\end{align*}
where the interchange of the finite sum and the time-average limit is justified by the finiteness of $\X \times \A$.
\end{qedblock}

\section{Proofs From the Maximum Causal Entropy Section}\label{app:opt-proofs}

This appendix collects the proofs of two consequences of feasibility for the maximum causal entropy problem $\opt$ stated in Section~\ref{sect:MaxEnt}. Both proofs rely on the occupation-measure identities of Appendix~\ref{app:occupation-proofs}.

\subsection*{Proof of Lemma~\ref{lem:muE_nu_marginal}}\label{app:proof-muE-nu-marginal}
\begin{qedblock}
Since $\pi$ is feasible for $\opt$, it satisfies the first constraint:
\[
\mu_E(x) = \sum_{(y, a) \in \X \times \A} p(x \mid y, a, \mu_E)\, \pi(a \mid y)\, \mu_E(y),
\qquad \forall x \in \X.
\]
Hence $\mu_E$ is an invariant distribution for the controlled transition kernel under $(\pi, \mu_E)$. Since the initial distribution is also $\mu_E$, it follows that
\[
\mathbb{P}^{\pi, \mu_E}(x_t = x) = \mu_E(x),
\qquad \forall t \geq 0,\ \forall x \in \X.
\]
Therefore, by Definition~\ref{def:occupation_measures},
\[
\nu_\pi^\X(x)
=
\lim_{T \to \infty} \frac{1}{T} \sum_{t=0}^{T-1}
\E^{\pi, \mu_E}\!\left[\1_{\{x_t = x\}}\right]
=
\lim_{T \to \infty} \frac{1}{T} \sum_{t=0}^{T-1} \mu_E(x)
=
\mu_E(x).
\]
\end{qedblock}
\subsection*{Proof of Lemma~\ref{lem:re-writing-entropy}}\label{app:proof-rewriting-entropy}
\begin{qedblock}
By Definition~\ref{def:entropy} and Definition~\ref{def:occupation_measures},
\[
H(\pi)
=
\sum_{(x, a) \in \X \times \A}
-\log \pi(a \mid x)\, \nu_\pi(x, a).
\]
By the factorisation identity of Lemma~\ref{lem:occ-factorization},
\[
\nu_\pi(x, a) = \pi(a \mid x)\, \nu_\pi^\X(x),
\qquad (x, a) \in \X \times \A,
\]
so
\[
\pi(a \mid x) = \frac{\nu_\pi(x, a)}{\nu_\pi^\X(x)}
\quad \text{whenever } \nu_\pi^\X(x) > 0.
\]
If $\nu_\pi^\X(x) = 0$, then $\nu_\pi(x, a) = 0$ for every $a \in \A$, and the corresponding terms vanish under the conventions $0 \log 0 \coloneqq 0$, $0 \log(0/0) \coloneqq 0$. Hence
\[
H(\pi)
=
\sum_{(x, a) \in \X \times \A}
-\log\!\left(\frac{\nu_\pi(x, a)}{\nu_\pi^\X(x)}\right) \nu_\pi(x, a).
\]
Since $\pi$ is feasible for $\opt$, Lemma~\ref{lem:muE_nu_marginal} yields $\nu_\pi^\X(x) = \mu_E(x)$ for every $x \in \X$. Substituting,
\[
H(\pi)
=
\sum_{(x, a) \in \X \times \A}
-\log\!\left(\frac{\nu_\pi(x, a)}{\mu_E(x)}\right) \nu_\pi(x, a). 
\]
\end{qedblock}

\section{Proofs From the Finite-Dimensional Linear Reward Section}\label{app:linear-proofs}

This appendix collects the deferred proofs of two structural results from Section~\ref{sect:linear_reward}: the equivalence between the policy formulation $\opt$ and its occupation-measure reformulation $\optt$, and the min--max representation of $\optt$ used as the starting point of the dual analysis. Both proofs are routine, but their length warrants placement here rather than in the main exposition.

\subsection*{Proof of Theorem~\ref{thm:linear_correspondence} (equivalence of $\opt$ and $\optt$)}\label{app:proof-linear-correspondence}
\begin{qedblock}
We prove the two directions separately, and then verify that the map and its inverse are inverse to each other and that they preserve objective values.

\paragraph*{$(\Rightarrow)$\ \ From policies to occupation measures.}
Let $\pi \in \Pi$ be feasible for $\opt$, and consider the induced occupation measure $\nu_\pi$. By Lemma~\ref{lem:muE_nu_marginal}, $\nu_\pi^\X(x) = \mu_E(x)$ for all $x \in \X$. Combined with the flow identity in Lemma~\ref{lem:flow}, this yields
\[
\mu_E(x) = \sum_{(y, a) \in \X \times \A} p(x \mid y, a, \mu_E)\, \nu_\pi(y, a),
\qquad \forall x \in \X.
\]
Furthermore, by Lemma~\ref{lem:feature-exp},
\[
\sum_{(x, a) \in \X \times \A} \varphi(x, a, \mu_E)\, \nu_\pi(x, a) = \langle \varphi \rangle_{\pi_E, \mu_E}.
\]
Since $\nu_\pi$ is a probability measure on $\X \times \A$, we also have $\nu_\pi(x, a) \geq 0$. Hence $\nu_\pi$ satisfies all the constraints of $\optt$ and is therefore feasible. By Lemma~\ref{lem:re-writing-entropy}, the objective values at $\pi$ and at $\nu_\pi$ coincide.

\paragraph*{$(\Leftarrow)$\ \ From occupation measures to policies.}
Conversely, suppose $\nu \in \mathcal{P}(\X \times \A)$ is feasible for $\optt$. Define a policy $\pi_\nu \in \Pi$ by
\[
\pi_\nu(a \mid x) \coloneqq
\begin{cases}
\dfrac{\nu(x, a)}{\nu^\X(x)}, & \nu^\X(x) > 0, \\[4pt]
\pi_0(a \mid x), & \nu^\X(x) = 0,
\end{cases}
\]
where $\pi_0 \in \Pi$ is any fixed reference policy and $\nu^\X(x) \coloneqq \sum_a \nu(x, a)$.

Since $\nu$ is feasible for $\optt$, the marginal constraint gives $\nu^\X = \mu_E$, and the flow constraint yields
\[
\mu_E(x) = \sum_{(y, a) \in \X \times \A} p(x \mid y, a, \mu_E)\, \nu(y, a),
\qquad \forall x \in \X.
\]
Using the definition of $\pi_\nu$, we may write $\nu(y, a) = \nu^\X(y)\, \pi_\nu(a \mid y) = \mu_E(y)\, \pi_\nu(a \mid y)$, so the flow identity becomes
\[
\mu_E(x) = \sum_{y \in \X} \sum_{a \in \A} p(x \mid y, a, \mu_E)\, \pi_\nu(a \mid y)\, \mu_E(y),
\]
which says exactly that $\mu_E \in \Lambda(\pi_\nu)$.

Let $\nu_{\pi_\nu}$ denote the occupation measure induced by $\pi_\nu$ under the invariant distribution $\mu_E$. Since $\mu_E \in \Lambda(\pi_\nu)$, the state process under $(\pi_\nu, \mu_E)$ has invariant distribution $\mu_E$, so $\nu_{\pi_\nu}^\X(x) = \mu_E(x)$ for all $x \in \X$. Combined with Lemma~\ref{lem:occ-factorization},
\[
\nu_{\pi_\nu}(x, a) = \pi_\nu(a \mid x)\, \nu_{\pi_\nu}^\X(x) = \mu_E(x)\, \pi_\nu(a \mid x).
\]
By construction of $\pi_\nu$, this equals
\[
\mu_E(x)\, \frac{\nu(x, a)}{\nu^\X(x)} = \nu(x, a) \quad \text{whenever} \quad \nu^\X(x) > 0,
\]
and both sides vanish when $\nu^\X(x) = 0$ (since feasibility and non-negativity then force $\nu(x, a) = 0$). Hence $\nu_{\pi_\nu} = \nu$, and in particular $\nu_{\pi_\nu}^\X(x) = \mu_E(x)$ for all $x \in \X$.

It remains to verify feasibility of $\pi_\nu$ for $\opt$. We have already shown $\mu_E \in \Lambda(\pi_\nu)$. The feature-matching constraint holds by Lemma~\ref{lem:feature-exp}:
\[
\sum_{(x, a) \in \X \times \A} \varphi(x, a, \mu_E)\, \nu_{\pi_\nu}(x, a)
= \sum_{(x, a) \in \X \times \A} \varphi(x, a, \mu_E)\, \nu(x, a)
= \langle \varphi \rangle_{\pi_E, \mu_E}.
\]
Finally, by Lemma~\ref{lem:re-writing-entropy},
\[
H(\pi_\nu)
= \sum_{(x, a) \in \X \times \A} -\log\!\left(\frac{\nu_{\pi_\nu}(x, a)}{\mu_E(x)}\right) \nu_{\pi_\nu}(x, a)
= \sum_{(x, a) \in \X \times \A} -\log\!\left(\frac{\nu(x, a)}{\mu_E(x)}\right) \nu(x, a),
\]
which equals the value of $\optt$ at $\nu$. Hence $\pi_\nu$ is feasible for $\opt$ and achieves the same objective value as $\nu$ in $\optt$.

The two constructions $\pi \mapsto \nu_\pi$ and $\nu \mapsto \pi_\nu$ are inverse to each other on the respective feasible sets, and they preserve objective values. This completes the proof. 
\end{qedblock}

\subsection*{Proof of Proposition~\ref{prop:minmax-linear} (min--max representation)}\label{app:proof-minmax-linear}
\begin{qedblock}
Introduce Lagrange multipliers $\bm{\alpha} \in \R^k$ and $\bm{\beta}, \bm{\theta} \in \R^\X$ for the feature-matching, flow-balance, and marginal constraints of $\optt$, respectively. Then $\optt$ admits the representation
\begin{align*}
\optt = \max_{\nu \in \mathcal{P}(\X \times \A)}
\min_{\substack{\bm{\alpha} \in \R^k \\ \bm{\beta}, \bm{\theta} \in \R^\X}}
\Bigg\{
&\sum_{(x, a) \in \X \times \A} -\log\!\left(\frac{\nu(x, a)}{\mu_E(x)}\right) \nu(x, a) \\
&+ \left\langle \bm{\alpha},\, \sum_{(x, a) \in \X \times \A} \varphi(x, a, \mu_E)\, \nu(x, a) - \langle \varphi \rangle_{\pi_E, \mu_E} \right\rangle \\
&+ \sum_{z \in \X} \bm{\beta}_z \left( \sum_{(x, a) \in \X \times \A} p(z \mid x, a, \mu_E)\, \nu(x, a) - \mu_E(z) \right) \\
&+ \sum_{x \in \X} \bm{\theta}_x \big( \nu^\X(x) - \mu_E(x) \big)
\Bigg\}.
\end{align*}
If $\nu$ is feasible for $\optt$, all constraint residuals vanish and the inner minimisation leaves only the objective of $\optt$; if any constraint is violated, the inner minimisation diverges to $-\infty$ since the multipliers are unrestricted.

We now simplify the inner expression. Using the entropy decomposition
\[
\sum_{(x, a) \in \X \times \A} -\log\!\left(\frac{\nu(x, a)}{\mu_E(x)}\right) \nu(x, a)
= H(\nu) + \sum_{(x, a) \in \X \times \A} \log(\mu_E(x))\, \nu(x, a),
\]
we obtain
\begin{align*}
&
\sum_{(x, a)} -\log\!\left(\frac{\nu(x, a)}{\mu_E(x)}\right) \nu(x, a)
+ \left\langle \bm{\alpha}, \sum_{(x, a)} \varphi(x, a, \mu_E)\, \nu(x, a) - \langle \varphi \rangle_{\pi_E, \mu_E} \right\rangle \\
&\quad + \sum_z \bm{\beta}_z \left( \sum_{(x, a)} p(z \mid x, a, \mu_E)\, \nu(x, a) - \mu_E(z) \right)
+ \sum_x \bm{\theta}_x \big( \nu^\X(x) - \mu_E(x) \big) \\
&= H(\nu) + \sum_{(x, a)} \log(\mu_E(x))\, \nu(x, a)
+ \sum_{(x, a)} \langle \bm{\alpha}, \varphi(x, a, \mu_E) \rangle\, \nu(x, a)
- \langle \bm{\alpha}, \langle \varphi \rangle_{\pi_E, \mu_E} \rangle \\
&\quad + \sum_{(x, a)} \nu(x, a) \sum_z \bm{\beta}_z\, p(z \mid x, a, \mu_E)
- \sum_z \bm{\beta}_z\, \mu_E(z)
+ \sum_x \bm{\theta}_x\, \nu^\X(x)
- \sum_x \bm{\theta}_x\, \mu_E(x).
\end{align*}
Using $\nu^\X(x) = \sum_a \nu(x, a)$, the display simplifies to
\begin{align*}
H(\nu)
&+ \sum_{(x, a) \in \X \times \A} \nu(x, a) \bigg[
\log \mu_E(x)
+ \langle \bm{\alpha}, \varphi(x, a, \mu_E) \rangle
+ \bm{\theta}_x
+ \sum_z \bm{\beta}_z\, p(z \mid x, a, \mu_E)
\bigg] \\
&\quad - \langle \bm{\alpha}, \langle \varphi \rangle_{\pi_E, \mu_E} \rangle
- \sum_z \bm{\beta}_z\, \mu_E(z)
- \sum_x \bm{\theta}_x\, \mu_E(x).
\end{align*}
Since $\nu \in \mathcal{P}(\X \times \A)$, we have $\sum_{(x, a)} \nu(x, a) = 1$, so the term $-\sum_z \bm{\beta}_z\, \mu_E(z)$ can be absorbed into the coefficient of $\nu(x, a)$:
{\small
\begin{align*}
H(\nu)
&+ \sum_{(x, a) \in \X \times \A} \nu(x, a) \bigg[
\log \mu_E(x)
+ \langle \bm{\alpha}, \varphi(x, a, \mu_E) \rangle
+ \bm{\theta}_x
+ \sum_z \bm{\beta}_z \big( p(z \mid x, a, \mu_E) - \mu_E(z) \big)
\bigg] \\
&\quad - \langle \bm{\alpha}, \langle \varphi \rangle_{\pi_E, \mu_E} \rangle
- \sum_x \bm{\theta}_x\, \mu_E(x).
\end{align*}}
This is exactly the representation in the statement of the proposition.
\end{qedblock}

\subsection*{Proof of Lemma~\ref{lem:gauge_fixed_invariance} 
(Invariance of the gauge-fixed parameter space)}
\label{app:proof-gauge-fixed-invariance}

\begin{qedblock}
The only components that require verification are the $\boldsymbol{\beta}$ and
$\boldsymbol{\theta}$-components, since the $\boldsymbol{\alpha}$-component is unconstrained and always
belongs to $\R^k$. Recall that
\[
\R^\X_0=\{v\in\R^\X:\langle v,\mathbf 1\rangle=0\},
\]
where $\mathbf 1\in\R^\X$ denotes the vector whose every component is equal to
one. Thus, it is enough to show that the $\boldsymbol{\beta}$ and $\boldsymbol{\theta}$-components of
the gradient have zero inner product with $\mathbf 1$.

First, for arbitrary $(\boldsymbol{\alpha},\boldsymbol{\beta},\boldsymbol{\theta})$, we have
\[
\nabla_{\boldsymbol{\beta}}h(\boldsymbol{\alpha},\boldsymbol{\beta},\boldsymbol{\theta})
=
\sum_{(x,a)\in\X\times\A}
\nu^\star(x,a)
\big(p(\cdot\mid x,a,\mu_E)-\mu_E(\cdot)\big).
\]
Taking the inner product with $\mathbf 1$ gives
\begin{align*}
\left\langle
\nabla_{\boldsymbol{\beta}}h(\boldsymbol{\alpha},\boldsymbol{\beta},\boldsymbol{\theta}),\mathbf 1
\right\rangle
&=
\sum_{(x,a)\in\X\times\A}
\nu^\star(x,a)
\left(
\sum_{z\in\X}p(z\mid x,a,\mu_E)
-
\sum_{z\in\X}\mu_E(z)
\right)  \\
&=
\sum_{(x,a)\in\X\times\A}
\nu^\star(x,a)(1-1)
=0.
\end{align*}
Hence
\[
\nabla_{\boldsymbol{\beta}}h(\boldsymbol{\alpha},\boldsymbol{\beta},\boldsymbol{\theta})\in\R^\X_0.
\]

\noindent Similarly,
\[
\nabla_{\boldsymbol{\theta}}h(\boldsymbol{\alpha},\boldsymbol{\beta},\boldsymbol{\theta})
=
\sum_{(x,a)\in\X\times\A}
\nu^\star(x,a)e_x(\cdot)
-
\mu_E(\cdot).
\]
Taking the inner product with $\mathbf 1$, and using
$\langle e_x(\cdot),\mathbf 1\rangle=1$ and
$\langle\mu_E(\cdot),\mathbf 1\rangle=1$, we obtain
\begin{align*}
\left\langle
\nabla_{\boldsymbol{\theta}}h(\boldsymbol{\alpha},\boldsymbol{\beta},\boldsymbol{\theta}),\mathbf 1
\right\rangle
&=
\sum_{(x,a)\in\X\times\A}
\nu^\star(x,a)
-
1
=0,
\end{align*}
since $\nu^\star$ is a probability distribution on $\X\times\A$. Thus
\[
\nabla_{\boldsymbol{\theta}}h(\boldsymbol{\alpha},\boldsymbol{\beta},\boldsymbol{\theta})\in\R^\X_0.
\]

\noindent We now prove the invariance of the iterates. Suppose that, for some $n\geq0$,
\[
\boldsymbol{\beta}_n,\boldsymbol{\theta}_n\in\R^\X_0.
\]
Since $\R^\X_0$ is a linear subspace, and since
\[
\nabla_{\boldsymbol{\beta}}h(\boldsymbol{\alpha}_n,\boldsymbol{\beta}_n,\boldsymbol{\theta}_n)\in\R^\X_0,
\qquad
\nabla_{\boldsymbol{\theta}}h(\boldsymbol{\alpha}_n,\boldsymbol{\beta}_n,\boldsymbol{\theta}_n)\in\R^\X_0,
\]
we have
\[
\boldsymbol{\beta}_{n+1}
=
\boldsymbol{\beta}_n-\delta\nabla_{\boldsymbol{\beta}}h(\boldsymbol{\alpha}_n,\boldsymbol{\beta}_n,\boldsymbol{\theta}_n)
\in\R^\X_0
\]
and
\[
\boldsymbol{\theta}_{n+1}
=
\boldsymbol{\theta}_n-\delta\nabla_{\boldsymbol{\theta}}h(\boldsymbol{\alpha}_n,\boldsymbol{\beta}_n,\boldsymbol{\theta}_n)
\in\R^\X_0.
\]
The initial condition gives $\boldsymbol{\beta}_0,\boldsymbol{\theta}_0\in\R^\X_0$. Hence, by induction,
$\boldsymbol{\beta}_n,\boldsymbol{\theta}_n\in\R^\X_0$ for every $n\geq0$. Since
$\boldsymbol{\alpha}_n\in\R^k$ for every $n$, we conclude that
\[
(\boldsymbol{\alpha}_n,\boldsymbol{\beta}_n,\boldsymbol{\theta}_n)
\in
\R^k\times\R^\X_0\times\R^\X_0
\]
for every $n\geq0$.
\end{qedblock}

\section{Proofs From the RKHS Reward Section}\label{app:rkhs-proofs}

This appendix collects the deferred proofs from Section~\ref{sec:rkhs}: the equivalence between $\opt$ and $\opttH$, the Fréchet differentiability of the Bellman fixed point, the score-stationarity characterisation, and the Lipschitz smoothness of the score function.

\subsection*{Proof of Proposition~\ref{prop:RKHS_equivilance} (equivalence of $\opt$ and $\opttH$)}\label{app:proof-rkhs-equivalence}
\begin{qedblock}
Since $\opt$ and $\opttH$ share the same objective and feature-matching constraint, it suffices to show that their first constraints are equivalent.

\paragraph*{$(\Rightarrow)$}\
Suppose $\pi \in \mathcal{P}(\A \mid \X)$ satisfies the first constraint of $\opt$:
\[
\mu_E(x) = \sum_{(y, a) \in \X \times \A} p(x \mid y, a, \mu_E)\, \pi(a \mid y)\, \mu_E(y), \qquad \forall x \in \X.
\]
Then $\mu_E$ is an invariant distribution for the Markov chain induced by $\pi$ under the fixed mean-field term $\mu_E$. By Assumption~\ref{ass:ergodicity}, this chain is positive Harris recurrent and aperiodic, so its time averages converge to the invariant distribution. Hence
\[
\lim_{T \to \infty} \frac{1}{T} \sum_{t=0}^{T-1} \E^{\pi, \mu_E} \!\left[ \1_{\{x_t = x\}} \right] = \mu_E(x), \qquad \forall x \in \X,
\]
which is the first constraint of $\opttH$.

\paragraph*{$(\Leftarrow)$}\
Conversely, suppose $\pi \in \mathcal{P}(\A \mid \X)$ satisfies
\begin{equation}\label{eq:rkhs-optH-state-average}
\lim_{T \to \infty} \frac{1}{T} \sum_{t=0}^{T-1} \E^{\pi, \mu_E} \!\left[ \1_{\{x_t = x\}} \right] = \mu_E(x), \qquad \forall x \in \X.
\end{equation}
For each $t \geq 0$, set $m_t(x) \coloneqq \E^{\pi, \mu_E}[\1_{\{x_t = x\}}]$. By the tower property,
\[
m_{t+1}(x) = \sum_{(y, a) \in \X \times \A} p(x \mid y, a, \mu_E)\, \pi(a \mid y)\, m_t(y), \qquad \forall x \in \X.
\]
Averaging over $t = 0, \ldots, T - 1$,
\[
\frac{1}{T} \sum_{t=0}^{T-1} m_{t+1}(x)
= \sum_{(y, a) \in \X \times \A} p(x \mid y, a, \mu_E)\, \pi(a \mid y) \left( \frac{1}{T} \sum_{t=0}^{T-1} m_t(y) \right).
\]
On the other hand, by telescoping,
\[
\frac{1}{T} \sum_{t=0}^{T-1} m_{t+1}(x) = \frac{1}{T} \sum_{t=0}^{T-1} m_t(x) + \frac{m_T(x) - m_0(x)}{T},
\]
and since $0 \leq m_t(x) \leq 1$, the last term vanishes as $T \to \infty$. Passing to the limit and using~\eqref{eq:rkhs-optH-state-average},
\[
\mu_E(x) = \sum_{(y, a) \in \X \times \A} p(x \mid y, a, \mu_E)\, \pi(a \mid y)\, \mu_E(y), \qquad \forall x \in \X,
\]
which is the first constraint of $\opt$.
\end{qedblock}

\subsection*{Proof of Proposition~\ref{prop:Q-differentiable} (Fréchet differentiability of $Q_\vartheta$)}\label{app:proof-rkhs-Q-differentiable}
\begin{qedblock}
Since $r_\vartheta$ is linear in $\vartheta$ and $V_Q$ is $C^\infty$ as a map $\ell_\infty(\X \times \A) \to \ell_\infty(\X)$, the map $F(Q, \vartheta) = Q - \mathcal{T}^\vartheta Q$ is continuously Fréchet differentiable in $(Q, \vartheta) \in \R^{\X \times \A} \times (\R^\X \times \mathcal{H}_k)$.

A direct computation gives the Fréchet derivative of $V_Q$ at $Q$ in direction $H \in \ell_\infty(\X \times \A)$:
\[
D_Q V_Q[H](x) = \sum_{b \in \A} \frac{e^{Q(x, b)}}{\sum_{b' \in \A} e^{Q(x, b')}}\, H(x, b),
\]
which satisfies
\begin{equation}\label{RKHS_H_bound}
|D_Q V_Q[H](x)| \leq \sup_{a \in \A} |H(x, a)| \leq \|H\|_\infty.
\end{equation}
Therefore the Fréchet derivative of $\mathcal{T}^\vartheta$ with respect to $Q$ is
\small
\begin{equation}\label{RKHS_T_Fderivative}
\begin{aligned}
    D_Q \mathcal{T}^\vartheta(Q)[H](x, a)
&= \sum_{y \in \X} \Tilde{p}(y \mid x, a, \mu_E)\, D_Q V_Q[H](y)
\\&= \sum_{y \in \X} \Tilde{p}(y \mid x, a, \mu_E)\, \sum_{b \in \A} \frac{e^{Q(y, b)}}{\sum_{b' \in \A} e^{Q(y, b')}}\, H(y, b).
\end{aligned}
\end{equation}
\normalsize
For any pair $(Q, \vartheta)$,
\begin{equation}\label{RKHS_absT_derivative_bound}
\begin{aligned}
|D_Q \mathcal{T}^\vartheta(Q)[H](x, a)|
&\leq \sum_{y \in \X} \Tilde{p}(y \mid x, a, \mu_E)\, |D_Q V_Q[H](y)| \\
&\overset{\eqref{RKHS_H_bound}}{\leq} \sum_{y \in \X} \Tilde{p}(y \mid x, a, \mu_E)\, \|H\|_\infty \\
&\leq \kappa\, \|H\|_\infty.
\end{aligned}
\end{equation}
Taking the supremum over $(x, a)$ gives the bound~\eqref{eq:DT-bound}:
\begin{equation}\label{RKHS_DQT_bound}
\|D_Q \mathcal{T}^\vartheta(Q)[H]\|_\infty \leq \kappa\, \|H\|_\infty.
\end{equation}
Hence $D_Q \mathcal{T}^\vartheta(Q)$ is a bounded linear operator on $\ell_\infty(\X \times \A)$. By induction, for every $n \geq 0$,
\begin{equation}\label{RKHS_DQT_bound_n}
\|(D_Q \mathcal{T}^\vartheta(Q))^n[H]\|_\infty \leq \kappa^n\, \|H\|_\infty.
\end{equation}
Since $\kappa \in (0, 1)$, the Neumann series $\sum_{n = 0}^\infty (D_Q \mathcal{T}^\vartheta(Q))^n$ converges and equals $(I - D_Q \mathcal{T}^\vartheta(Q))^{-1}$:
\[
(I - D_Q \mathcal{T}^\vartheta(Q))^{-1} = \sum_{n = 0}^\infty (D_Q \mathcal{T}^\vartheta(Q))^n.
\]
For every $H$,
\begin{equation}
    \begin{aligned}
        \|(I - D_Q \mathcal{T}^\vartheta(Q))^{-1}[H]\|_\infty
&\leq \sum_{n = 0}^\infty \|(D_Q \mathcal{T}^\vartheta(Q))^n[H]\|_\infty
\\&\leq \sum_{n = 0}^\infty \kappa^n\, \|H\|_\infty
= \frac{1}{1 - \kappa}\, \|H\|_\infty,
    \end{aligned}
\end{equation}
giving the resolvent bound~\eqref{eq:resolvent-bound}. The Jacobian $\nabla_Q F(Q, \vartheta) = I - D_Q \mathcal{T}^\vartheta(Q)$ is therefore invertible. By the implicit function theorem, the unique fixed point $Q_\vartheta$ is Fréchet differentiable in $\vartheta \in \R^\X \times \mathcal{H}_k$. 
\end{qedblock}

\subsection*{Proof of Theorem~\ref{thm:rkhs-score-stationarity} (score stationarity)}\label{app:proof-rkhs-score-stationarity}
\begin{qedblock}
Define the vector-valued function
\[
f(x, a) \coloneqq
\begin{bmatrix}
e_x \\
\varphi(x, a, \mu_E)
\end{bmatrix}
\in \R^\X \times \mathcal{H}_k,
\]
so that $\nabla_\vartheta r_\vartheta(x, a, \mu_E) = f(x, a)$. Differentiating the soft Bellman equations gives
\begin{equation}\label{eq:gradQ}
\nabla_\vartheta Q_\vartheta(x, a) = f(x, a) + \sum_{y \in \X} \Tilde{p}(y \mid x, a, \mu_E)\, \nabla_\vartheta V_{Q_\vartheta}(y),
\end{equation}
\begin{equation}\label{eq:gradV}
\nabla_\vartheta V_{Q_\vartheta}(x)
= \sum_{a \in \A} \frac{e^{Q_\vartheta(x, a)}}{\sum_{b \in \A} e^{Q_\vartheta(x, b)}}\, \nabla_\vartheta Q_\vartheta(x, a)
= \sum_{a \in \A} \pi_\vartheta(a \mid x)\, \nabla_\vartheta Q_\vartheta(x, a).
\end{equation}

\paragraph*{Step 1: Gradient identity.}
Multiplying~\eqref{eq:gradQ} by $\nu_{\pi_\vartheta}(x, a)$ and summing over $(x, a)$ gives
\begin{equation}\label{RKHS_thm6_eq1}
\begin{aligned}
    \sum_{(x, a)} \nu_{\pi_\vartheta}(x, a)\, \nabla_\vartheta Q_\vartheta(x, a)
&= \sum_{(x, a)} \nu_{\pi_\vartheta}(x, a)\, f(x, a)
\\& \qquad+ \sum_{(x, a)} \nu_{\pi_\vartheta}(x, a) \sum_{y \in \X} \Tilde{p}(y \mid x, a, \mu_E)\, \nabla_\vartheta V_{Q_\vartheta}(y).
\end{aligned}
\end{equation}
By Lemma~\ref{lem:occ-factorization} and~\eqref{eq:gradV},
\[
\sum_{(x, a)} \nu_{\pi_\vartheta}(x, a)\, \nabla_\vartheta Q_\vartheta(x, a)
= \sum_{x} \nu^\X_{\pi_\vartheta}(x) \sum_{a} \pi_\vartheta(a \mid x)\, \nabla_\vartheta Q_\vartheta(x, a)
= \sum_{x} \nu^\X_{\pi_\vartheta}(x)\, \nabla_\vartheta V_{Q_\vartheta}(x).
\]
For the $\Tilde{p}$-term on the right-hand side of~\eqref{RKHS_thm6_eq1}, interchange the order of summation and apply Lemma~\ref{lem:substochastic-flow} to obtain
\[
\sum_{(x, a)} \nu_{\pi_\vartheta}(x, a) \sum_{y \in \X} \Tilde{p}(y \mid x, a, \mu_E)\, \nabla_\vartheta V_{Q_\vartheta}(y)
= \sum_{y \in \X} \big( \nu^\X_{\pi_\vartheta}(y) - \xi(y) \big)\, \nabla_\vartheta V_{Q_\vartheta}(y).
\]
Substituting these two identities back into~\eqref{RKHS_thm6_eq1} and rearranging,
\begin{equation}\label{eq:xi_weighted_gradV_identity}
\sum_{(x, a)} \nu_{\pi_\vartheta}(x, a)\, f(x, a) = \sum_{y \in \X} \xi(y)\, \nabla_\vartheta V_{Q_\vartheta}(y),
\end{equation}
which by Definition~\ref{def:occupation_measures} reads
\[
\lim_{T \to \infty} \frac{1}{T} \sum_{t=0}^{T-1} \E^{\pi_\vartheta, \mu_E} [f(x_t, a_t)] = \sum_{y \in \X} \xi(y)\, \nabla_\vartheta V_{Q_\vartheta}(y).
\]

\paragraph*{Step 2: Computation of $\nabla_\vartheta \mathcal{V}(\vartheta)$.}
Using $\log \pi_\vartheta(a \mid x) = Q_\vartheta(x, a) - V_{Q_\vartheta}(x)$ from~\eqref{eq:policy_theta}, substituting into~\eqref{eq:score_fn_2}, and differentiating,
\begin{align*}
\nabla_\vartheta \mathcal{V}(\vartheta)
&= \lim_{T \to \infty} \frac{1}{T} \sum_{t=0}^{T-1} \E^{\pi_E, \mu_E} \!\big[ \nabla_\vartheta Q_\vartheta(x_t, a_t) - \nabla_\vartheta V_{Q_\vartheta}(x_t) \big] \\
&\overset{\eqref{eq:gradQ}}{=} \langle f \rangle_{\pi_E, \mu_E}
+ \lim_{T \to \infty} \frac{1}{T} \sum_{t=0}^{T-1} \E^{\pi_E, \mu_E}\!\Bigg[ \sum_{y \in \X} \Tilde{p}(y \mid x_t, a_t, \mu_E)\, \nabla_\vartheta V_{Q_\vartheta}(y) - \nabla_\vartheta V_{Q_\vartheta}(x_t) \Bigg].
\end{align*}
Writing $\Tilde{p} = p - \xi$, the tower property gives
\[
\E^{\pi_E, \mu_E}\!\Big[ \sum_{y \in \X} p(y \mid x_t, a_t, \mu_E)\, \nabla_\vartheta V_{Q_\vartheta}(y) \Big]
= \E^{\pi_E, \mu_E}\!\big[ \nabla_\vartheta V_{Q_\vartheta}(x_{t+1}) \big],
\]
so the time-averaged telescoping
\[
\lim_{T \to \infty} \frac{1}{T} \sum_{t=0}^{T-1} \E^{\pi_E, \mu_E}\!\big[ \nabla_\vartheta V_{Q_\vartheta}(x_{t+1}) - \nabla_\vartheta V_{Q_\vartheta}(x_t) \big] = 0
\]
holds. Combining with~\eqref{eq:xi_weighted_gradV_identity},
\[
\nabla_\vartheta \mathcal{V}(\vartheta)
= \langle f \rangle_{\pi_E, \mu_E} - \sum_{y \in \X} \xi(y)\, \nabla_\vartheta V_{Q_\vartheta}(y)
= \langle f \rangle_{\pi_E, \mu_E} - \lim_{T \to \infty} \frac{1}{T} \sum_{t=0}^{T-1} \E^{\pi_\vartheta, \mu_E}[f(x_t, a_t)].
\]
Hence
\[
\nabla_\vartheta \mathcal{V}(\vartheta^\star) = 0
\iff
\langle f \rangle_{\pi_E, \mu_E} = \lim_{T \to \infty} \frac{1}{T} \sum_{t=0}^{T-1} \E^{\pi_{\vartheta^\star}, \mu_E}[f(x_t, a_t)].
\]
Reading off the two components of $f$,
\[
\langle f \rangle_{\pi_E, \mu_E}
= \begin{bmatrix} \mu_E \\ \langle \varphi \rangle_{\pi_E, \mu_E} \end{bmatrix},
\quad
\lim_{T \to \infty} \frac{1}{T} \sum_{t=0}^{T-1} \E^{\pi_\vartheta, \mu_E}[f(x_t, a_t)]
= \begin{bmatrix} \lim_T \frac{1}{T} \sum_t \E^{\pi_\vartheta, \mu_E}[\1_{\{x_t = \cdot\}}] \\ \lim_T \frac{1}{T} \sum_t \E^{\pi_\vartheta, \mu_E}[\varphi(x_t, a_t, \mu_E)] \end{bmatrix}.
\]
Therefore $\nabla_\vartheta \mathcal{V}(\vartheta^\star) = 0$ if and only if $\pi_{\vartheta^\star}$ satisfies both constraints of $\opttH$.

\paragraph*{Step 3: Strong duality.}
Let $\pi^\star$ be an optimal solution of $\opttH$. Feasibility makes the Lagrangian-multiplier terms in $\mathcal{L}(\pi^\star, \vartheta)$ vanish, so for every $\vartheta$,
\[
\opttH = H(\pi^\star) = \mathcal{L}(\pi^\star, \vartheta) \leq \max_\pi \mathcal{L}(\pi, \vartheta) = \mathcal{D}(\vartheta).
\]
Taking the minimum over $\vartheta$ preserves the inequality:
\begin{equation}\label{RKHS_tmh6_ineq_0}
\opttH \leq \min_{\vartheta \in \R^\X \times \mathcal{H}_k} \mathcal{D}(\vartheta).
\end{equation}
By definition, $\min_\vartheta \mathcal{D}(\vartheta) \leq \mathcal{D}(\vartheta^\star)$, and $\mathcal{D}(\vartheta^\star) = \mathcal{L}(\pi_{\vartheta^\star}, \vartheta^\star)$ for the inner maximiser $\pi_{\vartheta^\star}$. Hence
\begin{equation}\label{RKHS_tmh6_ineq_1}
\opttH \overset{\eqref{RKHS_tmh6_ineq_0}}{\leq} \min_\vartheta \mathcal{D}(\vartheta) \leq \mathcal{D}(\vartheta^\star) = \mathcal{L}(\pi_{\vartheta^\star}, \vartheta^\star).
\end{equation}
By Step 2, $\pi_{\vartheta^\star}$ is feasible for $\opttH$, so the multiplier terms vanish in $\mathcal{L}(\pi_{\vartheta^\star}, \vartheta^\star)$, leaving only $H(\pi_{\vartheta^\star})$. Since $\opttH$ maximises $H$ over all feasible policies,
\begin{equation}\label{RKHS_tmh6_ineq_2}
\mathcal{L}(\pi_{\vartheta^\star}, \vartheta^\star) = H(\pi_{\vartheta^\star}) \leq \opttH.
\end{equation}
Combining~\eqref{RKHS_tmh6_ineq_1} and~\eqref{RKHS_tmh6_ineq_2},
\[
\opttH \leq \min_\vartheta \mathcal{D}(\vartheta) \leq \mathcal{D}(\vartheta^\star) = \mathcal{L}(\pi_{\vartheta^\star}, \vartheta^\star) \leq \opttH,
\]
so every inequality is in fact an equality. Therefore
\[
\vartheta^\star \in \arg\min_{\vartheta \in \R^\X \times \mathcal{H}_k} \mathcal{D}(\vartheta)
\quad \text{and} \quad
\pi_{\vartheta^\star} \in \arg\max_{\pi \in \mathcal{P}(\A \mid \X)} \opttH. 
\]
\end{qedblock}

\subsection*{Proof of Theorem~\ref{RKHS_thm_smoothness} (Lipschitz smoothness of $\mathcal{V}$)}\label{app:proof-rkhs-smoothness}

\begin{qedblock}
We follow the four-step outline in the main text. Throughout, we use the operators
\[
[\Pi(\vartheta)]_{y, (x, a)} \coloneqq \1_{\{x = y\}}\, \pi_\vartheta(a \mid x),
\qquad
[\Tilde{P}]_{(x, a), y} \coloneqq \Tilde{p}(y \mid x, a, \mu_E),
\qquad
F(x, a) \coloneqq f(x, a),
\]
which encode, respectively, policy averaging over actions, propagation through the sub-stochastic kernel, and the feature mapping.

\paragraph*{Step (i): Lipschitz bound for $Q_\vartheta$.}
For the soft Bellman operator, $\mathcal{T}^{\vartheta_1} Q - \mathcal{T}^{\vartheta_2} Q = r_{\vartheta_1} - r_{\vartheta_2} = \langle \vartheta_1 - \vartheta_2, f(\cdot, \cdot) \rangle_{\R^\X \times \mathcal{H}_k}$ pointwise. Using the fixed-point identities and Theorem~\ref{thm:Q_contraction},
\begin{align*}
\|Q_{\vartheta_1} - Q_{\vartheta_2}\|_\infty
&= \|\mathcal{T}^{\vartheta_1} Q_{\vartheta_1} - \mathcal{T}^{\vartheta_2} Q_{\vartheta_2}\|_\infty \\
&\leq \|\mathcal{T}^{\vartheta_1} Q_{\vartheta_1} - \mathcal{T}^{\vartheta_1} Q_{\vartheta_2}\|_\infty + \|\mathcal{T}^{\vartheta_1} Q_{\vartheta_2} - \mathcal{T}^{\vartheta_2} Q_{\vartheta_2}\|_\infty \\
&\leq \kappa\, \|Q_{\vartheta_1} - Q_{\vartheta_2}\|_\infty + K\, \|\vartheta_1 - \vartheta_2\|_{\R^\X \times \mathcal{H}_k},
\end{align*}
which gives
\begin{equation}\label{ineq:thm7_Q_theta}
\|Q_{\vartheta_1} - Q_{\vartheta_2}\|_\infty \leq \frac{K}{1 - \kappa}\, \|\vartheta_1 - \vartheta_2\|_{\R^\X \times \mathcal{H}_k}.
\end{equation}

\paragraph*{Step (ii): Lipschitz bound for $\Pi(\vartheta)$.}
By \citet[Proposition~4]{GaoPav18}, for each $x \in \X$,
\[
\|\pi_{\vartheta_1}(\cdot \mid x) - \pi_{\vartheta_2}(\cdot \mid x)\|_2
\leq \|Q_{\vartheta_1}(x, \cdot) - Q_{\vartheta_2}(x, \cdot)\|_2.
\]
Cauchy--Schwarz converts the $\ell_2$-bound on $\pi$ to an $\ell_1$-bound at a cost of $\sqrt{|\A|}$, and the standard $\ell_2$-to-$\ell_\infty$ comparison on $Q$ costs another $\sqrt{|\A|}$:
\[
\|\pi_{\vartheta_1}(\cdot \mid x) - \pi_{\vartheta_2}(\cdot \mid x)\|_1
\leq \sqrt{|\A|}\, \|Q_{\vartheta_1}(x, \cdot) - Q_{\vartheta_2}(x, \cdot)\|_2
\leq |\A|\, \|Q_{\vartheta_1}(x, \cdot) - Q_{\vartheta_2}(x, \cdot)\|_\infty.
\]
Taking the maximum over $x$ and combining with~\eqref{ineq:thm7_Q_theta},
\begin{equation}\label{ineq:thm7_Pi_theta}
\|\Pi(\vartheta_1) - \Pi(\vartheta_2)\|_\infty \leq |\A|\, \|Q_{\vartheta_1} - Q_{\vartheta_2}\|_\infty \leq \frac{|\A|\, K}{1 - \kappa}\, \|\vartheta_1 - \vartheta_2\|_{\R^\X \times \mathcal{H}_k}.
\end{equation}

\paragraph*{Step (iii): Resolvent representation.}
Applying $[\Pi(\vartheta) G](x) = \sum_a \pi_\vartheta(a \mid x)\, G(x, a)$ to $G = \nabla_\vartheta V_{Q_\vartheta}$ converts the pointwise identity~\eqref{eq:gradV} into the operator identity
\begin{equation}\label{eq:gradV_OPT}
\nabla_\vartheta V_{Q_\vartheta} = \Pi(\vartheta)\, \nabla_\vartheta Q_\vartheta.
\end{equation}
Substituting~\eqref{eq:gradV_OPT} into~\eqref{eq:gradQ} yields
\[
\nabla_\vartheta Q_\vartheta(x, a) = F(x, a) + \big(\Tilde{P}\, \Pi(\vartheta)\, \nabla_\vartheta Q_\vartheta\big)(x, a),
\]
or equivalently the operator equation $(I - \Tilde{P}\, \Pi(\vartheta))\, \nabla_\vartheta Q_\vartheta = F$. By $\|\Tilde{P}\|_\infty = \kappa$ and $\|\Pi(\vartheta)\|_\infty = 1$, we have $\|\Tilde{P}\, \Pi(\vartheta)\|_\infty \leq \kappa < 1$, so the Neumann series gives
\begin{equation}\label{eq:thm7_Q_OPT}
\nabla_\vartheta Q_\vartheta = (I - \Tilde{P}\, \Pi(\vartheta))^{-1}\, F,
\qquad
\|(I - \Tilde{P}\, \Pi(\vartheta))^{-1}\|_\infty \leq \frac{1}{1 - \kappa}.
\end{equation}

\paragraph*{Step (iv): Comparison of resolvents and final constant.}
Using $B^{-1} - A^{-1} = A^{-1}(A - B) B^{-1}$ on the two resolvents,
\[
\big\| (I - \Tilde{P}\, \Pi(\vartheta_1))^{-1} - (I - \Tilde{P}\, \Pi(\vartheta_2))^{-1} \big\|_\infty
\leq \frac{\kappa}{(1 - \kappa)^2}\, \|\Pi(\vartheta_1) - \Pi(\vartheta_2)\|_\infty,
\]
hence, combining with~\eqref{eq:thm7_Q_OPT} and $\|F\|_\infty \leq K$,
\begin{equation}\label{ineq:thm7_gradQ_theta}
\|\nabla_\vartheta Q_{\vartheta_1} - \nabla_\vartheta Q_{\vartheta_2}\|_\infty
\leq \frac{\kappa\, K}{(1 - \kappa)^2}\, \|\Pi(\vartheta_1) - \Pi(\vartheta_2)\|_\infty
\overset{\eqref{ineq:thm7_Pi_theta}}{\leq} \frac{\kappa\, |\A|\, K^2}{(1 - \kappa)^3}\, \|\vartheta_1 - \vartheta_2\|_{\R^\X \times \mathcal{H}_k}.
\end{equation}

For $\nabla_\vartheta V_{Q_\vartheta}$, write
\[
\nabla_\vartheta V_{Q_{\vartheta_1}} - \nabla_\vartheta V_{Q_{\vartheta_2}}
= \Pi(\vartheta_1)\big(\nabla_\vartheta Q_{\vartheta_1} - \nabla_\vartheta Q_{\vartheta_2}\big)
+ \big(\Pi(\vartheta_1) - \Pi(\vartheta_2)\big)\, \nabla_\vartheta Q_{\vartheta_2}.
\]
Using $\|\Pi(\vartheta_1)\|_\infty = 1$, $\|\nabla_\vartheta Q_{\vartheta_2}\|_\infty \leq K/(1 - \kappa)$ from~\eqref{eq:thm7_Q_OPT}, and~\eqref{ineq:thm7_Pi_theta} and~\eqref{ineq:thm7_gradQ_theta},
\begin{align}
\|\nabla_\vartheta V_{Q_{\vartheta_1}} - \nabla_\vartheta V_{Q_{\vartheta_2}}\|_\infty
&\leq \frac{\kappa\, |\A|\, K^2}{(1 - \kappa)^3}\, \|\vartheta_1 - \vartheta_2\|_{\R^\X \times \mathcal{H}_k}
+ \frac{|\A|\, K^2}{(1 - \kappa)^2}\, \|\vartheta_1 - \vartheta_2\|_{\R^\X \times \mathcal{H}_k} \nonumber \\
&= \frac{|\A|\, K^2}{(1 - \kappa)^3}\, \|\vartheta_1 - \vartheta_2\|_{\R^\X \times \mathcal{H}_k}.\label{ineq:thm7_V_theta}
\end{align}

Since $\nabla_\vartheta \log \pi_\vartheta(a \mid x) = \nabla_\vartheta Q_\vartheta(x, a) - \nabla_\vartheta V_{Q_\vartheta}(x)$, the triangle inequality combined with~\eqref{ineq:thm7_gradQ_theta} and~\eqref{ineq:thm7_V_theta} yields
\begin{equation}\label{ineq:thm7_log_theta}
\|\nabla_\vartheta \log \pi_{\vartheta_1} - \nabla_\vartheta \log \pi_{\vartheta_2}\|_\infty
\leq \frac{|\A|\, K^2\, (\kappa + 1)}{(1 - \kappa)^3}\, \|\vartheta_1 - \vartheta_2\|_{\R^\X \times \mathcal{H}_k}.
\end{equation}

\paragraph*{Final step: Lipschitz bound for the score gradient.}
By~\eqref{eq:score_fn}, 
$$\nabla_\vartheta \mathcal{V}(\vartheta) = \sum_{(x, a)} \nu_{\pi_E}(x, a)\, \nabla_\vartheta \log \pi_\vartheta(a \mid x).$$ 
The triangle inequality, the bound~\eqref{ineq:thm7_log_theta}, and $\sum_{(x, a)} \nu_{\pi_E}(x, a) = 1$ give
\[
\|\nabla_\vartheta \mathcal{V}(\vartheta_1) - \nabla_\vartheta \mathcal{V}(\vartheta_2)\|_{\R^\X \times \mathcal{H}_k}
\leq \frac{|\A|\, K^2\, (\kappa + 1)}{(1 - \kappa)^3}\, \|\vartheta_1 - \vartheta_2\|_{\R^\X \times \mathcal{H}_k},
\]
which is the stated constant. 
\end{qedblock}

\bibliography{reference}

\end{document}